\documentclass{article}

\usepackage{microtype}
\usepackage{graphicx}
\usepackage{subcaption}
\usepackage{booktabs} 

\usepackage{hyperref}

\usepackage[preprint]{icml2026}

\usepackage{amsmath}
\usepackage{amssymb}
\usepackage{mathtools}
\usepackage{amsthm}

\usepackage{amsmath,amsfonts,bm}

\def\eqref#1{equation~\ref{#1}}

\def\1{\bm{1}}

\def\vone{{\bm{1}}}

\def\va{{\bm{a}}}

\def\ve{{\bm{e}}}

\def\vh{{\bm{h}}}

\def\vn{{\bm{n}}}

\def\vx{{\bm{x}}}

\def\eva{{a}}

\def\evx{{x}}

\def\mA{{\bm{A}}}

\def\mC{{\bm{C}}}

\def\mE{{\bm{E}}}

\def\mI{{\bm{I}}}

\def\mN{{\bm{N}}}

\def\mP{{\bm{P}}}

\def\mW{{\bm{W}}}
\def\mX{{\bm{X}}}

\DeclareMathAlphabet{\mathsfit}{\encodingdefault}{\sfdefault}{m}{sl}
\SetMathAlphabet{\mathsfit}{bold}{\encodingdefault}{\sfdefault}{bx}{n}

\def\gD{{\mathcal{D}}}
\def\gE{{\mathcal{E}}}

\def\gG{{\mathcal{G}}}
\def\gH{{\mathcal{H}}}

\def\gK{{\mathcal{K}}}
\def\gL{{\mathcal{L}}}

\def\gO{{\mathcal{O}}}

\def\gS{{\mathcal{S}}}

\def\gU{{\mathcal{U}}}
\def\gV{{\mathcal{V}}}
\def\gW{{\mathcal{W}}}

\def\sI{{\mathbb{I}}}

\def\sN{{\mathbb{N}}}

\def\emC{{C}}

\def\emE{{E}}

\def\emX{{X}}

\DeclareMathOperator*{\argmax}{arg\,max}

\usepackage{wrapfig}
\usepackage{multirow}
\usepackage{colortbl}
\usepackage{rotating}
\usepackage{float}

\usepackage[capitalize,noabbrev]{cleveref}

\theoremstyle{plain}
\newtheorem{theorem}{Theorem}[section]
\newtheorem{proposition}[theorem]{Proposition}

\theoremstyle{definition}

\theoremstyle{remark}

\usepackage[textsize=tiny]{todonotes}

\icmltitlerunning{Hierarchical Discrete Flow Matching for Graphs}

\begin{document}

\twocolumn[
  \icmltitle{Hierarchical Discrete Flow Matching for Graph Generation}



  \icmlsetsymbol{equal}{*}

  \begin{icmlauthorlist}
    \icmlauthor{Yoann Boget}{y,x}
    \icmlauthor{Pablo Strasser}{x}
    \icmlauthor{Alexandros Kalousis}{x}

  \end{icmlauthorlist}

  \icmlaffiliation{y}{Department of Computer Science, University of Geneva, Switzerland}
  \icmlaffiliation{x}{DMML Group, Geneva School for Business administration HES-SO, Switzerland}

  \icmlcorrespondingauthor{Yoann Boget}{yoann.boget@hes-so.ch}

  \icmlkeywords{Machine Learning, ICML}

  \vskip 0.3in
]



\printAffiliationsAndNotice{}  

\begin{abstract}
Denoising-based models, including diffusion and flow matching, have led to substantial advances in graph generation. Despite this progress, such models remain constrained by two fundamental limitations: a computational cost that scales quadratically with the number of nodes and a large number of function evaluations required during generation. In this work, we introduce a novel hierarchical generative framework that reduces the number of node pairs that must be evaluated and adopts discrete flow matching to significantly decrease the number of denoising iterations. We empirically demonstrate that our approach more effectively captures graph distributions while substantially reducing generation time.
\end{abstract}

\section{Introduction}

Graph generation plays an important role across a wide range of scientific and engineering domains, including drug discovery, materials and protein design \citep{Lu2020, Ingraham}, program modeling \citep{Brockschmidt}, natural language processing \citep{Chen2018, Klawonn2018}, and robotics \citep{Li}. In recent years, deep generative models have led to substantial progress in this area, and more recent denoising-based approaches have further improved the ability to learn complex graph distributions \citep{gdss, digress, drum, sid}.

Despite these advances, most denoising-based models for graph generation remain computationally inefficient, mainly for two reasons. First, they require a large number of function evaluations (NFE) at inference time. Second, they incur substantial computational cost by operating over all pairs of nodes. This work aims to address both limitations.

Discrete flow matching methods, such as DeFog \citep{DeFog}, effectively address the first issue by requiring substantially fewer denoising iterations. While DeFog is built on the discrete flow matching framework of \citet{Discrete_FlowMatching_campbell}, we adopt the formulation of \citet{discrete_flow_matching_gat}, which offers significant simplifications.

To address the second issue, we introduce a novel coarse-to-fine hierarchical generation framework designed to reduce the number of node pairs considered during both training and generation. The proposed method follows an expand-and-refine strategy. Its central innovation is the construction of expanded graphs with minimal density, thereby substantially improving the efficiency of the refinement stage.

Our method is both efficient and effective. It outperforms state-of-the-art approaches on multiple standard benchmarks, while drastically reducing both training and generation time. We further leverage these efficiency gains to scale graph generation to larger graphs.

We summarize our contributions as follows:
\begin{itemize}
    \item We propose a novel ad hoc algorithm for hierarchical graph modeling that minimizes the number of node pairs considered. This approach yields substantial improvements in training and generation speed, while also reducing memory requirements.

    \item We leverage our hierarchical method to build a hierarchical discrete flow matching framework for graph generation and show its effectiveness.

    \item We show that our approach naturally yields a conditional generative model enabling effective structural conditioned generation.

    \item We demonstrate that our method achieves state-of-the-art performance on multiple benchmarks while requiring only a fraction of the generation time of concurrent methods. In addition, we show that it scales to larger graphs and introduce new evaluation datasets with larger graph sizes and increased numbers of instances.
\end{itemize}

\section{Background}

Denoising-based models emerged as the dominant paradigm in graph generation (see Appendix \ref{ap:related_work} for alternative approaches). These methods can be broadly categorized into three classes. (1) Continuous-space approaches operate in an unbounded Euclidean domain and include score-based diffusion models \citep{edp-gnn, gdss}, diffusion bridges \citep{drum}, and flow matching methods \citep{CatFlow}. (2) Simplex-based approaches model data directly on the probability simplex, as exemplified by Beta diffusion \citep{BetaGraph}, Graph Bayesian Flow Networks \citep{graphBFN}, and Unrestrained Simplex Denoising \citep{unside}. (3) Discrete-domain approaches act directly on discrete state spaces and encompass discrete diffusion in both discrete and continuous time \citep{discdiff_haefeli, digress, continuous_time_graph_diffusion}, discrete flow matching \citep{DeFog}, and non-Markovian discrete diffusion models \citep{sid}.

These approaches suffer from two principal limitations. First, they typically require a large number of function evaluations (NFE) although some recent works have shown partial success in reducing the number of denoising steps \cite{sid, DeFog, graphBFN}. Second, and more critically, all of these methods rely on dense modeling, which entails substantial computational overhead, often involving multiple feedforward neural networks—applied to every pair of nodes. As the number of node pairs scales as 
$\gO(n^2)$, where $n$ is the number of vertices, this dense dependence constitutes the primary computational bottleneck, leading to slow training and inference as well as significant memory requirement during training. 

To address the computational challenges of graph generation, two main strategies have emerged: methods that operate on a subset of nodes, and hierarchical approaches. The first strategy restricts each denoising step to a subset of active vertices and the relations among them. For example, SparseDiff \citep{sparsediff} randomly selects active nodes, while EDGE \citep{EDGE} identifies them based on predicted degree changes. As a result, only a fraction of node pairs is processed in each forward pass. However, all node pairs must eventually be evaluated, which either leads to degraded modeling performance (as in EDGE) or requires a large number of function evaluations (as in SparseDiff), often reaching several thousand iterations.

The second class of methods consists of hierarchical approaches, which we adopt in this work. In the next section, we introduce the fundamental principles of hierarchical graph generation and discuss existing hierarchical methods in detail. 

\subsection{Hierarchical Graph Generation}

Hierarchical graph modeling typically comprises two components: a graph coarsening procedure and a learned reverse expansion process. We first describe graph coarsening, which constitutes the main point of differentiation between hierarchical methods, and then outline the general principle of refinement from coarse to fine graphs.

\subsubsection{Graph Coarsening}

The goal of graph coarsening is to progressively reduce the number of nodes while preserving task-relevant properties. Given an input graph \(\gG\), hierarchical models construct a sequence of representations \({\gG^0, \ldots, \gG^L}\), where \(\gG^0\) denotes the original graph and \(\gG^L\) the coarsest level. At each level \(\ell\), nodes in \(\gG^\ell\) are obtained by aggregating nodes from \(\gG^{\ell-1}\). Graph coarsening is a broad field of research. Here, we focus on methods that are particularly relevant to hierarchical generative modeling. For comprehensive reviews of graph coarsening techniques, we refer the reader to \citet{graph_pooling_review1, graph_pooling_review2, graph_pooling_review3}.

Coarsening methods can be broadly categorized into contraction-based and clustering-based methods. Contraction-based methods \citep{contraction_loukas, edge_contractionpoolinggraph} include edge contraction, which merges the endpoints of an edge, and neighborhood contraction, which aggregates a node with its neighbors; this strategy is used in models such as \citet{graphle} and \citet{higen}. However, contraction-based coarsening has important drawbacks: it is not permutation-invariant due to its dependence on contraction order. More importantly, it provides limited control over the coarsening ratio \( n^{\ell+1} / n^{\ell} \), where $n^\ell = |\gV^\ell|$. For edge contraction, this ratio is upper-bounded by \(1/2\) and often much smaller in practice. Consequently, such methods typically require many coarsening levels, limiting their efficiency and applicability.

Clustering-based methods \citep{diffpool, mincut} aggregate nodes according to an explicit clustering of the graph and can be viewed as a generalization of contraction-based approaches. By allowing arbitrary aggregation patterns, they provide direct control over the coarsening ratio. Moreover, when the clustering procedure is permutation-equivariant, the resulting coarsening operation is permutation-invariant. This is the approach adopted in this work. 

However, as discussed further in Section \ref{sec:ssg}, the computational advantages of hierarchical modeling depend critically on the choice of clustering algorithm. Standard community-detection–based objectives often yield dense expanded graphs. This is, for instance, the case for the modularity objective \citep{DMoN} employed by \citet{higen} (see Section \ref{sec:ssg_emp}), which is, to our knowledge, the only generative model adopting this approach besides our work. Moreover, \citet{higen} does not exploit the permutation-invariant nature of the clustering procedure, as it models graphs through a sequential process over edges.

\subsubsection{Graph Expansion}\label{sec:graph_expansion}

The reverse process in hierarchical models is graph expansion, which generally follows a two-step procedure: (i) a deterministic expansion step and (ii) a learned refinement step. 

During deterministic expansion, each coarse (or parent) node is split into multiple child nodes. The children originating from the same parent form a clique, while the children of two connected parent nodes form a biclique. We denote $\gH^\ell$, the graph resulting from the expansion of $\gG^{\ell+1}$ Two key observations follow. First, graph expansion requires knowledge of the number of children associated with each parent node. Second, even when these cardinalities are known, expansion does not invert the coarsening operation. Specifically, letting $C$ denote the coarsening operator and $U$ the expansion operator, we generally have $U(C(\gG^\ell)) = \gH^\ell \neq \gG^\ell$. Instead $\gH^\ell$ is spanning supergraph of $\gG^\ell$ satisfying $\gV^\ell_\gG =\gV^\ell_\gH$ and $\gE^\ell_\gG \subseteq \gE^\ell_\gH$, also denotes $\gH^\ell = \gG^\ell + \gE_\gS$, where $\gE_\gS$ represents the additional set of edges in $\gH^\ell$ but not in $\gG^\ell$.

The refinement step follows from this relation between $\gG^\ell$ and $\gH^\ell$. It removes the excessive edges from the expanded graph, and, when necessary, to generate node and edge attributes. In hierarchical graph generation, this stage constitutes the core generative component by modeling the conditional distribution 
$p_{\theta_\ell}(\gG^\ell|\gH^\ell)$. 

\begin{figure}[ht]
  \vskip 0.2in
    \begin{center}
     \centerline{\includegraphics[width=0.8\columnwidth]{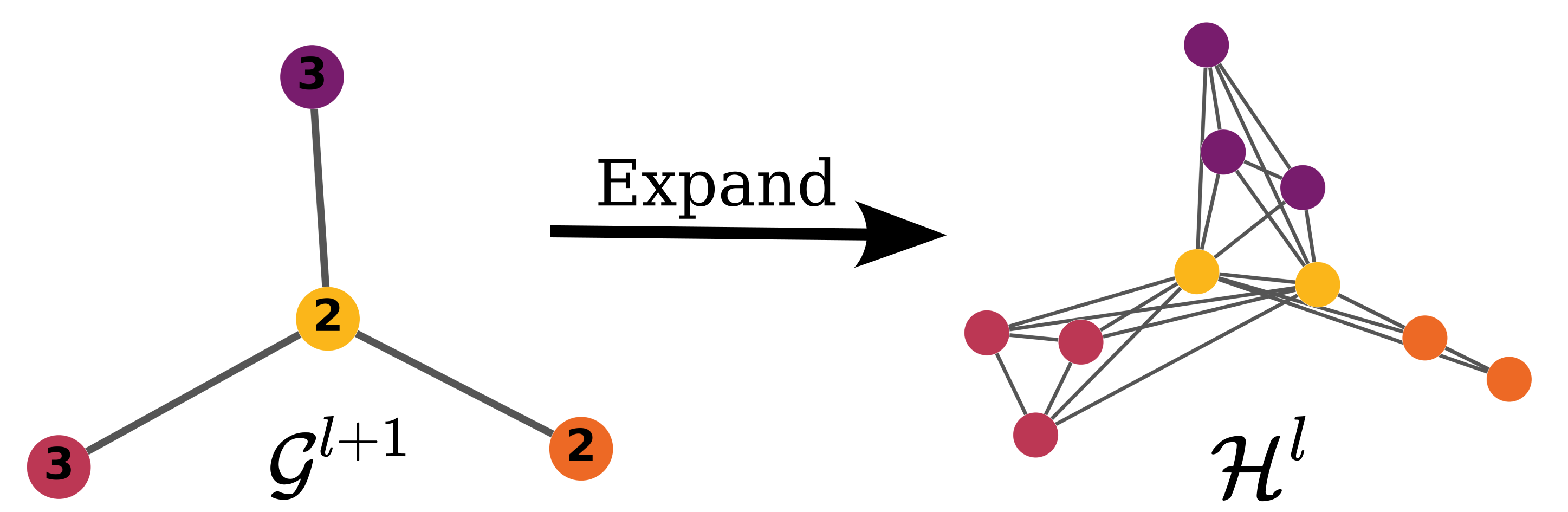}}
     \centerline{\includegraphics[width=0.8\columnwidth]{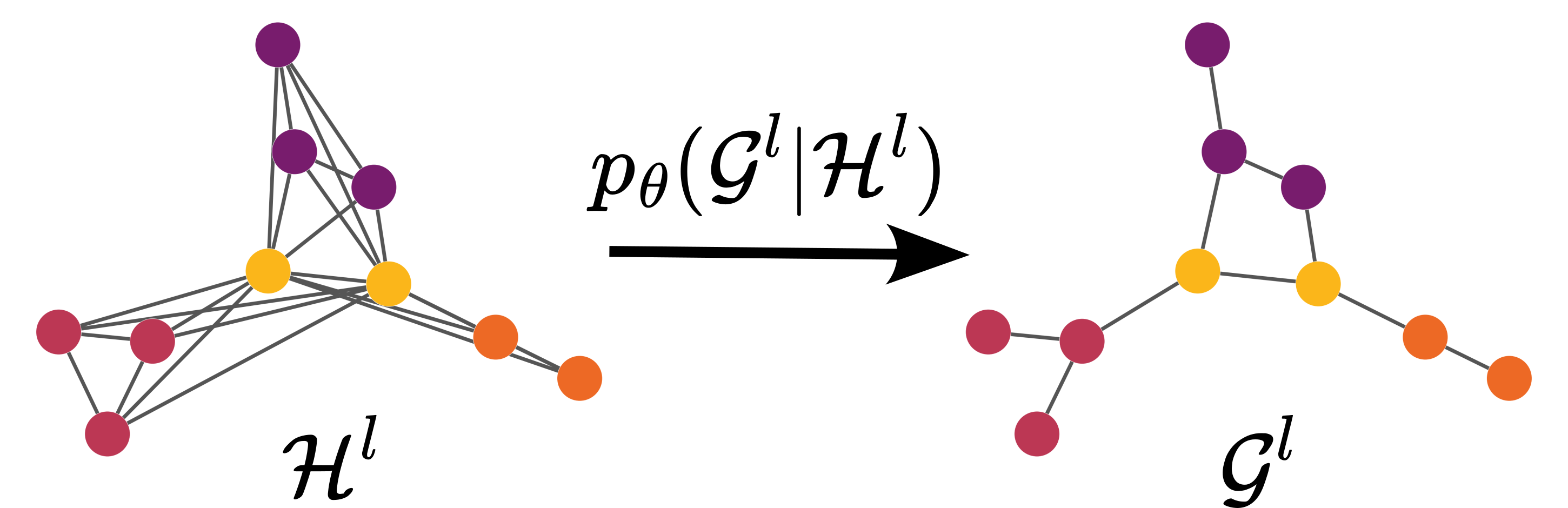}}
    \end{center}
    \caption{\footnotesize \textit{Top}: Deterministic Graph Expansion.
Each parent node spawns a fixed number of child nodes. Children of the same parent form cliques, while children of connected parents form bicliques.
\textit{Bottom}: Graph Refinement.
A generative model prunes excess edges and optionally generates node and edge attributes.
    }
\end{figure}

The overall hierarchical model can thus be expressed as a Markovian process across levels: 
\begin{equation}\label{eq:model}
    p(\gG) = \prod_{\ell=0}^{L} p_{\theta_\ell}(\gG^\ell|\gH^\ell) 
\end{equation}
where $\gH^{\ell} = U(\gG^{\ell+1})$ for $\ell<L$, and $\gH^L = \gK_{n^L}$ denotes the complete graph of size $n^L$, sampled from the empirical distribution. Notably, standard dense graph denoising models arise as a special case of this formulation when $L=0$.  

By leveraging the sparsity of $\mathcal{H}^\ell$, hierarchical models based on node clustering can offer substantial advantages. First, they avoid computations over all node pairs, reducing the computational complexity from $\mathcal{O}(n^2)$ to $\mathcal{O}(m_{\mathcal{H}})$, where $m_{\mathcal{H}} = |\mathcal{E}{\mathcal{H}}|$. Second, since $\mathcal{G}^\ell$ is a subgraph of $\mathcal{H}^\ell$, the sparsity of $\mathcal{H}^\ell$ constrains the space of the conditional distribution $p{\theta_\ell}(\mathcal{G}^\ell \mid \mathcal{H}^\ell)$, thereby simplifying the modeling task.

However, these benefits critically depend on the sparsity of the expanded graphs. Existing clustering-based coarsening methods are typically designed for community detection, which often produces dense coarse graphs and, consequently, dense spanning supergraphs. To address this limitation, we propose a clustering algorithm that explicitly maximizes the sparsity of the expanded graph. We present this method in Section \ref{sec:ssg}, followed by a description of the hierarchical generative models in Section \ref{sec:model}.
\section{Spanning Supergraph Density Minimization}\label{sec:ssg}


Sparsity of the expanded graph is a crucial property in hierarchical models, as it significantly reduces computational cost and simplifies modeling by constraining the graph space. However, most existing clustering-based coarsening methods, such as DiffPool \citep{diffpool}, MinCut \citep{mincut}, or DMoN \citep{DMoN}, are designed to identify communities in the graph. Consequently, they often produce dense coarse graphs, which in turn lead to dense expanded graphs.
To illustrate this issue, consider a partition of a graph into \(K\) communities that are all mutually connected. The resulting coarse graph is then the complete graph on \(K\) nodes, and expanding it as described in Section \ref{sec:graph_expansion} again yields a complete graph. This observation motivates the need for a dedicated coarsening strategy that explicitly aims to minimize the density of the expanded graph.

In the following, we first formally describe the proposed coarsening and expansion procedures. We then introduce a training objective that explicitly minimizes the density of the expanded graph. Finally, we empirically demonstrate that our method produces expanded graphs that are substantially sparser than those obtained with existing clustering approaches.

\subsection{Hierarchical Graph Representations}

In this section, we represent each coarse graph $\{\gG^1, \ldots, \gG^L \}$ by an annotation matrix and an adjacency matrix, denoting  $\gG^\ell = (\mX^\ell_\gG, \mA_\gG^\ell)$, except for the original data graph $\gG^0$,  which includes an annotation matrix only when its nodes are attributed. Expanded graphs $\gH^\ell$ are always unannotated and are therefore fully specified by their adjacency matrices $\mA_\gH^\ell$. Finally, we define the assignment matrix $\mC_\ell \in \{0, 1\}^{n^\ell \times K^\ell}$, where each row is a one-hot indicator of a node's cluster membership, and $K^\ell$ represents the number of clusters. 

Given a graph $\gG^\ell$ and an assignment matrix $\mC_\ell$, we define the corresponding coarse graph as
\begin{equation}\label{eq:coarsening}
    \gG^{\ell+1} := \left( \mX^{\ell+1}, \mA_\gG^{\ell+1} \right), \; \text{with}
\end{equation}
\begin{equation*}
    \mX^{\ell+1} = \mC^T_\ell \vone_{n^\ell}, \;  \; \mA_\gG^{\ell+1}  = \sI (\mC_\ell^T \mA_\gG^\ell \mC_\ell > 0),
\end{equation*}
where $\sI(\cdot)$denotes the element-wise indicator function and $\vone_{n^\ell}$ is the vector of ones of length $n^\ell$. Intuitively,  while the adjacency matrix $\mA_\gG^{\ell+1}$ connects two parent nodes whenever at least one edge exists between their respective child nodes. The annotation matrix $\mX^{\ell+1} \in \sN^{n \times 1}$ is actually a vector, which encodes the number of child nodes aggregated into each parent node. In the following, we also denotes this vector $\vn^{l+1}$. 

Conversely, we define the graph expansion as: 
\begin{equation}\label{eq:expansion}
\mA_\gH^\ell := \mC_\ell(\mA_\gG^{\ell+1} + \mI_{n^{\ell+1}}) _\gG\mC_\ell^T - \mI_{n^\ell},
\end{equation}
where $\mI_{n}$ denotes the identity matrix of size $n$, interpreted as self-connections. Adding self-connections at the coarse level ensures that child nodes originating from the same parent form cliques in $\gH^\ell$, while children of connected parent nodes form bicliques.  

Importantly, the expanded graph $\gH^\ell$ only depends on the coarse graph $\gG^{\ell+1}$.  Indeed, by replicating the $i^\text{th}$ row of the identity matrix $n_i$ times, where $n_i$ is given by the corresponding node annotation $\emX_i$, we recover, up to a permutation, the assignment matrix $\mC_\ell$. In practice, this operation corresponds to  $\texttt{repeat\_interleave}(\mI_{n^{\ell+1}}, \vx^{\ell+1})$ in \texttt{PyTorch}. Consequently, the expansion operator is entirely determined by the coarse graph, allowing the expanded graph to be constructed without any additional information, which is crucial for generation (See Algorithm \ref{algo:sample}).

\subsection{Properties}

We derive two key properties of this hierarchical framework that are essential for the generative model.
Let $C$ and $U$ denote the coarsening and expansion operators defined in Equations \ref{eq:coarsening} and \ref{eq:expansion}, respectively. Considering an unattributed graph represented only by its adjacency matrix, $\gG^\ell_{\text{unattr.}} = \mA^\ell_{\gG}$, we can state the following propositions.

\begin{proposition}\label{prop:spanning}
    The expanded graph $\gH^\ell$ is a spanning supergraph of $\gG^\ell_\text{unattr.}$, that is:
    \begin{equation}
        \gH^\ell = U(C(\gG^\ell_\text{unattr.}))  \implies \gH^\ell = \gG^\ell + \gE_\gS     
    \end{equation}
    Proofs are provided in Appendix \ref{ap:proofs}. \hfill \qedsymbol
\end{proposition}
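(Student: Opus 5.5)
We work directly with adjacency matrices. We must show two things: $\gH^\ell$ and $\gG^\ell$ have the same vertex set, and every edge of $\gG^\ell$ is an edge of $\gH^\ell$. The vertex claim is immediate. By Equation~\ref{eq:expansion}, $\mA_\gH^\ell$ is an $n^\ell\times n^\ell$ matrix indexed by the same nodes as $\mA_\gG^\ell$. This is because the expansion reuses the same assignment matrix $\mC_\ell$, which, as noted above, is recovered up to a permutation from $\vn^{\ell+1}$. Hence $\gV^\ell_\gH=\gV^\ell_\gG$, and the remaining task is the entrywise inequality $\mA_\gG^\ell \le \mA_\gH^\ell$.

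For the edge inclusion, fix $i\neq j$ with $(\mA^\ell_\gG)_{ij}=1$. Let $c(i)$ and $c(j)$ be the clusters of $i$ and $j$, i.e.\ the unique columns where rows $i$ and $j$ of $\mC_\ell$ equal one. Since $\mC_\ell$ has one-hot rows, expanding the product gives
\begin{equation*}
(\mC_\ell \mathbf{M} \mC_\ell^T)_{ij} = \mathbf{M}_{c(i)c(j)}
\end{equation*}
for any $K^\ell\times K^\ell$ matrix $\mathbf{M}$. We apply this with $\mathbf{M}=\mA_\gG^{\ell+1}+\mI$, and since $i\neq j$ the subtracted identity does not affect the entry, so $(\mA_\gH^\ell)_{ij} = (\mA_\gG^{\ell+1})_{c(i)c(j)} + \delta_{c(i)c(j)}$. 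We split into two cases.
\begin{itemize}
\item If $c(i)=c(j)$, the Kronecker term equals $1$. This is the clique case, where the self-connections added at the coarse level make same-cluster children adjacent.
\item If $c(i)\neq c(j)$, Equation~\ref{eq:coarsening} gives $(\mC_\ell^T\mA_\gG^\ell\mC_\ell)_{c(i)c(j)}=\sum_{u\in c(i),v\in c(j)}(\mA^\ell_\gG)_{uv}\ge (\mA^\ell_\gG)_{ij}=1>0$. The indicator then yields $(\mA_\gG^{\ell+1})_{c(i)c(j)}=1$. This is the biclique case.
\end{itemize}
Both cases give $(\mA_\gH^\ell)_{ij}\ge 1$.

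The one delicate point is binariness. In the case $c(i)=c(j)$ with a self-loop at the coarse node, $(\mA_\gG^{\ell+1})_{c(i)c(i)}$ can equal $1$, because intra-cluster edges produce a nonzero diagonal in $\mC_\ell^T\mA\mC_\ell$. The sum of the two terms could then be $2$. We handle this by reading Equation~\ref{eq:expansion} as defining the edge set through nonzero entries, equivalently by taking $\mA_\gG^{\ell+1}$ with zero diagonal or applying $\sI(\cdot>0)$. Either reading gives $(\mA_\gH^\ell)_{ij}=1$. On the diagonal the entry is $0$ after subtracting $\mI$ under the zero-diagonal convention, so there are no spurious self-loops.

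Finally, we set $\gE_\gS := \gE_\gH^\ell\setminus\gE_\gG^\ell$. This yields $\gH^\ell=\gG^\ell+\gE_\gS$ with $\gE_\gG^\ell\subseteq\gE_\gH^\ell$, which is the claim.
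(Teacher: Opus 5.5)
Your proposal is correct and is essentially the paper's argument in Appendix~\ref{ap:proofs}. You fix an edge $(i,j)$ of $\gG^\ell$, use the one-hot rows of $\mC_\ell$ to reduce $(\mA_\gH^\ell)_{ij}$ to the entry of $\mA_\gG^{\ell+1}+\mI$ at the clusters of $i$ and $j$, and show that entry is positive; the only difference is that the paper handles both cases at once via $(\mA_\gG^{\ell+1})_{uv}=1$, which also holds for $u=v$, whereas you use the identity term in the same-cluster case. Your remark on binariness fixes a real slip: the paper asserts $(\mA_\gG^{\ell+1}+\mI)_{uv}=1$, which is false (the value is $2$) when $u=v$ and the cluster contains an internal edge, and that nonzero diagonal also leaves self-loops in $\mA_\gH^\ell$, so reading edges as nonzero entries, as you do, is the right repair.
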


\begin{proposition}\label{prop:invar_coarse}
    As functions of $\gG^\ell$, the coarsened representation $\gG^{\ell+1}=(\mA_\gG^{\ell+1}, \mX_\gG^{\ell+1})$ is permutation-invariant, whereas the expanded representation $\gH^{\ell}=\mA_\gH^\ell$ is permutation-equivariant. \hfill \qedsymbol
\end{proposition}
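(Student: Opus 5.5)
The plan is to reduce both claims to one assumption: the clustering that produces $\mC_\ell$ is permutation-equivariant, as stated in the discussion of clustering-based methods. Concretely, I will assume that for any permutation matrix $\mP \in \{0,1\}^{n^\ell\times n^\ell}$, applying the clustering to the permuted graph $\mP\mA_\gG^\ell\mP^T$ yields an assignment matrix $\mP\mC_\ell\mathbf{Q}$. Here $\mathbf{Q}\in\{0,1\}^{K^\ell\times K^\ell}$ is some permutation matrix that accounts for the arbitrary labelling of clusters. The whole proof then consists of substituting $\mP\mA_\gG^\ell\mP^T$ and $\mP\mC_\ell\mathbf{Q}$ into Equations \ref{eq:coarsening} and \ref{eq:expansion}, and using $\mP^T\mP=\mI$ and $\mathbf{Q}\mathbf{Q}^T=\mI$ to simplify.

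For the coarse graph, I compute the inner product first:
\[
(\mP\mC_\ell\mathbf{Q})^T(\mP\mA_\gG^\ell\mP^T)(\mP\mC_\ell\mathbf{Q}) = \mathbf{Q}^T\mC_\ell^T\mA_\gG^\ell\mC_\ell\mathbf{Q}.
\]
The element-wise indicator $\sI(\cdot>0)$ commutes with conjugation by a permutation matrix, because such a conjugation only reorders entries. Hence the new coarse adjacency is $\mathbf{Q}^T\mA_\gG^{\ell+1}\mathbf{Q}$. For the annotation vector, I use $\mP^T\vone_{n^\ell}=\vone_{n^\ell}$ to get
\[
\mathbf{Q}^T\mC_\ell^T\mP^T\vone_{n^\ell}=\mathbf{Q}^T\mX^{\ell+1}.
\]
So the pair $(\mA_\gG^{\ell+1},\mX^{\ell+1})$ is unchanged by the node permutation $\mP$. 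If $\mathbf{Q}=\mI$, it is literally unchanged. In general, it changes only by a relabelling of the coarse nodes, so it is the same attributed coarse graph up to isomorphism, which is the sense of invariance intended here.

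For the expanded graph, I substitute into Equation \ref{eq:expansion}, inserting $\mathbf{Q}\mathbf{Q}^T=\mI$ around the identity term:
\[
\mP\mC_\ell\mathbf{Q}\,\bigl(\mathbf{Q}^T(\mA_\gG^{\ell+1}+\mI)\mathbf{Q}\bigr)\,\mathbf{Q}^T\mC_\ell^T\mP^T-\mP\mI_{n^\ell}\mP^T = \mP\mA_\gH^\ell\mP^T.
\]
This is exactly permutation equivariance. The same computation shows that the expansion does not depend on the cluster labelling at all, since $\mathbf{Q}$ cancels.

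The main obstacle is conceptual rather than computational. It is making precise what "invariant" means when cluster indices carry no intrinsic order, and stating the equivariance hypothesis on the clustering correctly. I would handle this by explicitly carrying the factor $\mathbf{Q}$, so that invariance is up to isomorphism of the coarse graph, and by noting that $\mathbf{Q}$ disappears in the expansion. I would also remark that if the clustering map is only equivariant up to ties, the statement should be read for a fixed equivariant tie-breaking rule.
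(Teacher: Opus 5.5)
Your proof is correct and follows the same route as the paper. Both arguments assume the assignment is permutation-equivariant, substitute the permuted adjacency and assignment into the coarsening and expansion formulas, and cancel with $\mP^T\mP=\mI$. The one difference is your extra cluster-relabelling factor $\mathbf{Q}$, which the paper does not need: its GNN--softmax--argmax assignment has fixed output columns, so a node permutation acts only on the rows of $\mC_\ell$ and $\mathbf{Q}=\mI$. This gives the literal invariance the statement claims, not just invariance up to isomorphism.
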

The equivariance property is particularly important since it ensures that the representations of $\gG^\ell$ and $\gH^\ell$ are aligned, preventing the need for a costly matching procedure. 

By coarsening and expanding graphs, we obtain a dataset with graph representations ordered from finer to coarser  $\{(\gG_i^0, \gH_i^0), ... , (\gG_i^{L}, \gH_i^{L})\}_{i=1}^N$.

\subsection{\emph{D}-Min Clustering}

Given a graph $\gG^\ell$, the spanning supergraph $\gH^\ell$ is fully determined by the node clustering encoded by the assignment matrix $\mC_{\ell}$. 
Our goal is to find the clustering that minimize the density of the expanded graph, $D_{\gH^\ell}^\ell = \frac{m_\gH^\ell
}{n^\ell(n^\ell-1)}$.
However, the exact minimizer is computationally intractable even for moderate graph sizes. 
We therefore propose to parameterize the assignment with a graph neural network and learn 
$\mC_{\ell}$ by directly minimizing the density of $D_{\gH}^\ell$. 

A key observation is that $D_{\gH^\ell}$ can be computed from the weighted coarse graph defined as:
\begin{equation}
    \gG_\gW^{\ell+1} := \mA_\gW^{\ell+1} = \mN_\gG^{\ell+1}{(\mA^{l+1})_\gG + \mI_{n^{\ell + 1}})\mN^{\ell+1}},
\end{equation}
where, $\mN^{\ell+1} = \text{diag}(\vn^{\ell+1})$. 
Intuitively, $\mA_\gW^{\ell+1} $ quantifies, for each coarse node pair, how many fine-level edges would be induced after expansion. Thus, the density of the spanning supergraph follows as
\begin{equation}
    D_\gH^\ell = \frac{1}{n^\ell(n^\ell-1)}\left[\left(\vone_{n^\ell+1}^T \mA_\gW^{\ell+1} \vone_{n^\ell+1} \right) - n^\ell\right].
\end{equation}

We then parameterize the soft assignment matrix $\tilde{\mC_\ell}$ as:
\begin{equation}\label{eq:assignment}
 \hat{\mC}_\ell^\varphi =  \text{Softmax}_\text{row} \left(\text{GNN}_\varphi(\mX^\ell_\gG, \mA^\ell_\gG)\right),
\end{equation}
where $\sigma$ denotes the row-wise softmax. We use the discretization  $(\mC_{\ell})_{i, j} = \sI\left[ j = \argmax_k \hat{\emC}_{i, k}^\varphi \right]$ during the forward pass, and the straight-through estimator in backpropagation.
Thus, simple and easily interpretable, the training objective consists in directly minimizing the expected density of the expanded graph: 
\begin{equation}
    \gL_\varphi = \mathbb{E}_{\gG^\ell \sim p_\text{data}} \left [D_\gH^\ell \right].
\end{equation}
In Section \ref{sec:ssg_emp}, we empirically demonstrate that our objective consistently yields substantially sparser spanning supergraphs.

\subsection{Empirical Evaluation}\label{sec:ssg_emp}

We finally demonstrates the empirical effectiveness of our method.
In Table \ref{tab:gamma}, we reports the average density of the spanning supergraphs obtained after the first coarsening step across various datasets (See Appendix  \ref{ap:dataset}), comparing our model ($D$-Min) with three reference deep coarsening methods: DiffPool \citep{diffpool}, DMoN \citep{DMoN} and MinCut \citep{mincut}, (See Appendix \ref{ap:gamma_min} for the experimental details).

The experiments show that our $D$-Min clustering method yields significantly sparsier spanning supergraphs. Specifically, on large dataset our model yield sparsity is several time lower than competitive model. It thus improves the model's scalability and efficiency, as well as its capacity to model larger graph. These empirical results highlight the importance of developing \emph{ad hoc} coarsening methods for hierarchical generative approaches.

\begin{table}[ht]
    \scriptsize
    \begin{center}
    \begin{sc}
    
    \caption{\footnotesize Average density of the spanning supergraphs for four coarsening methods.}
    
    \begin{tabular}{l c c c}
    \toprule
       Method & Zinc250k & SBM20k & Reddit12k \\
    \midrule
        Data     & 0.09 $\pm$ 0.00 & 0.08 $\pm$ 0.00 & 0.01 $\pm$ 0.00 \\
        DiffPool & 0.34 $\pm$ 0.00 & 0.87 $\pm$ 0.03 & 0.82 $\pm$ 0.03 \\
        DMoN     & 0.57 $\pm$ 0.01 & 0.63 $\pm$ 0.02 & 0.04 $\pm$ 0.00 \\
        MinCut   & 0.37 $\pm$ 0.02 & 0.26 $\pm$ 0.01 & 0.12 $\pm$ 0.01 \\
        $D$-Min (Ours) & \textbf{0.25 $\pm$ 0.00} & \textbf{0.22 $\pm$ 0.00} & \textbf{0.02 $\pm$ 0.00} \\
    \bottomrule
    \end{tabular}
    \label{tab:gamma}

    \end{sc}
    \end{center}
\end{table}

\section{Hierarchical Discrete Flow Matching}\label{sec:model}

The hierarchical preprocessing introduced in Section \ref{sec:ssg} produces a dataset made of increasingly coarse graph representations, each paired with a spanning supergraph: $\gD = \{(\gG_i^0, \gH_i^0), ... , (\gG_i^{L}, \gH_i^{L})\}_{i=1}^N$.
Our generative models operate independently at each level $\ell$ by learning the conditional distribution $p_\theta(\gG^\ell \mid \gH^\ell)$ leveraging the sparse structure of $\gH^\ell$.

The proposed formulation is compatible with a broad class of denoising-based generative models. In this work, we instantiate it using the discrete flow matching framework of \citet{discrete_flow_matching_gat}. For simplicity, we assume that node and edge attributes are univariate categorical variables; extension to the multivariate case is straightforward.

\subsection{Discrete Flow Matching}

Two main formulations of Discrete Flow Matching (DFM) were introduced concurrently by \citet{Discrete_FlowMatching_campbell} and \citet{discrete_flow_matching_gat}. The former was subsequently adapted to graph generation by \citet{DeFog} and further augmented with heuristics that improve some evaluation metrics, albeit at the cost of a larger hyperparameter space and increased algorithmic complexity. In contrast, the latter formulation is conceptually simpler, involves fewer hyperparameters, and is computationally more efficient, which motivates our choice to adapt it to graph generation. A recent submission \citep{simGFM} also builds on this formulation for graph modeling, but proposes a variant based on double-sampling updates, which introduce additional complexity and slow down sampling. We further discuss this method in Appendix~\ref{ap:rw_dfm}.

In the following, we briefly introduce DFM. We denote by $z^i \in [k]$ a generic univariate $k$-categorical variable, such as a node or an edge attribute, and by $Z$ the whole instance such as a grpah. For clarity, we omit level indices and subscripts whenever no ambiguity arises.

In DFM, a probability path is defined between a source distribution $p_0$ from which noise samples are drawn, and the data distribution $p_1$. For categorical variables, a simple and practical choice is the interpolant formulation, given by
\begin{equation}\label{eq:prob_path}
    p_t(z^i \mid z^i_0, z^i_t) =  \alpha [z^i = z^i_1] + (1-\alpha)[z^i = z^i_0],
\end{equation}
where $[\cdot]$ denotes the Iverson bracket and $\alpha \in [0, 1]$ is a scheduling function, which we instantiate in practice as $1 - (1 - t)^u$ with $u \in \{2, 3\}$.

In sequence modeling, masked tokens or uniform distributions are commonly used as source distributions. For graph diffusion models, \citet{digress} showed that using the marginal distribution yields superior performance. Following this observation, we adopt the node and edge marginals as the respective source distributions.

Considering the update rule
\begin{equation}\label{eq:update_rule}
    z^i_{t+\Delta_t} \sim  [z^i_{t+\Delta_t} = z^i_t] + \Delta_t u^i_t( z^i_{t+\Delta_t}, Z),
\end{equation}
we say that a probability velocity $u_t$ generates the probability path $p_t$ if, given that $Z_t\sim p_t$, $Z_{t+\Delta_t} \sim p_{t+\Delta_t} $. 
We note that the update rules acts independently on each dimension, but the probability velocity depends on the whole instance $Z$.
\citet{discrete_flow_matching_gat} show that a valid probability velocity for the probability path defined in \ref{eq:prob_path} is given by: 
\begin{equation}\label{eq:prob_velocity}
    u^i_t(z^i, Z_t) = \frac{\Dot{\alpha}}{1-\alpha}\Delta_t \left(p_{1|t}(z^i|Z_t)  - [z^i = z^i_t]\right)
\end{equation}

In Equation \ref{eq:prob_velocity}, the term $p_{1|t}(x^i|x_t)$, called denoisier, is usually intractable and is approximated via learned model. 

\subsection{Conditional Discrete Flow Matching for Graphs}

We now extend DFM to graph generation and the conditional setting required to model $p(\gG^\ell \mid \gH^\ell)$. We represent the spanning supergraph $\gH$ by its edge index matrix $\mE \in \mathbb{N}^{2 \times m_{\gH}}$. We encode $\gG$ as node and edge attributes defined on the expanded graph $\gH$. Specifically, $\gG$ is represented as a vector of edge attributes $\va \in \mathbb{N}^{m_{\gH}}$, and a node annotation vector $\vx \in \mathbb{N}^{n_\gH}$ when $\gG$ has node attributes. Thanks to the equivariance of the hierarchical preprocessing, edge attributes are aligned with the edge indices, so that each attribute $\eva_k$ corresponds to the edge $\emE_{\cdot,k}$.
For unattributed graphs, $\va$ is a binary vector indicating edge presence; for attributed graphs, the absence of an edge is encoded as label $0$.

We define a conditional probability path over the node and edge attributes $(\va,\vx)$. Under this construction, the conditional probability path over graphs factorizes as
\begin{multline}
        p_t(\gG \mid \gG_1, \gG_0, \gH) = \prod_{i=1}^{n_\gH} p_t(x^i \mid x_0^i, x_1^i) \times\\
    \prod_{k=1}^{m_\gH} p_t(a^{k} \mid a_0^{k}, a_1^{k}).
\end{multline}
This choice leaves the generic form of the probability velocity essentially unchanged, except for the additional conditioning on $\gH$ in the probability velocity $u^i_t(z^i, \gG_t, \gH)$ and the corresponding modification of the denoiser $p_{1|t}(z^i|\gG_t, \gH)$.   

We parametrize the denoiser by a sparse graph neural network $f_\theta(\mE, \va, \mX)$, whose connectivity is given by $\mE$. The denoiser is trained using a standard cross-entropy loss:
\begin{multline}\label{eq:loss}
    \gL_\theta = \mathbb{E} \left[ \gamma \sum_{i=1}^{n_\gH} [-\text{log}(p_\theta(\evx_{i}|\gG^{t}, \gH))] + \right. \\ \left.
    (1-\gamma) \sum_{j=1}^{m_\gH}[-\text{log}(p_\theta(\eva_{j}| \gG_{t}, \gH))] \right],
\end{multline}
where the expectation is taken over $t \sim \gU(0, 1)$ and $(\gG, \gH)\sim p_{\text{data}}$. We set the weighting factor between nodes and edges to $\gamma = n_\gH/(n_\gH+m_\gH)$.

Sampling proceeds by drawing an initial graph $\gG_0 \sim p_0$ and iteratively updating the graph according to Equation \ref{eq:update_rule} yielding $\hat{\gG} = \gG_1 \sim p^\theta_1$. 

\subsection{Sampling and Generation}

Using our DFM model $p_\theta(\gG^\ell \mid \gH^\ell)$ together with the deterministic expansion operator $\gH^{\ell} = U(\gG^{\ell+1})$, graphs can be generated auto-regressively across resolution levels according to Equation~\ref{eq:model}. The complete sampling procedure is detailed in Algorithm \ref{algo:sample} in Appendix \ref{ap:sampling_algo}.

While the hierarchical process naturally generates graphs from pure noise, our framework also supports two additional generation settings based on conditioning on a spanning supergraph, either sampled from data or fixed.

First, rather than conditioning on the expanded graph $\hat{\gH}^0 = U(\hat{\gG}^{1})$, one may directly sample 
$\gH^0$ from the dataset, i.e. graphs obtained during preprocessing after a single coarsening–expansion. By adjusting the coarsening ratio during preprocessing, we explicitly control the sparsity of $\gH^0$
and, consequently, the degrees of freedom of the generative model. Moreover, this approach eliminates the need to train models at multiple levels.

Second, fixing $\gH^0$ enables a novel form of conditional generation that constrains $\gG^0$ to its subgraphs. Since $\gH^0$ preserves the eigenvalues of the weighted graph encoded in 
$\gG^1$, this can be interpreted as a form of spectral conditioning. Empirically, we show that this strategy is highly effective for generating graphs that remain close to a reference graph in terms of spectral distance.

\section{Experiments}\label{sec:eval}

In this section, we evaluate our method on multiple benchmarks, including molecular, synthetic, and real-world graph datasets. We show that our Hierarchical Discrete Flow Matching (HDFM) model captures graph distributions more accurately than existing approaches, while achieving substantially higher sampling efficiency.
Our model achieve this thanks to two key factors:
(i) the exploitation of sparsity in HDFM, which leads to significantly faster sampling compared to its dense counterpart (DFM); and
(ii) the ability to operate effectively with a small number of function evaluations (NFE).
We further evaluate our approach under two additional settings: conditioning on real spanning supergraphs and conditional graph generation.

\subsection{Unconditional Graph Generation}

At a high level, our experimental results lead to three main observations:
(i) Our dense Discrete Flow Matching (DFM) model consistently outperforms all baselines, while the hierarchical variant further improves modeling performances.
(ii) For a fixed NFE, ourr hierarchical approach achieves significantly faster sampling than dense methods.
(iii) Our hierarchical model enables low-NFE sampling regimes with little to no degradation in modeling performance.

We compare our approach against four main baseline models: \textsc{DiGress} \citep{digress}, a reference discrete diffusion model for graph generation; \textsc{SparseDiff} \citep{sparsediff}, a scalable variant of \textsc{DiGress}; \textsc{SID} \citep{sid}, a strong non-Markovian denoising-based method; and EDGE, a scalable approach for unattributed graph generation.
On the \texttt{ZINC250} dataset, we additionally report results for \textsc{Grum} \citep{drum} and \textsc{GraphBFN} \citep{graphBFN}, two competitive baselines, with Grum based on diffusion bridges and \textsc{GraphBFN} built upon Bayesian Flow Networks. In the following, we present detailed experimental results on molecular graphs, synthetic graphs, and large-scale real-world graph datasets.

We use standard metrics for molecular and unattributed graphs. All results are averages over three runs, with standard deviations reported in the appendix. Due to space limitation, full experimental details including additional evaluation metrics, standard deviations, dataset descriptions, are provided in Appendix \ref{ap:eval}. 

\subsubsection{Molecule Generation}

We evaluate our approach on two molecular graph benchmarks, \texttt{QM9H} and \texttt{ZINC250k}. \texttt{QM9H} is a variant of the standard QM9 dataset that includes explicit hydrogen atoms. This setting is considerably more challenging, as molecules can contain up to 29 atoms. \texttt{ZINC250k} is a widely used benchmark for molecular graph generation.

\setlength{\tabcolsep}{3pt}
\begin{table}[ht]
\scriptsize
\begin{center}
\begin{sc}
\caption{\footnotesize Generation results on the \texttt{QM9H} dataset datasets. \hspace{\textwidth}NSPDK results $\times 10^{3}$. Generation time in seconds.}

        \begin{tabular}{ c c  c c c c}
         \toprule
         Model & NFE &  FCD $\downarrow$ & NSPDK$\downarrow$ & Valid \%$\uparrow$ & Time (s) $\downarrow$  \\
        \midrule
        Train. & -- &
        $0.062$ & 
        $0.121$ &
        $98.90$ & -- \\

        DiGress & 1000 & --  & -- &
        $95.4\phantom{0}$ & -- \\ 

        DiscDif & 128 &
        $4.27$ &
        $41.93\phantom{0}$ &
        $22.29$ & -- 
        \\

        SID & 128 &
        $0.37$ &
        $1.15$ &
        $\textbf{97.97}$ & -- 
        \\
        \midrule
        
        DFM & 128 &  $0.25$ & $\textbf{0.44}$ & $97.17$ & $842.5$ \\
    
        HDFM & 128 &  $\textbf{0.23}$ & $0.53$ & $95.78 $& $333.5$ \\ 
        HDFM & 32&  
        $0.24$ & $0.47$ & $94.22$  & $\textbf{174.1}$ \\
        \bottomrule
        \end{tabular}
        \label{tab:qm9H}
\end{sc}
\end{center}
\end{table}

\setlength{\tabcolsep}{3pt}
\begin{table}[ht]
\scriptsize
\begin{center}
\begin{sc}
\caption{\footnotesize Generation results on the \texttt{Zinc250k} datasets. NSPDK results $\times 10^{3}$. Generation time in seconds.}
        \begin{tabular}{ c c c c c c}
         \toprule
         Model & NFE &  FCD $\downarrow$ & NSPDK$\downarrow$ & Valid \%$\uparrow$ & Time (s) $\downarrow$  \\
        \midrule
        Train. & -- &
        $1.13$ &
        $0.10$ &
        $100.00\phantom{0}$ & -- \\

        DiGress & 1000 &
        $3.48$ &
        $2.1\phantom{0}$ &
        $94.99$ &
        $5517$ 
        \\

        SparseDiff & 2000 & 
        15.82\phantom{0} & 
        55.1\phantom{0}\phantom{0} &
        76.05  & 9826\\

        Grum & 1000 &
        2.26 &
        1.5\phantom{0}&
        98.65 & --
        \\
 
        GraBFN & &
        2.12 &
        1.3\phantom{0} &
        99.22 & --
        \\ 
        
        SID & 128 &
        2.06 &
        2.01 &
        \textbf{99.50} & --
        \\

        DeFog 
        & 500 &
          1.43 & 0.8 & 99.22 & -- \\ 
        \midrule

        DFM & 128 & 1.48 & 0.55 & 99.41& 1999 \\
        
        HDFM & 128 & \textbf{1.23} & \textbf{0.43} & 99.30 & 394 \\
        HDFM & 64 & 1.26 & 0.50 & 99.17 & 239 \\ 
        HDFM & 32 &  1.34 & 0.64 & 98.82 & \textbf{168} \\
        \midrule
        \color{darkgray}
        HDFM-R & \color{darkgray}128 & \color{darkgray}1.25 &\color{darkgray} 0.376 &\color{darkgray} 99.28 & \color{darkgray}237  \\


        
        \bottomrule
        \end{tabular}
        \label{tab:zinc}
\end{sc}
\end{center}
\end{table}

Across both datasets, our models (DFM and HDFM) consistently outperform the baselines on FCD and NSPDK, indicating improved modeling of both chemical and structural distributions. On smaller molecules (\texttt{QM9H}), the hierarchical formulation does not yield improvements in modeling performance, but it substantially accelerates sampling. In contrast, on larger molecules (\texttt{ZINC250k}), the hierarchical approach demonstrates its effectiveness by significantly improving both FCD and NSPDK over the dense variant, while maintaining high efficiency.

\subsection{Synthetic Graph Generation}

Stochastic Block Models (SBMs) are commonly used to generate synthetic graphs with community structure, and the standard SBM benchmark is widely adopted for evaluating graph generation methods. 
However, the standard SBM dataset contains only 200 graphs, which leads to two major issues. First, the small dataset size makes overfitting and dataset memorization likely. Second, the limited test set results in high variance of evaluation metrics across runs. As a consequence, it becomes difficult to assess whether a model genuinely learns the underlying distribution or merely memorizes the dataset, often leading to inconsistent or misleading results. For example, some methods report distances lower than the reference distances between the training and test sets.
To address these limitations, we introduce a larger variant of the dataset generated using the same SBM process, but containing 20,000 graphs, denoted as \texttt{SBM20k}. The dataset is publicly available (see Appendix \ref{ap:eval}).

\setlength{\tabcolsep}{3pt}
\begin{table}[ht]
\scriptsize
\begin{center}
\begin{sc}
\caption{\footnotesize Generation results on the \texttt{SBM20k} datasets. \hspace{\textwidth}MMDs results $\times 10^{3}$. Generation time in seconds.}
        \begin{tabular}{ c c  c c c c c }
          \toprule
          
         Model & NFE & Valid$\uparrow$ &Deg. $\downarrow$ & Clust.$\downarrow$ & Spec. $\uparrow$ & Time (s) $\downarrow$  \\
        \midrule

        Train & -- & 90.7 & 0.08 & 1.97 & 0.17 & -- \\
        EDGE & 128 & -- & 154.16\phantom{00} & 766.79\phantom{00} & 10.58 & \phantom{00}350  \\
        DiGress & 1000 & -- & 4.85 & 5.92 & 28.30 & \phantom{0}3906  \\
        SparseDiff & 4000 & -- & 1.10 & 5.25 & 19.11 & 11089  \\ 
        SID & 128 & 0.00 &  $30.95\phantom{0}$ & $38.79\phantom{0}$ & $13.32$ & $2840$  \\ 
        \midrule
        DFM & 128 & 51.63 & 4.81 & 4.92 & \phantom{0}1.00 & \phantom{0}2834  \\
        HDFM & 128 & \textbf{70.40} & \textbf{0.15} & \textbf{2.06} & \textbf{\phantom{0}0.30} & \phantom{00}490 \\
        HDFM & 32 & 68.80 & 0.24 & \textbf{2.06} & \textbf{\phantom{0}0.30} & \textbf{\phantom{00}160} \\
        \midrule
        \color{darkgray}HDFM-R  & \color{darkgray}128 & \color{darkgray}85.00 &  \color{darkgray}0.18 & \color{darkgray}2.13 & \color{darkgray}\phantom{0}0.33 & \color{darkgray}\phantom{00}188 \\
        
        \bottomrule
        \end{tabular}
        \label{tab:SBM}
\end{sc}
\end{center}
\end{table}

The results support our main claims. Our models significantly outperform all baselines by a large margin; the hierarchical variant consistently improves modeling performance over its dense counterpart; and the approach remains robust in low-NFE sampling regimes. On this larger dataset, the gains in generation time are even more pronounced.

\subsection{Large Social Networks}

Finally, we evaluate our approach on two large-scale graph datasets, namely \texttt{Ego} and \texttt{Reddit12K}, which contain 757 and 11551 graphs with upto 399 and 1500 nodes, respectively. Graphs from both dataset represent social networks.  
The \texttt{Ego} dataset is a standard benchmark for large graph generation, containing 757 graphs with up to 400 nodes. Baseline results for this dataset are taken from \citet{sparsediff}. The \texttt{Reddit12K} dataset is derived from the popular TUDataset collection. 

While the \texttt{Ego} dataset is a standard benchmark for large graph generation, we also introduce \texttt{Reddit12K} as a new and challenging benchmark for large graph generation due to its number of graphs and their sizes. In particular, dense denoising-based generative models typically do not support graphs of this size. Baseline results are obtained by adapting the source code from the respective official repositories. Results are reported in Tables~\ref{tab:ego} and~\ref{tab:reddit}.

On the \texttt{Ego} dataset, the performance gap between our models and the baselines is smaller than on other benchmarks. We attribute this behavior to the relatively small number of graphs available for training. In particular, our hierarchical model requires learning the distribution of coarse-level graphs, which may be more challenging in low-data regimes. Nevertheless, our approach still outperforms all baselines on two out of the three evaluation metrics.

\setlength{\tabcolsep}{3pt}
\begin{table}[ht]
\scriptsize
\centering
\begin{sc}
\caption{\footnotesize Generation results on the \texttt{Ego} datasets. \hspace{\textwidth}MMDs results $\times 10^{3}$. Generation time in seconds.}
    \begin{tabular}{
    l  c c c c 
    }
    \toprule
    
    Model &
    NFE &
    Degree $\downarrow$ &
    Cluster. $\downarrow$ &
    Spect. $\downarrow$ 
    \\
    \midrule

    Train. set & --
    & $0.7 \pm 0.4$ 
    & $14.6 \pm 1.6$
    & $2.2 \pm 0.7$
    \\
    
    HiGen & -- 
    & $47.\phantom{0 \pm 0.00}$
    & $3\phantom{ \pm 0.000}$
    & --
    \\
    
    EDGE & 128 
    & $58.\phantom{0 \pm 0.00}$
    & $180\phantom{ \pm 0.0000.}$
    & --
    \\
    
    DiGress & 1000
    & $8.9 \pm 1.6$
    & $54\phantom{.0} \pm 4\phantom{.0}$
    & $19\phantom{.0} \pm 3.2\phantom{0}$
    \\
    
    SparseDiff & 10000
    & $3.7 \pm 0.4$
    & \textbf{32\phantom{.0}} $\pm$ 1\phantom{.0}
    & $5.6 \pm 0.8$ 
    \\
    \midrule
    
    HDFM & 128 &
    1.4 $\pm$ 0.1 & 57.7 $\pm$ 1.3 & \textbf{3.3} $\pm$ 0.4 
    \\
    
    
    HDFM & 32 &
    \textbf{1.3} $\pm$ 0.1 & 57.7 $\pm$ 3.6 & 3.4 $\pm$ 0.3  \\ 
    
    
    \bottomrule
    \end{tabular}
    \label{tab:ego}
\end{sc}
\end{table}

Few existing methods support the graph sizes present in the \texttt{Reddit12K} dataset, and we were therefore only able to evaluate two baseline approaches. Under this constraint, our approach yields substantially better results than these baselines. Moreover, we again observe the strong sampling efficiency of our generative models, which are able to generate large graphs using only a small number of denoising steps.

\setlength{\tabcolsep}{3pt}
\begin{table}[ht]
\scriptsize
\begin{center}
\begin{sc}

\caption{\footnotesize Generation results on the \texttt{Reddit12k} datasets. \hspace{\textwidth}MMDs results $\times 10^{3}$. Generation time in seconds}

\begin{tabular}{c c c c c c c}

 \toprule
 Model & NFE &  degree$\downarrow$ & clust. $\downarrow$ & spect.$\downarrow$ & Time$\downarrow$  \\

\midrule
Train. & -- & 0.17 & 5.17 & 1.51 \\
EDGE & 128 & 154.16 & 766.79\hphantom{0}  & 37.63& 172 \\
SparseDiff & 10000 &\hphantom{0}93.73 & 68.46 &  124.96& 7234  \\
\midrule
HDFM & 128 & \hphantom{00}$0.32$ & $3.35$ & $3.34$ & $107$ \\
HDFM & 32 & \hphantom{00}$0.40$ & $2.31$ & $3.62$ & $47$ \\ 
\bottomrule
\end{tabular}
\label{tab:reddit}
\end{sc}
\end{center}
\end{table}
\subsection{Conditional Generation}

We now evaluate our model under the two additional settings presented in Section \ref{sec:model}: either generating graphs by randomly sampling expanded spanning supergraph from the dataset or by conditioning the graph structure by fixing the spanning supergraph

\subsubsection{Generation Conditioned on Real Spanning Supergraph}

As discussed in Section \ref{sec:model}, graph generation can alternatively be conditioned on a real expanded spanning supergraph sampled from the dataset, rather than proceeding hierarchically from coarse to fine. In this setting, only the finest level is generated, with the spanning supergraph provided as input.

We do not directly compare this variant to the fully generative hierarchical model, as conditioning on ground-truth structure makes the comparison unfair. Nonetheless, this setting is practically relevant, as it requires training and sampling at only a single level and avoids error accumulation across hierarchical stages. From an experimental standpoint, it also serves as an ablation that isolates the modeling error introduced by the hierarchical abstraction. We report results for this setting under the name HDFM-R on the \texttt{ZINC250k} and \texttt{SBM20k} datasets.

On \texttt{ZINC250k}, conditioning on the true spanning supergraph does not yield significant improvements in modeling performance, but leads to faster generation by bypassing the lower-level sampling stage. In contrast, on the more challenging \texttt{SBM20k} dataset, conditioning on real expanded spanning supergraphs results in significant improvement in validity. We attribute this gain to the increased difficulty of modeling coarse-level graphs in SBM, which can introduce discrepancies between real and generated spanning supergraphs in the fully generative setting.

\subsection{Structural Conditioning}

Fixing the spanning supergraph enables a novel form of structural conditioning. We evaluate this setting by computing the spectral distance between a reference graph from the dataset and graphs generated while conditioning on the spanning supergraph associated with that reference graph (see Appendix~\ref{ap:cond_gen} for details). For comparison, Table~\ref{tab:conditional} also reports the spectral distance between the reference graph and a graph randomly sampled from the test set.
The results show that conditioning on the spanning supergraph effectively guides graph generation toward the desired global structure.

\begin{table}[ht]
    \begin{center}
    \scriptsize
    \begin{sc}
    \caption{\footnotesize Average spectral distances to reference graphs }
    \begin{tabular}{c  c c c}
    \toprule
       Dataset & Test set & Conditional Gen. & Ratio Cond. / Test \\
    \midrule
        SBM20k &  4.57 $\pm$ 1.55 & \textbf{0.49 $\pm$ 0.06}  & 0.114 $\pm$ 0.024 \\
        Reddit &  3.57 $\pm$ 1.10 & \textbf{0.57 $\pm$ 0.31}  & 0.161 $\pm$ 0.069 \\
    \bottomrule
    \end{tabular}
    \label{tab:conditional}
    \end{sc}
    \end{center}
\end{table}

\section{Conclusion}\label{sec:conclusion}

We propose a novel hierarchical approach to graph generation based on discrete flow matching. Our method substantially improves graph modeling performance across a wide range of datasets, while maintaining strong computational efficiency that enables fast sampling and scalability to large graphs.

These gains arise from the newly proposed graph coarsening strategy, which induces sparse spanning supergraphs that serve as effective conditioning structures for both accurate and efficient graph generation. More broadly, our results highlight the critical role of architectural design in scalable and high-fidelity graph generative modeling.

\section*{Impact Statement}

This paper presents research aimed at advancing the field of machine learning, with a particular focus on graph and molecular generation. The potential societal impacts of this work are largely positive, and we do not identify any direct specific negative consequences that require special discussion.

\bibliography{ref}

@inproceedings{graphgen,
  author    = {Nikhil Goyal and
               Harsh Vardhan Jain and
               Sayan Ranu},
  editor    = {Yennun Huang and
               Irwin King and
               Tie{-}Yan Liu and
               Maarten van Steen},
  title     = {GraphGen: {A} Scalable Approach to Domain-agnostic Labeled Graph Generation},
  booktitle = {{WWW} '20: The Web Conference 2020, Taipei, Taiwan, April 20-24, 2020},
  pages     = {1253--1263},
  publisher = {{ACM} / {IW3C2}},
  year      = {2020},
  url       = {https://doi.org/10.1145/3366423.3380201},
  doi       = {10.1145/3366423.3380201},
  bibsource = {dblp computer science bibliography, https://dblp.org}
}

@inproceedings{graphrnn,
  title = {{GraphRNN}: {Generating} {Realistic} {Graphs} with {Deep} {Auto}-regressive {Models}},
  shorttitle = {{GraphRNN}},
  url = {https://proceedings.mlr.press/v80/you18a.html},
  language = {en},
  urldate = {2022-03-03},
  booktitle = {Proceedings of the 35th {International} {Conference} on {Machine} {Learning}},
  publisher = {PMLR},
  author = {You, Jiaxuan and Ying, Rex and Ren, Xiang and Hamilton, William and Leskovec, Jure},
  month = jul,
  year = {2018},
  note = {ISSN: 2640-3498},
  pages = {5708--5717},
}

@inproceedings{kusner_grammar_2017,
  title = {Grammar {Variational} {Autoencoder}},
  url = {https://proceedings.mlr.press/v70/kusner17a.html},
  language = {en},
  urldate = {2022-04-03},
  booktitle = {Proceedings of the 34th {International} {Conference} on {Machine} {Learning}},
  publisher = {PMLR},
  author = {Kusner, Matt J. and Paige, Brooks and Hernández-Lobato, José Miguel},
  month = jul,
  year = {2017},
  note = {ISSN: 2640-3498},
  pages = {1945--1954}
}

@misc{graphnvp,
      title={GraphNVP: An Invertible Flow Model for Generating Molecular Graphs}, 
      author={Kaushalya Madhawa and Katushiko Ishiguro and Kosuke Nakago and Motoki Abe},
      year={2019},
      eprint={1905.11600},
      archivePrefix={arXiv},
      primaryClass={stat.ML}
}

@inproceedings{graphaf,
title={GraphAF: a Flow-based Autoregressive Model for Molecular Graph Generation},
author={Chence Shi and Minkai Xu and Zhaocheng Zhu and Weinan Zhang and Ming Zhang and Jian Tang},
booktitle={International Conference on Learning Representations},
year={2020},
url={https://openreview.net/forum?id=S1esMkHYPr}
}

@article{gomez-bombarelli_automatic_2018,
  title = {Automatic {Chemical} {Design} {Using} a {Data}-{Driven} {Continuous} {Representation} of {Molecules}},
  volume = {4},
  issn = {2374-7943},
  url = {https://doi.org/10.1021/acscentsci.7b00572},
  number = {2},
  urldate = {2022-04-03},
  journal = {ACS Central Science},
  author = {Gómez-Bombarelli, Rafael and Wei, Jennifer N. and Duvenaud, David and Hernández-Lobato, José Miguel and Sánchez-Lengeling, Benjamín and Sheberla, Dennis and Aguilera-Iparraguirre, Jorge and Hirzel, Timothy D. and Adams, Ryan P. and Aspuru-Guzik, Alán},
  month = feb,
  year = {2018},
  note = {Publisher: American Chemical Society},
  pages = {268--276}
}

@inproceedings{gran,
	title = {Efficient {Graph} {Generation} with {Graph} {Recurrent} {Attention} {Networks}},
	volume = {32},
	urldate = {2022-04-06},
	booktitle = {Advances in {Neural} {Information} {Processing} {Systems}},
	publisher = {Curran Associates, Inc.},
	author = {Liao, Renjie and Li, Yujia and Song, Yang and Wang, Shenlong and Hamilton, Will and Duvenaud, David K and Urtasun, Raquel and Zemel, Richard},
	year = {2019}
}

@inproceedings{edp-gnn,
author = {Yang, Carl and Zhuang, Peiye and Shi, Wenhan and Luu, Alan and Li, Pan},
booktitle = {Advances in Neural Information Processing Systems},
title = {{Conditional Structure Generation through Graph Variational Generative Adversarial Nets}},
volume = {32},
year = {2019}
}

@inproceedings{hiervae,
author = {Jin, Wengong and Barzilay, Regina and Jaakkola, Tommi},
booktitle = {37th International Conference on Machine Learning, ICML 2020},
isbn = {9781713821120},
pages = {4789--4798},
title = {{Hierarchical Generation of Molecular Graphs using Structural Motifs}},
url = {https://github.com/wengong-jin/hgraph2graph},
volume = {PartF16814},
year = {2020}
}

@InProceedings{jtvae,
  title = 	 {Junction Tree Variational Autoencoder for Molecular Graph Generation},
  author =       {Jin, Wengong and Barzilay, Regina and Jaakkola, Tommi},
  booktitle = 	 {Proceedings of the 35th International Conference on Machine Learning},
  pages = 	 {2323--2332},
  year = 	 {2018},
  editor = 	 {Dy, Jennifer and Krause, Andreas},
  volume = 	 {80},
  series = 	 {Proceedings of Machine Learning Research},
  month = 	 {10--15 Jul},
  publisher =    {PMLR},
  url = 	 {https://proceedings.mlr.press/v80/jin18a.html}
}

@article{moflow,
archivePrefix = {arXiv},
arxivId = {2006.10137},
author = {Zang, Chengxi and Wang, Fei},
doi = {10.1145/3394486.3403104},
eprint = {2006.10137},
isbn = {9781450379984},
journal = {Proceedings of the ACM SIGKDD International Conference on Knowledge Discovery and Data Mining},
month = {aug},
pages = {617--626},
publisher = {Association for Computing Machinery},
title = {{MoFlow: An Invertible Flow Model for Generating Molecular Graphs}},
url = {https://dl.acm.org/doi/10.1145/3394486.3403104},
volume = {10},
year = {2020}
}

@article{graphdf,
archivePrefix = {arXiv},
arxivId = {2102.01189},
author = {Luo, Youzhi and Yan, Keqiang and Ji, Shuiwang},
eprint = {2102.01189},
journal = {Proceedings of the 38th International Conference on Machine Learning},
pages = {7192--7203},
title = {{GraphDF: A Discrete Flow Model for Molecular Graph Generation}},
url = {http://arxiv.org/abs/2102.01189},
volume = {139},
year = {2021}
}

@inproceedings{gnf,
archivePrefix = {arXiv},
arxivId = {1905.13177},
author = {Liu, Jenny and Kumar, Aviral and Ba, Jimmy and Kiros, Jamie and Swersky, Kevin},
booktitle = {Advances in Neural Information Processing Systems},
eprint = {1905.13177},
issn = {10495258},
title = {{Graph normalizing flows}},
volume = {32},
year = {2019}
}

@article{gdss,
archivePrefix = {arXiv},
arxivId = {2202.02514},
author = {Jo, Jaehyeong and Lee, Seul and Hwang, Sung Ju},
eprint = {2202.02514},
journal = {Proceedings of the 39th International Conference on Machine Learning},
pages = {10362--10383},
title = {{Score-based Generative Modeling of Graphs via the System of Stochastic Differential Equations}},
url = {https://github.com/harryjo97/GDSS. http://arxiv.org/abs/2202.02514},
volume = {162},
year = {2022}
}

@article{Chen2018,
archivePrefix = {arXiv},
arxivId = {1809.00773},
author = {Chen, Bo and Sun, Le and Han, Xianpei},
doi = {10.18653/V1/P18-1071},
eprint = {1809.00773},
isbn = {9781948087322},
journal = {ACL 2018 - 56th Annual Meeting of the Association for Computational Linguistics, Proceedings of the Conference (Long Papers)},
pages = {766--777},
publisher = {Association for Computational Linguistics (ACL)},
title = {{Sequence-to-Action: End-to-End Semantic Graph Generation for Semantic Parsing}},
url = {https://aclanthology.org/P18-1071},
volume = {1},
year = {2018}
}

@article{Klawonn2018,
archivePrefix = {arXiv},
arxivId = {1802.02598},
author = {Klawonn, Matthew and Heim, Eric},
doi = {10.1609/AAAI.V32I1.12321},
eprint = {1802.02598},
isbn = {9781577358008},
issn = {2374-3468},
journal = {Proceedings of the AAAI Conference on Artificial Intelligence},
month = {apr},
number = {1},
pages = {6992--6999},
publisher = {AAAI press},
title = {{Generating Triples With Adversarial Networks for Scene Graph Construction}},
url = {https://ojs.aaai.org/index.php/AAAI/article/view/12321},
volume = {32},
year = {2018}
}

@article{Lu2020,
archivePrefix = {arXiv},
arxivId = {1910.13551},
author = {Lu, Chengqiang and Liu, Qi and Sun, Qiming and Hsieh, Chang Yu and Zhang, Shengyu and Shi, Liang and Lee, Chee Kong},
eprint = {1910.13551},
issn = {19327455},
journal = {Journal of Physical Chemistry C},
month = {apr},
number = {13},
pages = {7048--7060},
publisher = {American Chemical Society},
title = {{Deep Learning for Optoelectronic Properties of Organic Semiconductors}},
url = {https://pubs.acs.org/doi/abs/10.1021/acs.jpcc.0c00329},
volume = {124},
year = {2020}
}

@inproceedings{
brockschmidt,
title={Generative Code Modeling with Graphs},
author={Marc Brockschmidt and Miltiadis Allamanis and Alexander L. Gaunt and Oleksandr Polozov},
booktitle={International Conference on Learning Representations},
year={2019},
url={https://openreview.net/forum?id=Bke4KsA5FX},
}

@inproceedings{Ingraham,
 author = {Ingraham, John and Garg, Vikas and Barzilay, Regina and Jaakkola, Tommi},
 booktitle = {Advances in Neural Information Processing Systems},
 editor = {H. Wallach and H. Larochelle and A. Beygelzimer and F. d\textquotesingle Alch\'{e}-Buc and E. Fox and R. Garnett},
 publisher = {Curran Associates, Inc.},
 title = {Generative Models for Graph-Based Protein Design},
 volume = {32},
 year = {2019}
}

@INPROCEEDINGS {Li,
author = {Y. Li and W. Ouyang and B. Zhou and K. Wang and X. Wang},
booktitle = {2017 IEEE International Conference on Computer Vision (ICCV)},
title = {Scene Graph Generation from Objects, Phrases and Region Captions},
year = {2017},
issn = {2380-7504},
pages = {1270-1279},
doi = {10.1109/ICCV.2017.142},
url = {https://doi.ieeecomputersociety.org/10.1109/ICCV.2017.142},
publisher = {IEEE Computer Society},
address = {Los Alamitos, CA, USA},
month = {oct}
}

@article{diffpool,
author = {Ying, Rex and You, Jiaxuan and Morris, Christopher and Ren, Xiang and Hamilton, William L and Leskovec, Jure},
journal = {Advances in Neural Information Processing Systems},
title = {{Hierarchical Graph Representation Learning with Differentiable Pooling}},
volume = {31},
year = {2018}
}

@article{attention_is_all,
archivePrefix = {arXiv},
arxivId = {1706.03762v5},
author = {Vaswani, Ashish and Shazeer, Noam and Parmar, Niki and Uszkoreit, Jakob and Jones, Llion and Gomez, Aidan N and Kaiser, {\L}ukasz and Polosukhin, Illia},
eprint = {1706.03762v5},
issn = {10495258},
journal = {Advances in Neural Information Processing Systems 30},
pages = {5998--6008},
title = {{Transformer: Attention is all you need}},
year = {2017}
}

@article{zinc,
author = {Irwin, John J. and Sterling, Teague and Mysinger, Michael M. and Bolstad, Erin S. and Coleman, Ryan G.},
title = {ZINC: A Free Tool to Discover Chemistry for Biology},
journal = {Journal of Chemical Information and Modeling},
volume = {52},
number = {7},
pages = {1757-1768},
year = {2012},
doi = {10.1021/ci3001277},
    note ={PMID: 22587354},
url = {https://doi.org/10.1021/ci3001277},
eprint = {https://doi.org/10.1021/ci3001277}
}

@article{fcd,
author = {Preuer, Kristina and Renz, Philipp and Unterthiner, Thomas and Hochreiter, Sepp and Klambauer, Günter},
title = {Fréchet ChemNet Distance: A Metric for Generative Models for Molecules in Drug Discovery},
journal = {Journal of Chemical Information and Modeling},
volume = {58},
number = {9},
pages = {1736-1741},
year = {2018},
doi = {10.1021/acs.jcim.8b00234},
    note ={PMID: 30118593},
}

@inproceedings{nspdk,
  author    = {Fabrizio Costa and
               Kurt De Grave},
  editor    = {Johannes F{\"{u}}rnkranz and
               Thorsten Joachims},
  title     = {Fast Neighborhood Subgraph Pairwise Distance Kernel},
  booktitle = {Proceedings of the 27th International Conference on Machine Learning
               (ICML-10), June 21-24, 2010, Haifa, Israel},
  pages     = {255--262},
  publisher = {Omnipress},
  year      = {2010},
  url       = {https://icml.cc/Conferences/2010/papers/347.pdf},
  bibsource = {dblp computer science bibliography, https://dblp.org}
}

@inproceedings{digress,
title={DiGress: Discrete Denoising diffusion for graph generation},
author={Clement Vignac and Igor Krawczuk and Antoine Siraudin and Bohan Wang and Volkan Cevher and Pascal Frossard},
booktitle={The Eleventh International Conference on Learning Representations },
year={2023},
url ={https://openreview.net/forum?id=UaAD-Nu86WX}
}

@misc{gggan,
title={{\{}GG{\}}-{\{}GAN{\}}: A Geometric Graph Generative Adversarial Network},
author={Igor Krawczuk and Pedro Abranches and Andreas Loukas and Volkan Cevher},
year={2021},
url={https://openreview.net/forum?id=qiAxL3Xqx1o}
}

@InProceedings{grapharm,
  title = 	 {Autoregressive Diffusion Model for Graph Generation},
  author =       {Kong, Lingkai and Cui, Jiaming and Sun, Haotian and Zhuang, Yuchen and Prakash, B. Aditya and Zhang, Chao},
  booktitle = 	 {Proceedings of the 40th International Conference on Machine Learning},
  pages = 	 {17391--17408},
  year = 	 {2023},
  editor = 	 {Krause, Andreas and Brunskill, Emma and Cho, Kyunghyun and Engelhardt, Barbara and Sabato, Sivan and Scarlett, Jonathan},
  volume = 	 {202},
  series = 	 {Proceedings of Machine Learning Research},
  month = 	 {23--29 Jul},
  publisher =    {PMLR},
  url = 	 {https://proceedings.mlr.press/v202/kong23b.html}
}

@inproceedings{
graphle,
title={Efficient and Scalable Graph Generation through Iterative Local Expansion},
author={Andreas Bergmeister and Karolis Martinkus and Nathana{\"e}l Perraudin and Roger Wattenhofer},
booktitle={The Twelfth International Conference on Learning Representations},
year={2024},
url = {https://openreview.net/forum?id=2XkTz7gdpc}
}

@InProceedings{spectre,
  title = 	 {{SPECTRE}: Spectral Conditioning Helps to Overcome the Expressivity Limits of One-shot Graph Generators},
  author =       {Martinkus, Karolis and Loukas, Andreas and Perraudin, Nathana{\"e}l and Wattenhofer, Roger},
  booktitle = 	 {Proceedings of the 39th International Conference on Machine Learning},
  pages = 	 {15159--15179},
  year = 	 {2022},
  editor = 	 {Chaudhuri, Kamalika and Jegelka, Stefanie and Song, Le and Szepesvari, Csaba and Niu, Gang and Sabato, Sivan},
  volume = 	 {162},
  series = 	 {Proceedings of Machine Learning Research},
  month = 	 {17--23 Jul},
  publisher =    {PMLR},
  url = 	 {https://proceedings.mlr.press/v162/martinkus22a.html}
}

@inproceedings{reddit,
author = {Yanardag, Pinar and Vishwanathan, S.V.N.},
title = {Deep Graph Kernels},
year = {2015},
isbn = {9781450336642},
publisher = {Association for Computing Machinery},
address = {New York, NY, USA},
url = {https://doi.org/10.1145/2783258.2783417},
doi = {10.1145/2783258.2783417},
booktitle = {Proceedings of the 21th ACM SIGKDD International Conference on Knowledge Discovery and Data Mining},
pages = {1365–1374},
numpages = {10},
location = {Sydney, NSW, Australia},
series = {KDD '15}
}

@inproceedings{
discdiff_haefeli,
title={Diffusion Models for Graphs Benefit From Discrete State Spaces},
author={Kilian Konstantin Haefeli and Karolis Martinkus and Nathana{\"e}l Perraudin and Roger Wattenhofer},
booktitle={The First Learning on Graphs Conference},
year={2022},
url = {https://openreview.net/forum?id=CtsKBwhTMKg}
}

@misc{
sparsediff,
title={Sparse Training of Discrete Diffusion Models for Graph Generation},
author={Yiming Qin and Clement Vignac and Pascal Frossard},
year={2024},
url={https://openreview.net/forum?id=oTRekADULK}
}

@InProceedings{EDGE,
  title = 	 {Efficient and Degree-Guided Graph Generation via Discrete Diffusion Modeling},
  author =       {Chen, Xiaohui and He, Jiaxing and Han, Xu and Liu, Liping},
  booktitle = 	 {Proceedings of the 40th International Conference on Machine Learning},
  pages = 	 {4585--4610},
  year = 	 {2023},
  editor = 	 {Krause, Andreas and Brunskill, Emma and Cho, Kyunghyun and Engelhardt, Barbara and Sabato, Sivan and Scarlett, Jonathan},
  volume = 	 {202},
  series = 	 {Proceedings of Machine Learning Research},
  month = 	 {23--29 Jul},
  publisher =    {PMLR},
  url = 	 {https://proceedings.mlr.press/v202/chen23k.html}
}

@article{
dgae,
title={Discrete Graph Auto-Encoder},
author={Yoann Boget and Magda Gregorova and Alexandros Kalousis},
journal={Transactions on Machine Learning Research},
issn={2835-8856},
year={2024},
url={https://openreview.net/forum?id=bZ80b0wb9d},
note={}
}

@inproceedings{higen,
title={HiGen: Hierarchical Graph Generative Networks},
author={Mahdi Karami},
booktitle={The Twelfth International Conference on Learning Representations},
year={2024},
url={https://openreview.net/forum?id=KNvubydSB5}
}

@InProceedings{drum,
  title = 	 {Graph Generation with Diffusion Mixture},
  author =       {Jo, Jaehyeong and Kim, Dongki and Hwang, Sung Ju},
  booktitle = 	 {Proceedings of the 41st International Conference on Machine Learning},
  pages = 	 {22371--22405},
  year = 	 {2024},
  editor = 	 {Salakhutdinov, Ruslan and Kolter, Zico and Heller, Katherine and Weller, Adrian and Oliver, Nuria and Scarlett, Jonathan and Berkenkamp, Felix},
  volume = 	 {235},
  series = 	 {Proceedings of Machine Learning Research},
  month = 	 {21--27 Jul},
  publisher =    {PMLR},
  url = 	 {https://proceedings.mlr.press/v235/jo24b.html}
}

@inproceedings{glad,
title={{GLAD}: Improving Latent Graph Generative Modeling with Simple Quantization},
author={Van Khoa Nguyen and Yoann Boget and Frantzeska Lavda and Alexandros Kalousis},
booktitle={ICML 2024 Workshop on Structured Probabilistic Inference {\&} Generative Modeling},
year={2024},
url = {https://openreview.net/forum?id=aY1gdSolIv}
}

@inproceedings{graph_pooling_review2,
author = {Liu, Chuang and Zhan, Yibing and Wu, Jia and Li, Chang and Du, Bo and Hu, Wenbin and Liu, Tongliang and Tao, Dacheng},
title = {Graph pooling for graph neural networks: progress, challenges, and opportunities},
year = {2023},
isbn = {978-1-956792-03-4},
url = {https://doi.org/10.24963/ijcai.2023/752},
doi = {10.24963/ijcai.2023/752},
booktitle = {Proceedings of the Thirty-Second International Joint Conference on Artificial Intelligence},
articleno = {752},
numpages = {11},
location = {Macao, P.R.China},
series = {IJCAI '23}
}

@ARTICLE{graph_pooling_review1,
  author={Grattarola, Daniele and Zambon, Daniele and Bianchi, Filippo Maria and Alippi, Cesare},
  journal={IEEE Transactions on Neural Networks and Learning Systems}, 
  title={Understanding Pooling in Graph Neural Networks}, 
  year={2024},
  volume={35},
  number={2},
  pages={2708-2718},
  doi={10.1109/TNNLS.2022.3190922}}

@ARTICLE{graph_pooling_review3,
  title = 	 {Graph pooling in graph neural networks: methods and their applications in omics studies},
  author =       {Wang, Yan and Hou, Wenju and Sheng, Nan and  Zhao, Ziqi and Jialin, Liu and Lan, Huang and Wang, Juexin },
  journal={Artificial Intelligence Review}, 
  year={2024},
  volume={57},
  number={294},
  doi={10.1007/s10462-024-10918-9}}

@article{contraction_loukas,
  author  = {Andreas Loukas},
  title   = {Graph Reduction with Spectral and Cut Guarantees},
  journal = {Journal of Machine Learning Research},
  year    = {2019},
  volume  = {20},
  number  = {116},
  pages   = {1--42},
  url     = {http://jmlr.org/papers/v20/18-680.html}
}

@misc{edge_contractionpoolinggraph,
      title={Edge Contraction Pooling for Graph Neural Networks}, 
      author={Frederik Diehl},
      year={2019},
      eprint={1905.10990},
      archivePrefix={arXiv},
      primaryClass={cs.LG},
      url={https://arxiv.org/abs/1905.10990}, 
}

@misc{mincut,
      title={Spectral Clustering with Graph Neural Networks for Graph Pooling}, 
      author={Filippo Maria Bianchi and Daniele Grattarola and Cesare Alippi},
      year={2020},
      eprint={1907.00481},
      archivePrefix={arXiv},
      primaryClass={cs.LG},
      url={https://arxiv.org/abs/1907.00481}, 
}

@inproceedings{DDPM,
 author = {Ho, Jonathan and Jain, Ajay and Abbeel, Pieter},
 booktitle = {Advances in Neural Information Processing Systems},
 editor = {H. Larochelle and M. Ranzato and R. Hadsell and M.F. Balcan and H. Lin},
 pages = {6840--6851},
 publisher = {Curran Associates, Inc.},
 title = {Denoising Diffusion Probabilistic Models},
 url = {https://proceedings.neurips.cc/paper_files/paper/2020/file/4c5bcfec8584af0d967f1ab10179ca4b-Paper.pdf},
 volume = {33},
 year = {2020}
}

@inproceedings{Discrete_FlowMatching_campbell,
  title={Generative flows on discrete state-spaces: enabling multimodal flows with applications to protein co-design},
  author={Campbell, Andrew and Yim, Jason and Barzilay, Regina and Rainforth, Tom and Jaakkola, Tommi},
  booktitle={Proceedings of the 41st International Conference on Machine Learning},
  pages={5453--5512},
  year={2024}
}

@inproceedings{discrete_flow_matching_gat,
title={Discrete Flow Matching},
author={Itai Gat and Tal Remez and Neta Shaul and Felix Kreuk and Ricky T. Q. Chen and Gabriel Synnaeve and Yossi Adi and Yaron Lipman},
booktitle={The Thirty-eighth Annual Conference on Neural Information Processing Systems},
year={2024},
url = {https://openreview.net/forum?id=GTDKo3Sv9p}
}

@inproceedings{DeFog,
title={DeFoG: Discrete Flow Matching for Graph Generation},
author={Yiming Qin and Manuel Madeira and Dorina Thanou and Pascal Frossard},
booktitle={Forty-second International Conference on Machine Learning},
year={2025},
url={https://openreview.net/forum?id=KPRIwWhqAZ}
}

@inproceedings{continuous_time_graph_diffusion,
title={Discrete-state Continuous-time Diffusion for Graph Generation},
author={Zhe Xu and Ruizhong Qiu and Yuzhong Chen and Huiyuan Chen and Xiran Fan and Menghai Pan and Zhichen Zeng and Mahashweta Das and Hanghang Tong},
booktitle={The Thirty-eighth Annual Conference on Neural Information Processing Systems},
year={2024},
url={https://openreview.net/forum?id=YkSKZEhIYt}
}

@article{qm9,
title={MoleculeNet: a benchmark for molecular machine learning},
author={Zhenqin Wu and Bharath Ramsundar and Evan N Feinberg and, Joseph Gomes and Caleb Geniesse and Aneesh S Pappu and Karl Leswing and Vijay Pande},
journal={Chemical science},
year={2017},
url = {https://pmc.ncbi.nlm.nih.gov/articles/PMC5868307/}
}

@InProceedings{sid,
  title = 	 {Simple and Critical Iterative Denoising: A Recasting of Discrete Diffusion in Graph Generation},
  author =       {Boget, Yoann},
  booktitle = 	 {Proceedings of the 42th International Conference on Machine Learning},
  year = 	 {2025},
  series = 	 {Proceedings of Machine Learning Research},
  month = 	 {July},
  publisher =    {PMLR},
}

@inproceedings{BetaGraph,
  title={Advancing Graph Generation through Beta Diffusion},
  author={Liu, Xinyang and He, Yilin and Chen, Bo and Zhou, Mingyuan},
  booktitle={13th International Conference on Learning Representations (ICLR 2025)},
  year={2025}
}

@inproceedings{CatFlow,
title={Variational Flow Matching for Graph Generation},
author={Floor Eijkelboom and Grigory Bartosh and Christian A. Naesseth and Max Welling and Jan-Willem van de Meent},
booktitle={The Thirty-eighth Annual Conference on Neural Information Processing Systems},
year={2024},
url={https://openreview.net/forum?id=UahrHR5HQh}
}

@InProceedings{graphBFN,
  title = 	 {Smooth Interpolation for Improved Discrete Graph Generative Models},
  author =       {Song, Yuxuan and Shi, Juntong and Gong, Jingjing and Xu, Minkai and Ermon, Stefano and Zhou, Hao and Ma, Wei-Ying},
  booktitle = 	 {Proceedings of the 42nd International Conference on Machine Learning},
  pages = 	 {56363--56388},
  year = 	 {2025},
  editor = 	 {Singh, Aarti and Fazel, Maryam and Hsu, Daniel and Lacoste-Julien, Simon and Berkenkamp, Felix and Maharaj, Tegan and Wagstaff, Kiri and Zhu, Jerry},
  volume = 	 {267},
  series = 	 {Proceedings of Machine Learning Research},
  month = 	 {13--19 Jul},
  publisher =    {PMLR},
  url = 	 {https://proceedings.mlr.press/v267/song25f.html},
}

@article{DMoN,
  author  = {Anton Tsitsulin and John Palowitch and Bryan Perozzi and Emmanuel MÃ¼ller},
  title   = {Graph Clustering with Graph Neural Networks},
  journal = {Journal of Machine Learning Research},
  year    = {2023},
  volume  = {24},
  number  = {127},
  pages   = {1--21},
  url     = {http://jmlr.org/papers/v24/20-998.html}
}

@misc{simGFM,
title={Sim{GFM}: Simplifying Discrete Flow Matching for Graph Generation},
author={Anonymous},
year={2026},
url={https://openreview.net/forum?id=eCftYujQkK}
}

@misc{unside,
title={Unrestrained Simplex Denoising for Discrete Data. A Non-Markovian Approach Applied to Graph Generation},
author={Anonymous},
year={2026},
url={https://openreview.net/forum?id=AKXeom7KH5}
}
\bibliographystyle{icml2026}

\newpage
\appendix
\onecolumn
\section{Proofs}\label{ap:proofs}

\subsection{Proposition \ref{prop:spanning}}\label{ap:proof_spanning}

\textbf{Proposition:}

The expanded graph $\gH^\ell$ is a spanning supergraph of $\gG^\ell_\text{unattr.}$, that is:
\begin{equation}
    \gH^\ell = U(C(\gG^\ell_\text{unattr.}))  \implies \gH^\ell = \gG^\ell + \gE_\gS     
\end{equation}

\textbf{Proof:}

We represent the graphs \(\gG^\ell_{\mathrm{unattr.}}\) and \(\gH^\ell\) by their respective adjacency matrices
\begin{equation}
\gG^\ell_{\mathrm{unattr.}} = \mA^\ell_{\gG},
\qquad
\gH^\ell \mA^\ell_{\gH}.
\end{equation}

Let \(\mC_\ell \in {0,1}^{n^\ell \times n^{\ell+1}}\) denote the cluster-membership matrix at level \(\ell\), where \((\mC_\ell)_{i,u} = 1\) if node \(i\) at level \(\ell\) belongs to cluster \(u\) at level \(\ell+1\). The adjacency matrix at level \(\ell+1\) is defined as
\begin{equation}
\mA_{\gG}^{\ell+1}
= \mathbb{I}\left(\mC_\ell^\top \mA_{\gG}^\ell \mC_\ell > 0\right),
\end{equation}
where \(\mathbb{I}(\cdot)\) denotes the element-wise indicator function.

The expanded graph adjacency matrix \(\mA^\ell_{\gH}\) is then given by
\begin{equation}
\mA^\ell_{\gH}
:= \mC_\ell \bigl(\mA_{\gG}^{\ell+1} + \mI_{n^{\ell+1}}\bigr)\mC_\ell^\top
* \mI_{n^\ell},
  \end{equation}
  where \(\mI_k\) denotes the \(k \times k\) identity matrix. This construction corresponds exactly to applying the operators \(C(\cdot)\) and \(U(\cdot)\) to \(\gG^\ell_{\mathrm{unattr.}}\).

To show that \(\gH^\ell\) is a spanning supergraph of \(\gG^\ell_{\mathrm{unattr.}}\), it suffices to prove that every edge present in \(\gG^\ell_{\mathrm{unattr.}}\) is also present in \(\gH^\ell\). 
Since \(\mA^\ell_{\gH}\) is a permutation-equivariant function of $\mA^\ell_{\gG}$ (Proposition \ref{prop:invar_coarse}), it is sufficient to show that. 

\begin{equation}
(\mA^\ell_{\gG})_{i,j} = 1
\implies
(\mA^\ell_{\gH})_{i,j} = 1,
\qquad \forall, i,j \in [n^\ell].
\end{equation}

By permutation equivariance of the construction, it is sufficient to consider an arbitrary pair of nodes \((i,j) \in [n^\ell]\). Assume that
\begin{equation}
(\mA^\ell_{\gG})_{i,j} = 1.
\end{equation}
Let \(u,v \in [n^{\ell+1}]\) be the unique clusters such that \(i \subseteq u\) and \(j \subseteq v\), i.e.,
\begin{equation}
(\mC_\ell)_{i,u} = 1,
\qquad
(\mC_\ell)_{j,v} = 1.
\end{equation}

Since \((\mA^\ell_{\gG})_{i,j} = 1\), the definition of \(\mA_{\gG}^{\ell+1}\) implies
\begin{equation}
(\mA_{\gG}^{\ell+1})_{u,v} = 1.
\end{equation}
Consequently, \((\mA_{\gG}^{\ell+1} + \mI_{n^{\ell+1}})_{u,v} = 1\), and therefore
\begin{equation}
(\mA^\ell_{\gH})_{i,j}
= \bigl(\mC_\ell (\mA_{\gG}^{\ell+1} + \mI)\mC_\ell^\top\bigr)_{i,j} - (\mI_{n^\ell})_{i,j}
= 1.
\end{equation}

Hence, every edge of \(\gG^\ell_{\mathrm{unattr.}}\) is preserved in \(\gH^\ell\), which proves that \(\gH^\ell\) is a spanning supergraph of \(\gG^\ell_{\mathrm{unattr.}}\). \(\square\)

\subsection{Proposition \ref{prop:invar_coarse}} 

Recall the definitions of the coarsening and expansion operators:
\begin{align}\label{eq:coarsening_clean}
\gG^{\ell+1}
:= \bigl(\mX_\gG^{\ell+1}, \mA_\gG^{\ell+1}\bigr),
\quad
\mX_\gG^{\ell+1} = \mC_\ell^{\top}\vone_{n^\ell},
\quad
\mA_\gG^{\ell+1} = \sI\left(\mC_\ell^{\top}\mA_\gG^{\ell}\mC_\ell > 0\right),
\end{align}
and the expansion operator
\begin{equation}\label{eq:expansion_clean}
\mA_\gH^\ell
:= \mC_\ell\bigl(\mA_\gG^{\ell+1} + \mI_{n^{\ell+1}}\bigr)\mC_\ell^{\top}
- \mI_{n^\ell}.
\end{equation}

We now prove the following proposition:

\begin{proposition}
    As functions of $\gG^\ell$, the coarsened representation $\gG^{\ell+1}=(\mA_\gG^{\ell+1}, \mX_\gG^{\ell+1})$ is permutation-invariant, whereas the expanded representation $\gH^{\ell}=\mA_\gH^\ell$ is permutation-equivariant. \hfill \qedsymbol
\end{proposition}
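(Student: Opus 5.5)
The key modelling assumption is that the assignment matrix is produced by a permutation-equivariant map: $\mC_\ell=\mathrm{argmax}\,\mathrm{Softmax}_\text{row}(\mathrm{GNN}_\varphi(\mX^\ell_\gG,\mA^\ell_\gG))$, where the GNN is node-equivariant and the row-wise softmax and argmax act row by row. Let $\mP\in\{0,1\}^{n^\ell\times n^\ell}$ be a permutation matrix and consider the permuted input $(\mP\mX^\ell_\gG,\mP\mA^\ell_\gG\mP^\top)$. I will first establish, as a lemma, that the corresponding assignment is $\mC_\ell'=\mP\mC_\ell$. This holds because $\mathrm{GNN}_\varphi$ outputs $\mP\cdot\mathrm{GNN}_\varphi(\mX^\ell_\gG,\mA^\ell_\gG)$, and row-wise operations commute with row permutations. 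Cluster labels, i.e. the columns of $\mC_\ell$, are indexed by the fixed output channels of the GNN and are therefore not permuted.

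\textbf{Invariance of the coarse graph.} I substitute $\mC_\ell'=\mP\mC_\ell$ into \eqref{eq:coarsening_clean} and use $\mP^\top\mP=\mI$.
\begin{equation*}
\mC_\ell'^{\top}\vone_{n^\ell}=\mC_\ell^\top\mP^\top\vone_{n^\ell}=\mC_\ell^\top\vone_{n^\ell},
\end{equation*}
since $\mP^\top\vone=\vone$. For the adjacency,
\begin{equation*}
\mC_\ell'^{\top}(\mP\mA^\ell_\gG\mP^\top)\mC_\ell'=\mC_\ell^\top\mA^\ell_\gG\mC_\ell .
\end{equation*}
The elementwise indicator therefore gives the same $\mA^{\ell+1}_\gG$. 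Both components are unchanged, so $\gG^{\ell+1}$ is invariant.

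\textbf{Equivariance of the expansion.} Plugging into \eqref{eq:expansion_clean}, the coarse matrix $\mA^{\ell+1}_\gG+\mI$ is unchanged by the previous step. Hence
\begin{equation*}
\mA_\gH'^{\ell}=\mP\mC_\ell(\mA^{\ell+1}_\gG+\mI)\mC_\ell^\top\mP^\top-\mI=\mP\mA^\ell_\gH\mP^\top,
\end{equation*}
using $\mI=\mP\mI\mP^\top$. This is exactly equivariance.

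\textbf{Main obstacle.} The only delicate point is the lemma $\mC'_\ell=\mP\mC_\ell$. It requires the GNN to be genuinely node-equivariant and the argmax to have a deterministic tie-breaking rule applied per row, which is row-local and hence unaffected by permutation. I will state these as standing assumptions. I will also remark that a relabelling of clusters, $\mC_\ell\mapsto\mC_\ell\mathbf{Q}$, would only make $\gG^{\ell+1}$ invariant up to isomorphism, while $\mA^\ell_\gH$ would still be unchanged because $\mathbf{Q}\mathbf{Q}^\top=\mI$.
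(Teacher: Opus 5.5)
Your proof is correct and follows the paper's own argument: you derive $\mC_\ell' = \mP\mC_\ell$ from node-equivariance of the GNN and the row-wise post-processing, then substitute into the coarsening and expansion formulas using $\mP^\top\vone=\vone$ and $\mP^\top\mP=\mI$; the paper's $\mP_\pi^\top$ convention differs only cosmetically. Your explicit treatment of the argmax discretization and its row-local tie-breaking is slightly more careful than the paper, which only appeals to equivariance of the softmax output.
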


Let $\Pi$ denote the set of all node permutations, and let $\mP_\pi$ be the permutation matrix associated with $\pi \in \Pi$.

The assignment matrix $\mC_\ell$ is produced by a GNN followed by element-wise operations. Since GNNs are permutation-equivariant by construction and element-wise operations preserve equivariance, we have
\begin{equation}\label{eq:assignment_clean}
\mP_\pi^{\top}\mC_\ell
=
\text{Softmax}_{\text{row}}\left(
\text{GNN}\varphi\bigl(
\mP_\pi^{\top}\mX_\gG^\ell,
\mP_\pi^{\top}\mA_\gG^\ell\mP_\pi
\bigr)
\right),
\qquad \forall \pi \in \Pi.
\end{equation}

\subsubsection{Permutation invariance of the coarsening operator}

We first show that the coarsened features and adjacency are invariant to permutations of $\gG^\ell$.

Node features:
\begin{align}
\mX_\gG^{\ell+1}
= \mC_\ell^{\top}\vone
\implies
(\mP_\pi^{\top}\mC_\ell)^{\top}\vone
= \mC_\ell^{\top}\mP_\pi\vone
= \mC_\ell^{\top}\vone,
\end{align}
since $\mP_\pi\vone = \vone$. Hence $\mX_\gG^{\ell+1}$ is permutation-invariant.

Adjacency matrix:
\begin{align}
\mA_\gG^{\ell+1} = \sI\left(\mC_\ell^{\top}\mA_\gG^{\ell}\mC_\ell > 0\right) \implies  &\sI\left(
(\mP_\pi^{\top}\mC_\ell)^{\top}
(\mP_\pi^{\top}\mA_\gG^\ell\mP_\pi)
(\mP_\pi^{\top}\mC_\ell)
\right) \\
=\;
&\sI\left(
\mC_\ell^{\top}
\mP_\pi\mP_\pi^{\top}
\mA_\gG^\ell
\mP_\pi\mP_\pi^{\top}
\mC_\ell
\right) \\
=\;& \sI\left(\mC_\ell^{\top}\mA_\gG^\ell\mC_\ell\right) \\
=\;& \mA_\gG^{\ell+1},
\end{align}

where we used $\mP_\pi\mP_\pi^{\top}=\mI$.
Thus $\mA_\gG^{\ell+1}$ is permutation-invariant.\

\subsubsection{Permutation equivariance of the expanded graph}

We now show that the expanded graph is permutation-equivariant.

Thanks to the permutation invariance of $\mA_\gG^{\ell+1}$ , we have:
\begin{align}
\mA_\gH^\ell
= \mC_\ell\bigl(\mA_\gG^{\ell+1} + \mI_{n^{\ell+1}}\bigr)\mC_\ell^{\top}
- \mI_{n^\ell}. \implies &
(\mP_\pi^{\top}\mC_\ell)
(\mA_\gG^{\ell+1} + \mI)
(\mP_\pi^{\top}\mC_\ell)^{\top}
- \mI \\
=\;& \mP_\pi^{\top}\mC_\ell(\mA_\gG^{\ell+1} + \mI)\mC_\ell^{\top}\mP_\pi
- \mP_\pi^{\top}\mI\mP_\pi \\
=\;& \mP_\pi^{\top} \bigl(\mC_\ell(\mA_\gG^{\ell+1} + \mI)\mC_\ell^{\top}
- \mI \bigr)\mP_\pi \\
= \;& \mP_\pi^{\top}\mA_\gH^\ell\mP_\pi
\end{align}

Since $\mA_\gG^{\ell+1}$ is permutation-invariant, the above expression is exactly the expansion applied to the permuted assignment matrix.

Hence $\mA_\gH^{\ell}$ is permutation-equivariant.

\hfill $\qedsymbol$

\section{Technical Report}

\subsection{\emph{D}-Min Experiments}\label{ap:gamma_min}

\subsubsection{Evaluation}

In our experiments, we compare our coarsening model with three baselines: DiffPool \citep{diffpool}, DMoN \citep{DMoN} and MinCut \citep{mincut}. We evaluate graph sparsity in the spanning supergraph at the finest level, using the described coarsening approach and parameters. All three models rely on an assignment matrix $\mC$ parameterized by a Graph Neural Network (GNN). We use the same GNN architecture, and the same hyperparameters for all models. The only difference between models lies in their training objectives: DiffPool minimizes a link prediction objective, MinCut minimizes a continuous relaxation of the normalized MinCut problem, our $D$-Min minimizes the density of the spanning supergraph. All baseline models require additional regularization terms to avoid degenerate solutions, such as assigning all nodes to a single cluster.
Our model do not need such regularization. 

\subsubsection{Densities}\label{ap:gamma_index}

We present in Table \ref{tab:gamma_full} all the densities $D$ obtain by the preprocessing, and the preprocessing training time by level. These are the values used for the experiment with our generative model.  We also indicate the reduction factor (reduc.) and the resulting maximum number of nodes ($n_{max}$) for each level.

\begin{table}[H]
    \centering
    \caption{Densities, training time, and maximum number of nodes for all datasets and all levels.}
    \begin{tabular}{|c|c|c|c|c|c|c|}
    \hline
       Dataset  &  level & $n_{\max}$ & Reduc. & $D_{\text{data}}$ & $D_\gH$ & Time (s) \\
    \hline 
        Zinc250k   & 0 & 38 & 4 & 0.094 & 0.247 & 991 \\
        SBM20k & 0 & 194  & 3 & 0.083 & 0.213 & 661 \\
        SBM20k & 1 & 67  & 3 &  0.181 & 0.410 & 478 \\
        SBM20k & 2 & 23  & - &  - & - & - \\
        Reddit12k & 0 & 1499 & 3 & 0.005 & 0.017 & 5891 \\
        Reddit12k & 1 & 500 & 3 & 0.014 & 0.059 & 1516 \\
        Reddit12k & 2 & 167 & 3 & 0.048 & 0.197 & 1281 \\
        Reddit12k & 3 & 56  - &  - & - & - & -\\
    \hline 
    \end{tabular}
    \label{tab:gamma_full}
\end{table}

\subsection{Model Architecture}\label{ap:HDFM}

\subsubsection{GNNs architecture}

Our model is built on Simple Iterative Denoising \citep{sid}. We use the same architecture and reproduce the architecture description.

The denoisers are Graph Neural Networks, inspired by the general, powerful, scalable (GPS) graph Transformer.

A single layer is described as: 

\begin{align}
    \tilde{\mX}^{(l)}, \tilde{\mE}^{(l)} &= \text{MPNN}(\mX^{(l)}, \mE^{(l)}), \\
    \mX^{(l+1)} &= \text{MultiheadAttention}({\tilde{\mX}^{(l)}} + \mX^{(l)}) + {\tilde{\mX}^{(l)}} \\
    \mE^{(l+1)} &= \tilde{\mE}^{(l)} + \mE^{(l)}
\end{align}

where, $\mX^{(l)}$ and $\mE^{(l)}$ are the node and edge hidden representations after the $l$\textsuperscript{th} layer. The \emph{Multihead Attention} layer is the classical multi-head attention layer from \citet{attention_is_all}, and MPNN is a Message-Passing Neural Network layer described hereafter. 

The MPNN operates on each node and edge representations as follow:

\begin{align}
    \vh^{l}_{i, j} &= \text{ReLU}(\mW^{l}_{src}\vx^{l}_i + \mW^{l}_{trg}\vx^{l}_j + \mW^{l}_{edge}\ve^{l}_{i, j}) \\
    \ve^{l+1}_{i, j} &= \text{LayerNorm}(f_{\text{edge}}(\vh^{l}_{i, j}) \\
 	\vx^{l+1}_{i} &= \text{LayerNorm}\left(\vx^{l}_{i} + \sum_{j \in \mathcal{N}(i)}f_{\text{node}}(\vh^{l}_{i, j})\right),  
\end{align}

with $\mW^{l}_{src}$, $\mW^{l}_{trg}$, and $\mW^{l}_{edge}$ denoting trainable weight matrices, and $f{\text{node}}$ and $f_{\text{edge}}$ being small neural networks.

The node hidden states $\vx_i$ and the outputs of $f_{\text{node}}$ are of dimension $d_h$, a tunable hyperparameter. The edge hidden states $\ve_{i,j}$, intermediate messages $\vh^{l}_{i,j}$, and the outputs of $f_{\text{edge}}$ have dimension $d_h/2$.

\subsubsection{Extra Graph Features}

Enhancing GNN inputs by computing additional synthetic features, including spectral embeddings, has become a widespread practice \citep{spectre, digress, sparsediff, graphle, dgae}. 
However, denoising models usually need to recompute these features before each forward pass, both during training generation, which is computationally expensive.
Instead, we compute the extra features over the spanning supergraph. Therefore, we need to compute them only once during preprocessing. 
We use three extra features: eigen features, graph size, and partition size. All these features are concatenated to the input node attributes. 

\paragraph{Eigen features}
First, we use the eigenvectors associated with the $k$ lowest eigenvalues of the graph normalized Laplacian of the spanning supergraph, which are (up to normalization) identical to the eigenvectors of the corresponding coarse graph, except in level $L$, where we recompute the eigenvector of the noisy graph at each training step. Otherwise, and unlike other denoising models, the computation of the eigenvectors is a single preprocessing step.

\paragraph{Graph size}
We use the graph size represented as a ratio between the size of the current graph and the largest graph in the dataset $n/n_{max}$. 
We concatenate the (same) value to all nodes in a graph. 

\paragraph{Partition size}
Similarly, we use the partition size, which is represented as a ratio between its size and the largest partition in the dataset. 
All nodes in the same partition (in a partition with the same number of nodes) have the same value.

\subsection{Sampling Algorithm}\label{ap:sampling_algo}

\begin{algorithm}[H]
  \caption{Hierarchical Sampling}
  \begin{minipage}{0.6\linewidth}
  \label{algo:sample}
  \begin{algorithmic}
    \STATE {\bfseries Input:} $\{p_0^\ell\}_0^L$, $p_{n^L}$ and, $\{g_\theta^\ell\}_0^L$, which instantiate Equation \ref{eq:update_rule}. 
    \STATE $n^L \sim p_{n^L}$
    \STATE $m^L = n^L \times (n^L-1)$
    \STATE $\mE^L = \texttt{to\_dense} ( \vone_{n^L \times n^L} - I_{n^L})$
    \FOR{$\ell=L$ {\bfseries to} $0$} 
    \STATE $\va^\ell_0, \mX_0^\ell \sim p^\ell_0, \quad \text{of sizes } n^{\ell}, m^{\ell} $ \hfill {\footnotesize\textit{// Discrete Flow Matching Steps}}
      \FOR{$t=0$ {\bfseries to} $1$}
        \STATE $\va_{t+\Delta_t}^\ell, \mX_{t+\Delta_t}^\ell \sim g_\theta^\ell(\mE^\ell, \va_{t+\Delta_t}^\ell, \mX_{t+\Delta_t}^\ell) $ 
      \ENDFOR
      \STATE$\mA^\ell_\gG = \texttt{to\_dense} (\mE^\ell , \va_{1}^\ell)$
      \STATE $\mX_{\gG}^\ell = \mX_{1}^\ell$
      \IF{$L > 0$} 
        \STATE$\mC_{\ell-1} = \texttt{repeat\_interleave}(\mI_{n^{\ell}}, \mX^{\ell})$ \hfill {\footnotesize\textit{// Graph Expansion}}
        \STATE$\mA_\gH^{\ell-1} = \mC_{\ell-1}(\mA_\gG^{\ell} + \mI_{n^{\ell}}) _\gG\mC_{\ell-1}^T - \mI_{n^{\ell-1}}$
        \STATE $\mE^{\ell-1} = \text{dense\_to\_sparse}(\mA_\gH^{\ell-1})$
        \STATE $n^{\ell-1}, m^{\ell-1} =  \mC_{\ell-1}.\text{size}(0), \mE_{\ell-1}.\text{size}(0)$
    \ENDIF
    \ENDFOR
    \STATE $\gG = (\mX_{\gG}^0, \mA^0_\gG)$
\end{algorithmic}
\end{minipage}
\end{algorithm}

\section{Related Work}\label{ap:related_work}

\subsection{Generative models}\label{ap:rw_gen}

One of the main challenges in graph representation and generation is that a graph can be represented in multiple ways. There can be up to $n!$ different representations of the same graph, resulting from the $n!$ possible node permutations. Two main approaches address the multiplicity of equivalent representations: models that operate sequentially and equivariant models. 

\paragraph{Sequential models} generate graphs by auto-regressively adding nodes, edges, or subgraphs. 
To limit the number of different sequences that represent a single graph, most of these models use a Breadth-First Search (BFS) approach \citep{graphrnn, graphaf, graphdf, gran, grapharm}. While canonical representations exist for specific domains, e.g. canonical SMILES for molecular graphs \citep{gomez-bombarelli_automatic_2018, kusner_grammar_2017}, methods based on general graph canonization \citep{graphgen} fail for large graphs (see experiments in \citet{graphle}).
Subgraph aggregation \citep{jtvae, hiervae}, sometimes described as hierarchical, falls into this category. It requires listing the set of all possible substructures and connections between them, which is feasible only for some specific applications such as molecular graphs. 

\paragraph{Equivariant models} address the node permutation issue by ensuring a unique computational graph for all possible instantiations of the same object. 
These models have been developed within various generative frameworks such as GANs \citep{gggan, spectre} or Normalizing Flows \citep{graphnvp, moflow, gnf}. Recently, equivariant denoising models used in score-based diffusion \citep{edp-gnn, gdss}, discrete diffusion \citep{ddpm, digress}, diffusion bridges \citep{drum}, and Flow Matching \citep{CatFlow} have significantly improved graph generation for small graphs. 
While less known, equivariant quantized auto-encoders have also demonstrated competitive performance \citep{dgae,glad}.
However, equivariant models are not without limitations. They operate by producing predictions for all node pairs and rely on dense graph representations, which prevents them from scaling to large graphs. 

\paragraph{Sparse Equivariant Model}
SparseDiff \citep{sparsediff} and EDGE \cite{EDGE} are recent diffusion models that address scalability challenges in equivariant models. Both models share similar objectives and generative framework as ours. We present a comparative analysis to highlight their differences from ours.  

Both models construct a sparse structure by selecting a subset of "active" nodes, with 'active edges' defined by their induce complete graph. SparseDiff randomly selects "active" nodes, while EDGE determines them based on predicted changes in degree. As a result, in both models, for the same graph, the sparse structure varies at each iteration. In contrast, our model maintains a fixed structure of 'active edges' established by the spanning supergraph. 

The sparse strategies employed by SparseDiff and EDGE lead to a reduced number of function evaluations per node and edge, which is critical for denoising models and represents a fraction of the total diffusion steps. SparseDiff compensates for this by increasing function evaluations, denoising the graph in blocks of nodes and edges, ultimately considering all node pairs. This approach results in a number of function evaluations (NFE) that is inversely proportional to the edge fraction in each block. EDGE does not compensate for excluded edges, which, we assume, is the reason for its comparatively low generative performance (see \ref{sec:eval}).  

At generation, we still leverage our fixed graph structure. In our model, the additional computational cost arises from generating graphs at lower levels. However, since these graphs are significantly smaller and computational cost scales quadratically with the number of nodes, this cost is comparatively small.

\paragraph{Hierarchical models}
A couple of works follow a similar hierarchical approach to ours, but differ in their coarsening and generative strategies.
HiGen \citep{higen} coarsens the graph using a modularity objective to partition the graph into communities. 
The community-based partitions often produce dense coarse graphs, which strongly limits the effectiveness of the strategy, both in terms of the information extracted from the coarsening and the resulting sparsity used for generation.
Moreover, the model leverages an autoregressive method to generate graphs at each level, which is inefficient for large graphs.

\citet{graphle} use a local coarsening scheme involving edge or neighborhood contraction. To prevent the coarsening from pooling a child node into two parent nodes, by contracting two of its adjacent edges, they sample a different contraction sequence at each training iteration.
Moreover, the method requires a low reduction rate - set to a maximum of 0.3 - necessitating sequences with many coarseness levels. In our experiments, we were not able to generate large graphs in reasonable time with this method (see Section \ref{sec:eval}). 
In contrast to these hierarchical models, our method uses a single coarsening procedure in a preprocessing step, a small number of levels, and, last but not least, maintains equivariance.

\subsection{Discrete Flow Matching for Graph}\label{ap:rw_dfm}

A concurrent submission\footnote{Acceptance status unknown at the time of submission of this article.} \citep{simGFM} proposes simGFM, a discrete flow-matching model for graphs built upon the framework of \citet{discrete_flow_matching_gat}. SimGFM introduces the rvf-denoiser, a sampling-based variant of the original vf-denoisers.

Specifically, instead of directly computing the probability velocity
\begin{equation}
u_t^i(z^i, Z_t)
=
\frac{\dot{\alpha}}{1-\alpha},\Delta_t
\left(
p_{1|t}(z^i \mid Z_t)
-
[z^i = z_t^i]
\right),
\end{equation}
simGFM first samples a candidate
\(
\hat{z}^i \sim p_{1|t}(z^i \mid Z_t)
\)
and replaces the conditional probability by its sampled indicator, yielding
\begin{equation}
u_t^i(z^i, Z_t)
=
\frac{\dot{\alpha}}{1-\alpha},\Delta_t
\left(
[z^i = \hat{z}^i]
-
[z^i = z_t^i]
\right).
\end{equation}

However, the full update step still involves sampling according to
\begin{equation}
z^i_{t+\Delta_t}
\sim
[z^i_{t+\Delta_t} = z_t^i]
+
\Delta_t,
u_t^i(z^i_{t+\Delta_t}, Z_t).
\end{equation}

As a consequence, when the update distribution is expanded, the expectation over the sampled candidate \(\hat{z}^i\) exactly recovers the original probability velocity formulation. Therefore, from a mathematical standpoint, the sampling-based rvf-denoiser used in simGFM is strictly equivalent to the original discrete flow matching formulation of \citet{discrete_flow_matching_gat}, differing only in how the expectation is estimated in practice, simGFM requiring an unnecessary additional sampling step.  

Instead, we directly adapt the framework of \citet{discrete_flow_matching_gat}, resulting in a simpler formulation, avoiding the double sampling process in each denoising step. 

\section{Evaluation}\label{ap:eval}

\subsection{Downloads}\label{ap:downloads}

To download the Stochastic Block Model 20k (SBM20k) in the pytorch geometric Dataset format: 

\hyperlink{https://drive.switch.ch/index.php/s/t5I9N8rDQCfMIVX}{https://drive.switch.ch/index.php/s/t5I9N8rDQCfMIVX}

To download the splits between training and test sets used in our experiments:
\begin{itemize}
    \item SBM20k: \hyperlink{https://drive.switch.ch/index.php/s/zhlXUa4mUKyCP3G}{https://drive.switch.ch/index.php/s/zhlXUa4mUKyCP3G}
    \item Reddit12k \hyperlink{https://drive.switch.ch/index.php/s/xIS3DMY2eUCzN8c}{https://drive.switch.ch/index.php/s/xIS3DMY2eUCzN8c}
\end{itemize}

\subsection{Conditional generation}\label{ap:cond_gen}

In conditional generation, we generate graphs $\hat{\gG}$ conditionally to the spanning supergraph $\gH$. In our experiment, we sample a graph from the validation set to serve as the reference graph $\gG_{ref}$. The conditional task involves generating graphs structurally similar to  $\gG_{ref}$ by conditioning on its spanning supergraph $\gH_{ref}$. Since we directly sample from an actual spanning supergraph, only the finer-level model $p_{\theta}(\gG^0|\gH^0)$ is needed for conditional generation.

We use the spectral distance to measure the distance between the reference graph and the conditionally generated graphs. 

To evaluate similarity, we use the spectral distance between the reference graph and the conditionally generated graphs. The spectral distance between graphs $\gG_1$ and $\gG_2$ is defined as  $\sum_{i=1}^K |\lambda_{\gG_1}(i)- \lambda_{\gG_2}(i)|$, where $\lambda(i)$ is the $i$\textsuperscript{th} eigenvalue of the Laplacian sorted in non-decreasing order. To ensure that we can compute this distance between graphs of different sizes, we compute this distance only on the $K$ smallest eigenvalues. 

\begin{table}[H]
    \centering
    \small
    \caption{Spectral distances}
    \begin{tabular}{c | c c c}
    \hline
       Dataset & Test set & Cond. gen. & Ratio \\
    \hline 
        SBM20k &  4.57 $\pm$ 1.55 & 0.49 $\pm$ 0.06  & 0.114 $\pm$ 0.024 \\
        Reddit &  3.57 $\pm$ 1.10 & 0.57 $\pm$ 0.31  & 0.161 $\pm$ 0.069 \\
    \hline 
    \end{tabular}
    \label{tab:cond}
\end{table}

We computed the spectral distance between the reference graph and 100 conditionally generated graphs, and compared it with the distances between the reference graph and graphs from the test set. This experiment was repeated 10 times with different reference graphs. Table \ref{tab:cond} shows the average distances and ratios. On average, the generated graphs were approximately 10 times closer to the reference graph than those in the test set, demonstrating the effectiveness of our model's conditional setting. 
\subsection{Datasets}\label{ap:dataset}

\paragraph{Zinc250}
The Zinc250k dataset is a subset of the Zinc database \citep{zinc}. 
It includes 250,000 molecules with up to 38 heavy atoms of nine types. 
We used the kekulized representation of this dataset.

\paragraph{Qm9H}
The qm9 dataset is a dataset \citep{qm9} that includes 133'885 molecules with up to 29 atoms of 5 types. 
We use the version with explicit hydrogen which is much more challenging than the standard benchmark for graph generation, which has at most 9 atoms of 4 types. 
We used the kekulized representation of this dataset.

\paragraph{SBM20k}
The original Stochastic Block Model is a synthetic dataset made of community graphs, with 2 to 5 communities, each containing 20 to 40 vertices. 
The intra-community and inter-community edge probability are 0.3 and 0.005, respectively. 
We create a dataset, SBM20k, with exactly the same characteristics but 20000 instances instead of 200.

\paragraph{Ego} 
The dataset contains 757 3-hop ego networks extracted from the Citeseer
network.  Nodes represents documents and edges represent citation relationships.
It contains graph with up to 399 nodes.

\paragraph{Reddit12k} 
Reddit\citep{reddit} contains graphs extracted from the Reddit networks. It is part of the TUDataset. We extracted the graphs containing up to 1500 nodes.
This results in a dataset that collects 11551 graphs with up to 1499 nodes.

\subsection{Evaluation Procedure}\label{ap:eval_metrics}

\subsubsection{Time}

We report wall-clock time in seconds. All timing evaluations were run on a single compute node equipped with 128 CPU cores and an NVIDIA GeForce RTX 3090 GPU (25 GB memory).

\subsubsection{Molecular Datasets}

We report the Fréchet ChemNet Distance (FCD) \citep{fcd}, which measures the similarity between generated and real molecules in chemical space; the Neighborhood Subgraph Pairwise Distance Kernel (NSPDK) \citep{nspdk}, which evaluates structural similarity between graphs; the validity (without correction), defined as the proportion of chemically valid molecules; and the sampling time required to generate 10,000 molecules.

\paragraph{Spits} We used the test sets provided by \citet{gdss}. 

\subsubsection{Large unattributed dataset}

We report standard Maximum Mean Discrepancy (MMD) metrics based on degree distribution, clustering coefficients, and spectral propertie to compare the distributions of graph statistics between
generated and test graphs \citep{graphrnn}. 

\paragraph{SBM} We additionally report validity, which in this case measures whether a generated graph lies within a 90\% confidence interval of the true SBM distribution; consequently, 10\% of graphs sampled from the true distribution would be classified as invalid by this criterion. Baseline results are obtained by adapting the official implementations to the new dataset while following the same specifications used for the standard SBM benchmark. For SID, we employ our own architecture and adapt the sampling procedure accordingly.
The MMDs are computed between the 1000 graphs in the test set and 1000 generated graphs.

\paragraph{Ego} We use 20\% of the dataset as the test set. The remaining graphs are further split, with 20\% used for validation and the rest for training. MMD metrics are computed between graphs from the test set and an equal number of generated graphs.

\paragraph{Reddit} MMD metrics are computed between 100 graphs randomly sampled from the test set and an equal number of generated graphs.

\subsection{Additional and Detailed Results}

\setlength{\tabcolsep}{3pt}
\begin{table}[H]
\scriptsize
\begin{center}
\begin{sc}
\caption{\footnotesize Generation results on the \texttt{QM9H} dataset. NSPDK results rescaled by $10^{3}$.}

        \begin{tabular}{ c c  c c c c}
        \toprule
         \textbf{Model} & NFE &  \textbf{Valid \%}$\uparrow$ & \textbf{FCD} $\downarrow$ & \textbf{NSPDK}$\downarrow$& \textbf{Time} (s) $\downarrow$  \\
        \midrule
        DFM & 128& $97.17 \pm 0.12$ &  $0.251 \pm 0.005$ & $0.438 \pm 0.025$ &  $842.5 \pm 12.8$ \\
        HDFM & 128 & $95.78 \pm 0.03$ &  $0.231 \pm 0.026$ & $0.533 \pm 0.027$ & $333.5 \pm 7.6$ \\ 
        HDFM & 32&  $94.22 \pm 0.31$ &  $0.236 \pm 0.007$ & $0.469 \pm 0.019$ & $174.1 \pm 2.3$ \\
        \bottomrule
        \end{tabular}
        \label{tab:qm9H_ap}
\end{sc}
\end{center}
\end{table}

\setlength{\tabcolsep}{3pt}
\begin{table}[H]
\scriptsize
\begin{center}
\begin{sc}
\caption{\footnotesize Generation results on the \texttt{Zinc250k} datasets. NSPDK results $\times 10^{3}$. Generation time in seconds.}
        \begin{tabular}{ c c c c c c}
         \toprule
         Model & NFE &  FCD $\downarrow$ & NSPDK$\downarrow$ & Valid \%$\uparrow$ & Time (s) $\downarrow$  \\
        \midrule

        DFM & 128 & $1.48 \pm 0.01$ & $0.554 \pm 0.014$ & $99.41 \pm 0.10$& $1998.8 \pm 2.5\hphantom{00}$ \\
        HDFM & 128 & $1.23 \pm 0.02$ & $0.433 \pm 0.025$ & $99.30 \pm 0.07$ & $394.3 \pm 8.6$ \\
        HDFM & 64 & $1.26 \pm 0.01$ & $0.500 \pm 0.009$ & $99.17 \pm 0.09$ &  $238.7 \pm 2.4$ \\ 
        HDFM & 32 &  $1.34 \pm 0.01$ & $0.637 \pm 0.021$ & $98.82 \pm 0.04$ & $167.6 \pm 2.2$ \\
        HDFM & 128 & $1.25 \pm 0.02$ & $0.376 \pm 0.023$ & $99.28 \pm 0.03$ & $\hphantom{0}237.0 \pm 12.1$ \\
        
        \bottomrule
        \end{tabular}
        \label{tab:zinc_ap}
\end{sc}
\end{center}
\end{table}

\setlength{\tabcolsep}{3pt}
\begin{table}[H]
\scriptsize
\begin{center}
\begin{sc}
\caption{\footnotesize Generation results on the \textbf{SBD20k} dataset (MMD results rescaled by $10^{3}$).}

        \begin{tabular}{ c c  c c c c c c}
         \toprule
         \textbf{Model} & NFE & \textbf{Valid}$\uparrow$ &\textbf{Deg.} $\downarrow$ & \textbf{Clust.}$\downarrow$ & \textbf{Spec. }$\uparrow$ & \textbf{Time} (s) $\downarrow$ & \textbf{Uniq+novel} (s) $\uparrow$  \\
        \midrule

        &  EDGE & & 154.16 $\pm$ 3.79\hphantom{00} & 766.79 $\pm$ 19.24\hphantom{0} &  10.58 $\pm$ 0.12\hphantom{0} & 350 $\pm$ 8 & \\
        
        &  DiGress & & 4.85 $\pm$ 6.15 & 5.92 $\pm$ 1.97 & 28.30 $\pm$ 14.4\hphantom{0}  & 3906 $\pm$ 53 & \\ 
        
        &  SparseDiff & & 1.10 $\pm$ 0.34 &  5.25 $\pm$ 0.50 & 19.11 $\pm$ 4.52\hphantom{0} & 11089 $\pm$ 640 \\\
        
        DFM & 128 & $51.63 \pm 1.71$ &  $4.813 \pm 0.140$ & $4.921 \pm 0.144$ & $1.001 \pm 0.032$ & $2834.4 \pm 1.2\hphantom{0}$ & $100.00 \pm 0.00$ \\
        HDFM & 128 & $70.40 \pm 0.96$ &  $0.147 \pm 0.039$ & $2.055 \pm 0.022$ & $0.301 \pm 0.016$ & $\hphantom{0}489.8 \pm 17.3$ & $100.00 \pm 0.00$ \\
        HDFM & 32 & $68.80 \pm 0.54$ &  $0.241 \pm 0.028$ & $2.057 \pm 0.025$ & $0.302 \pm 0.018$ & $160.7 \pm 3.5$ & $100.00 \pm 0.00$ \\

        HDFM-R  & 128 & $85.00 \pm 1.30$ &  $0.184 \pm 0.031$ & $2.130 \pm 0.026$ & $0.327 \pm 0.013$ & $187.7 \pm 7.7$ & $100.00 \pm 0.00$ \\
        
        \bottomrule
        \end{tabular}
        \label{tab:SBM_ap}
\end{sc}
\end{center}
\end{table}

\setlength{\tabcolsep}{3pt}
\begin{table}[H]
\scriptsize
\begin{center}
\begin{sc}
\caption{Generation results on \texttt{Reddit12k} datasets. MMDs results rescaled by $10^{3}$. Generation time in seconds.}
\begin{tabular}{c cc c c c c}
\toprule
& \textbf{Model} &  \textbf{degree}$\downarrow$ & \textbf{clust.} $\downarrow$ & \textbf{spect.}$\downarrow$ & \textbf{Time}$\downarrow$  \\
\midrule
 EDGE & 1000 & 154.16 $\pm$ 3.79 & 766.79 $\pm$ 19.24 & 37.63 $\pm$ 1.63 & 172 $\pm$ 1\hphantom{0} \\

 SparseDiff & 10000 &\hphantom{00}93.73 $\pm$ 10.80 & \hphantom{0}68.46 $\pm$ 39.01 &  124.96 $\pm$ 14.24 & 7234 $\pm$ 859 \\
HDFM & 128 & $0.32 \pm 0.07$ & $3.35 \pm 0.86$ & $3.34 \pm 0.18$ & $107 \pm 6$ \\
HDFM & 32 & $0.40 \pm 0.08$ & $2.31 \pm 0.37$ & $3.62 \pm 0.26$ & $47 \pm 4$ \\ 
\bottomrule
\end{tabular}
\label{tab:reddit_ap}
\end{sc}
\end{center}
\end{table}

\section{Visualizations}

We provide visualizations comparing data and generated graphs. 

\subsection{SBM20k}

\begin{figure}[H]
    \centering
    \begin{tabular}{|ccc|ccc|}
    \hline
      \multicolumn{3}{|c|}{Generated graphs}   &
      \multicolumn{3}{|c|}{Real graphs - SBM20k}
     \\
    \hline
           
        \includegraphics[width=0.15\textwidth]{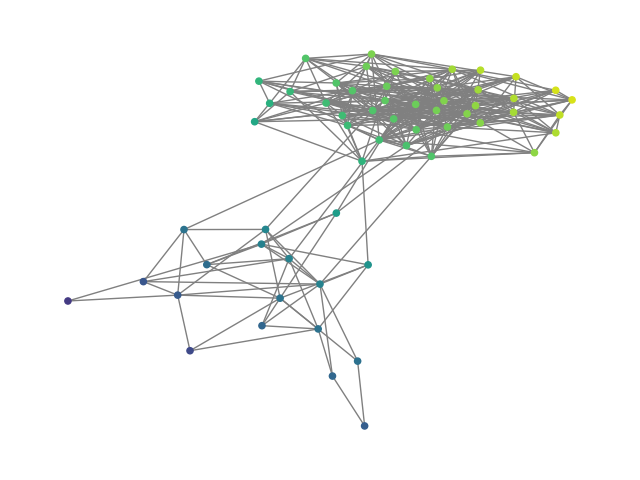} &
        \includegraphics[width=0.15\textwidth]{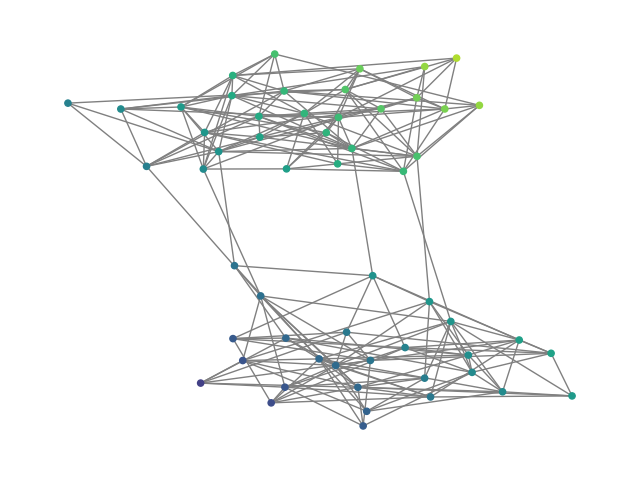} &
        \includegraphics[width=0.15\textwidth]{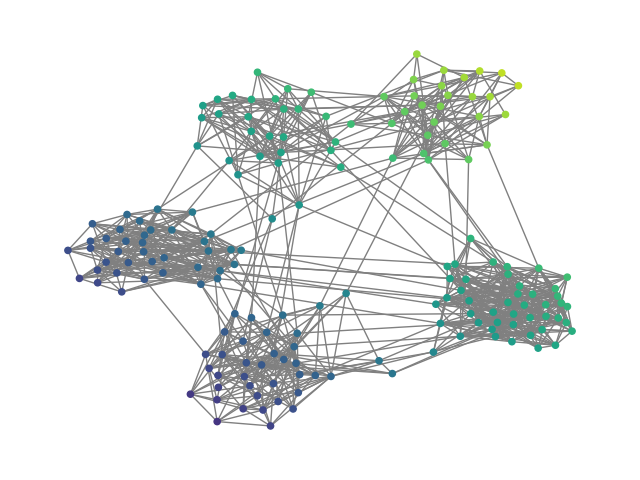} &
        \includegraphics[width=0.15\textwidth]{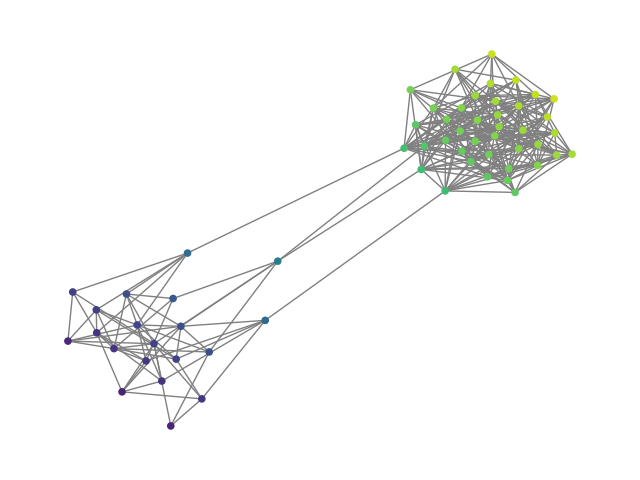} &
        \includegraphics[width=0.15\textwidth]{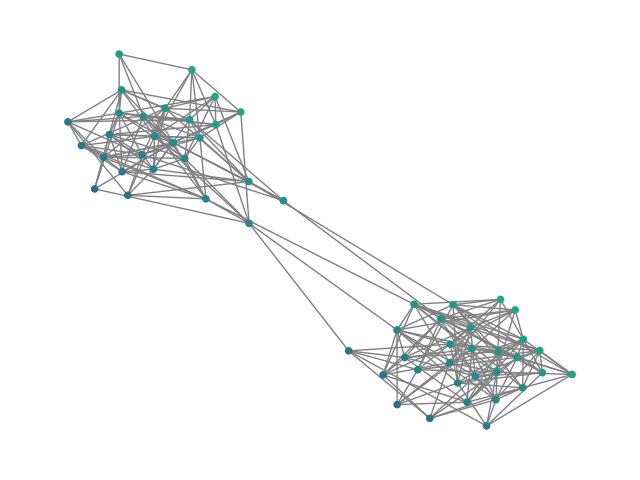}&
        \includegraphics[width=0.15\textwidth]{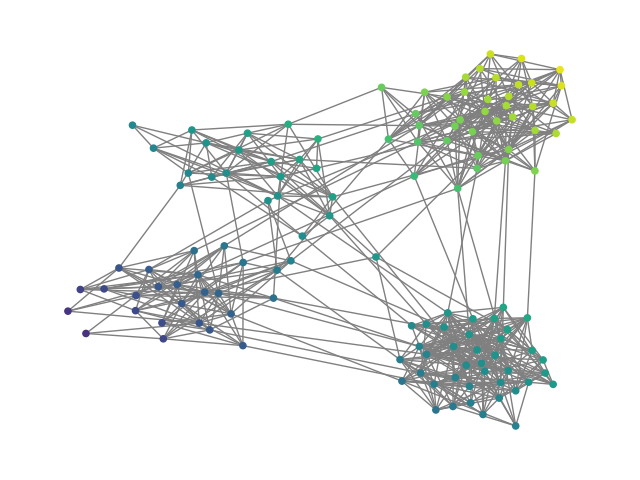} 
        \\
        \includegraphics[width=0.15\textwidth]{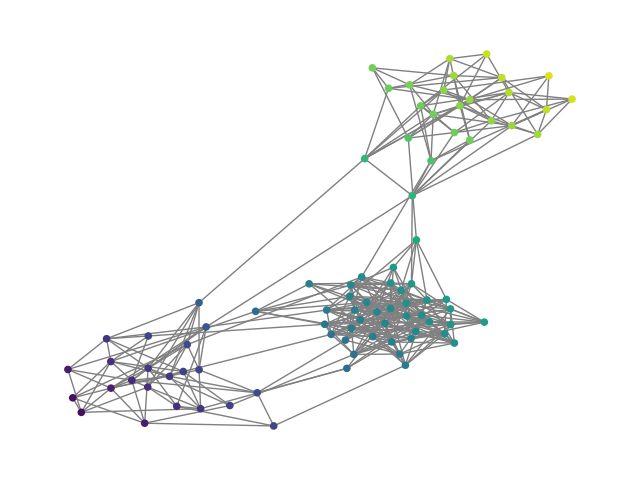} &
        \includegraphics[width=0.15\textwidth]{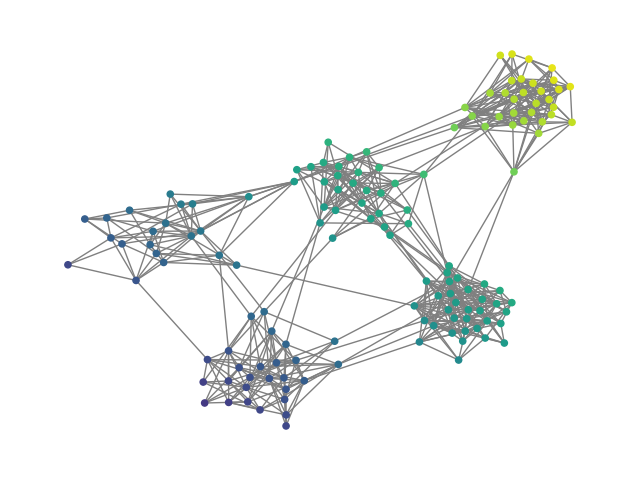} &
        \includegraphics[width=0.15\textwidth]{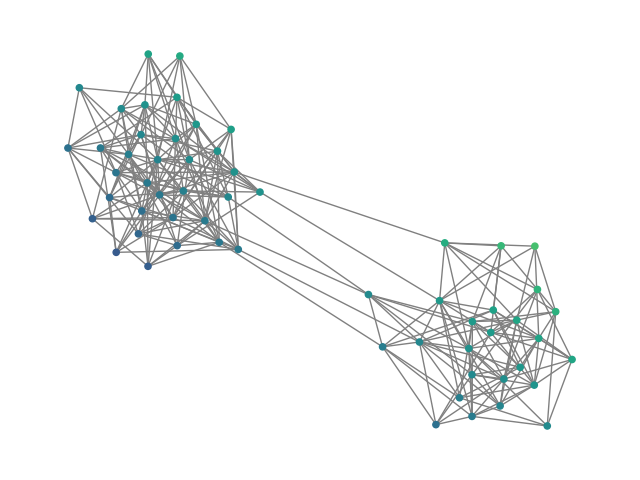} &
        \includegraphics[width=0.15\textwidth]{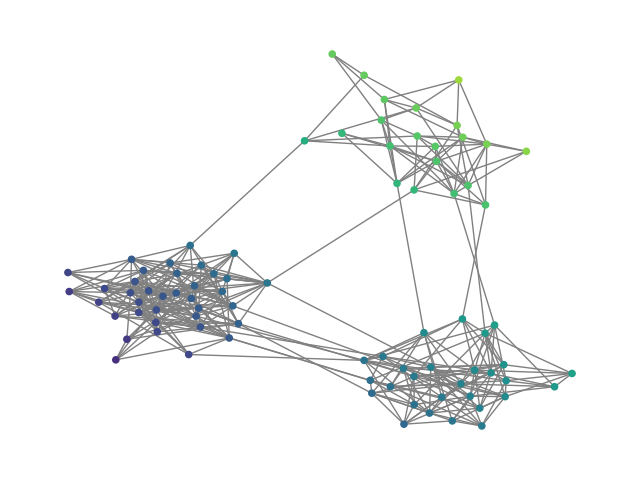} &
        \includegraphics[width=0.15\textwidth]{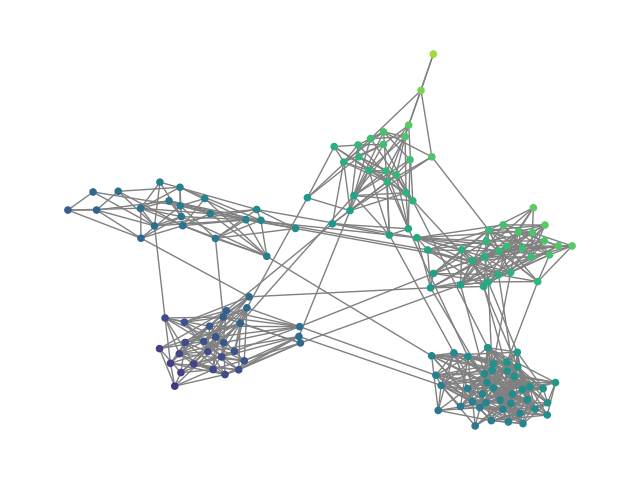}&
        \includegraphics[width=0.15\textwidth]{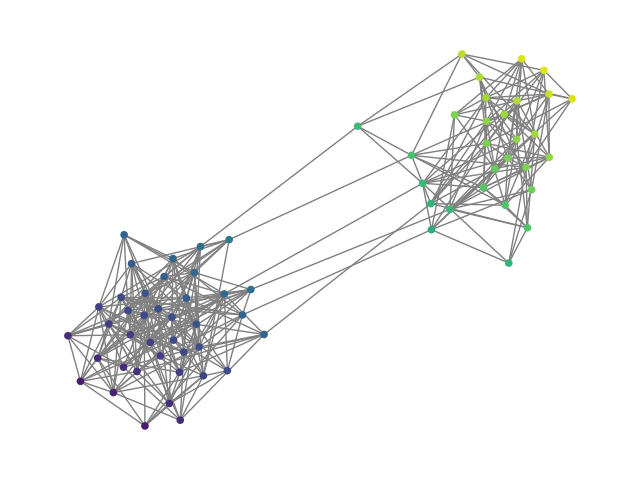} 
        \\
        \includegraphics[width=0.15\textwidth]{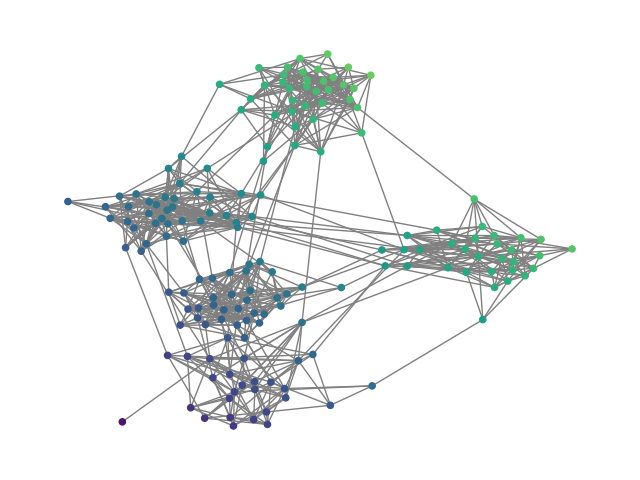} &
        \includegraphics[width=0.15\textwidth]{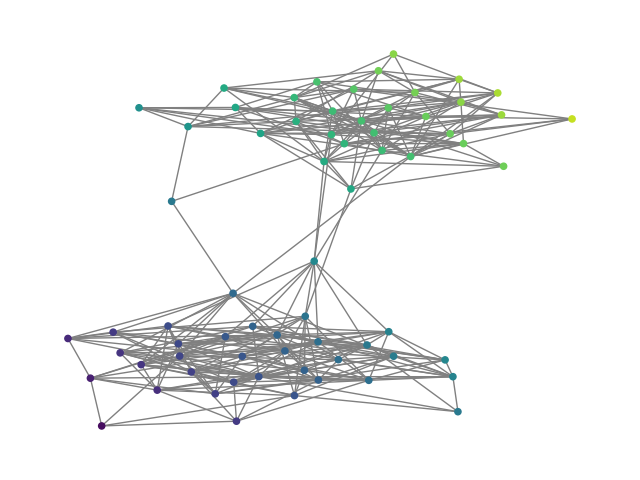} &
        \includegraphics[width=0.15\textwidth]{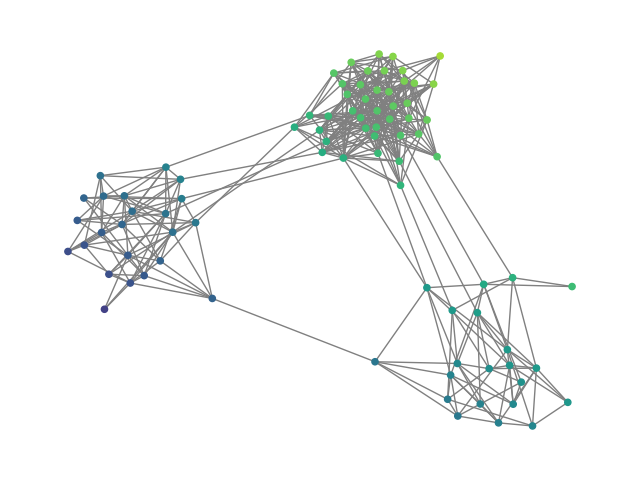} &
        \includegraphics[width=0.15\textwidth]{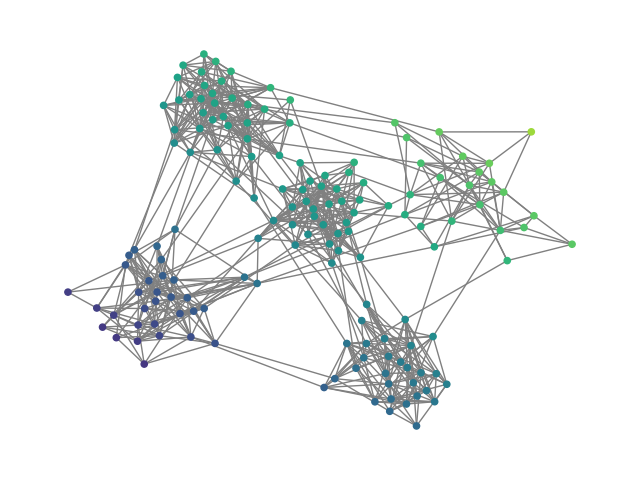} &
        \includegraphics[width=0.15\textwidth]{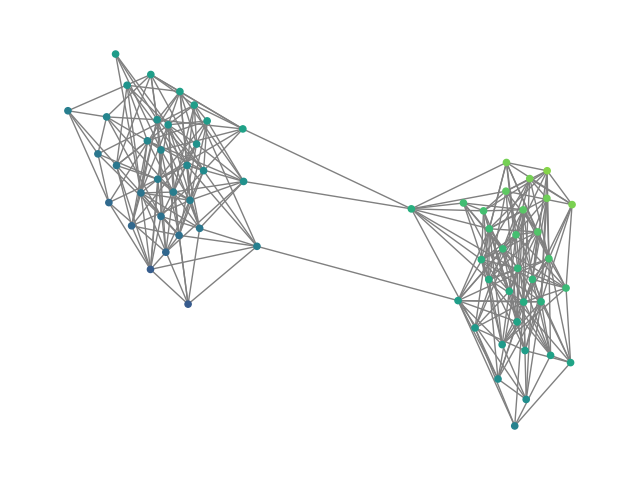}&
        \includegraphics[width=0.15\textwidth]{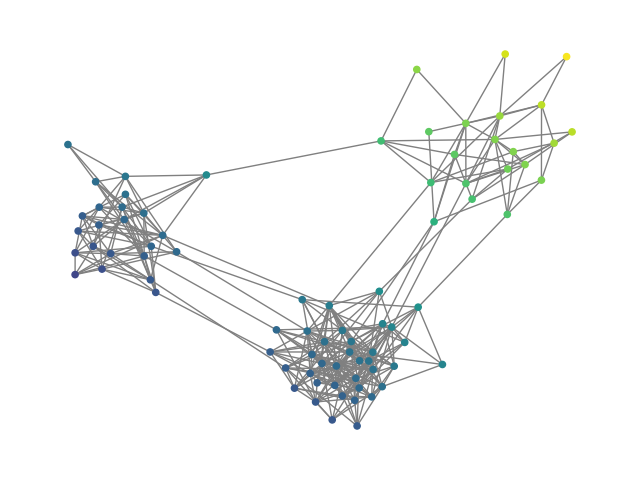} 
        \\
        \includegraphics[width=0.15\textwidth]{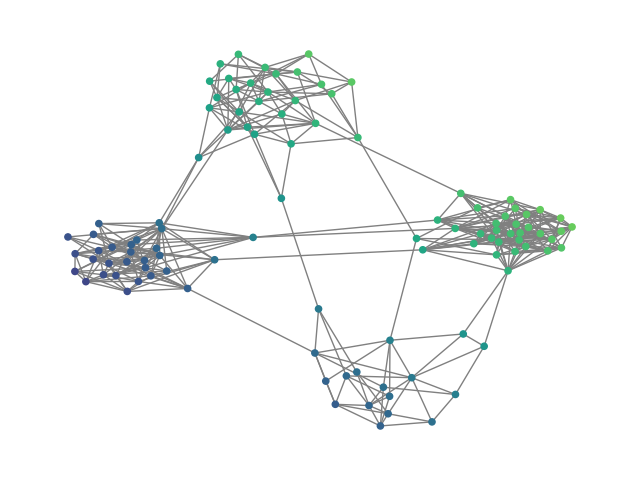} &
        \includegraphics[width=0.15\textwidth]{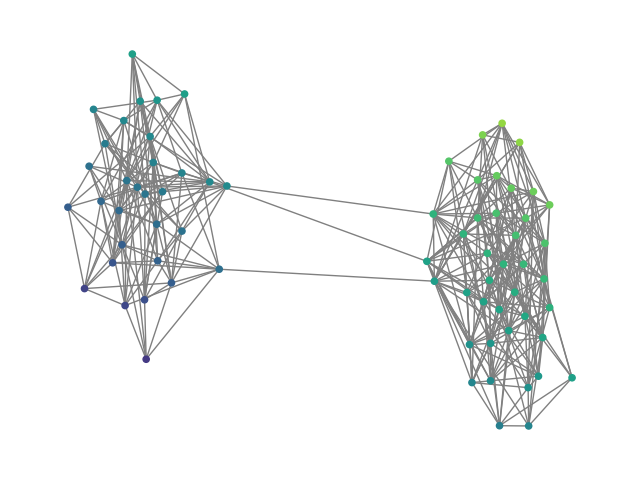} &
        \includegraphics[width=0.15\textwidth]{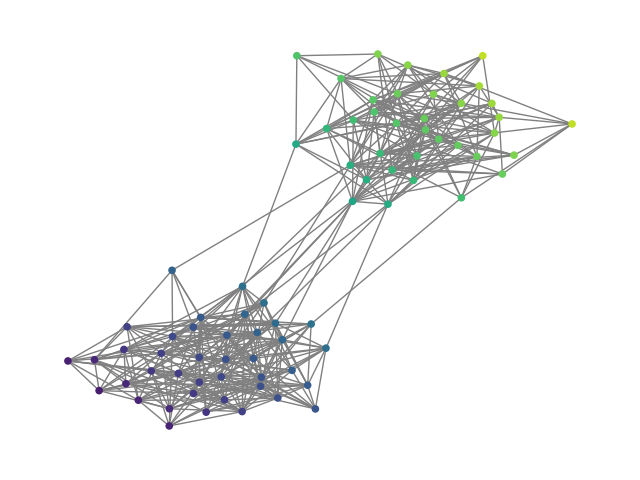} &
        \includegraphics[width=0.15\textwidth]{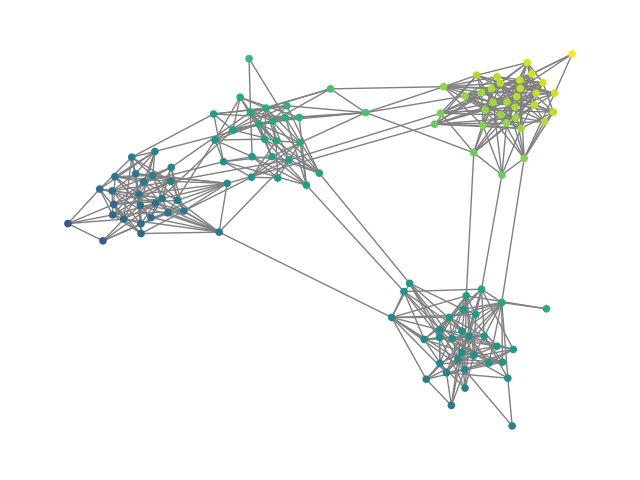} &
        \includegraphics[width=0.15\textwidth]{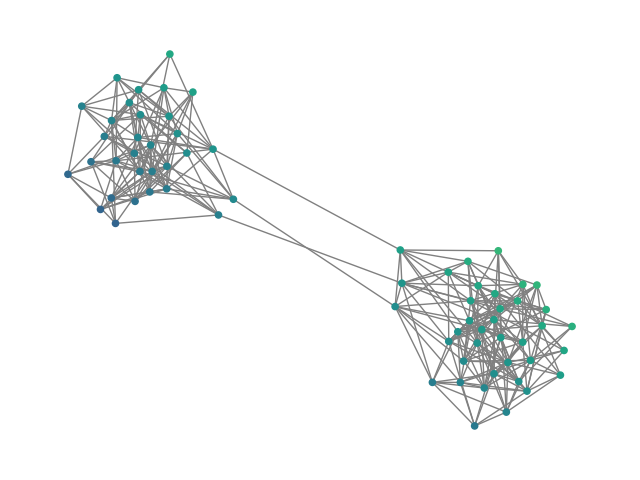}&
        \includegraphics[width=0.15\textwidth]{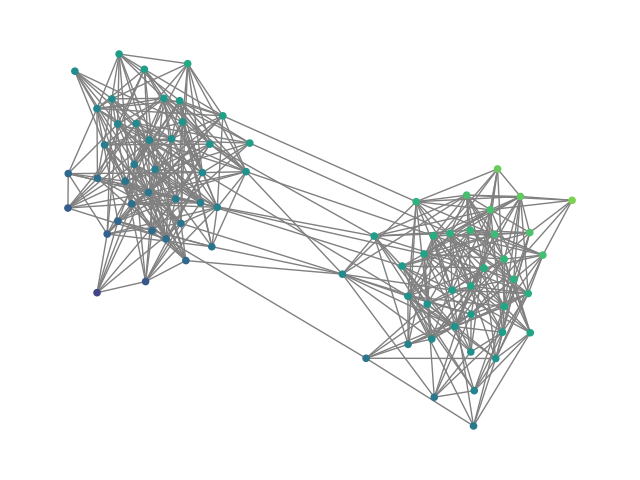} 
        \\
        \includegraphics[width=0.15\textwidth]{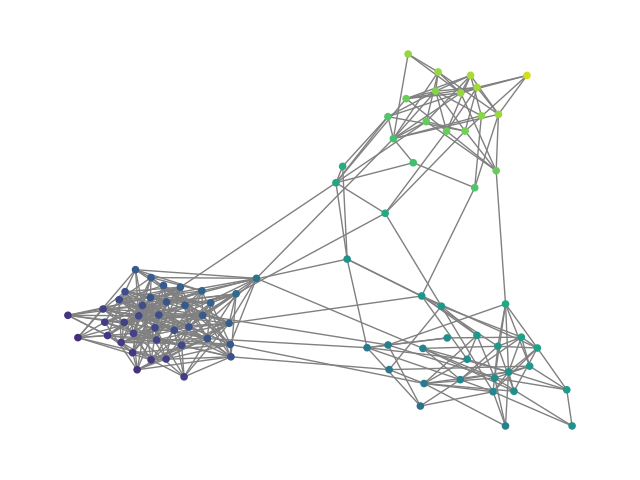} &
        \includegraphics[width=0.15\textwidth]{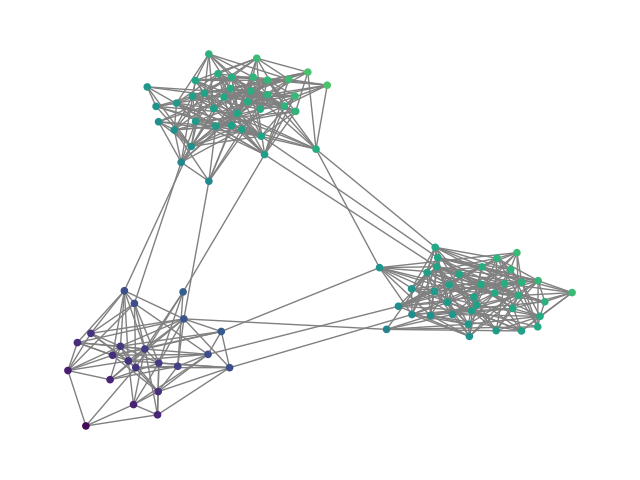} &
        \includegraphics[width=0.15\textwidth]{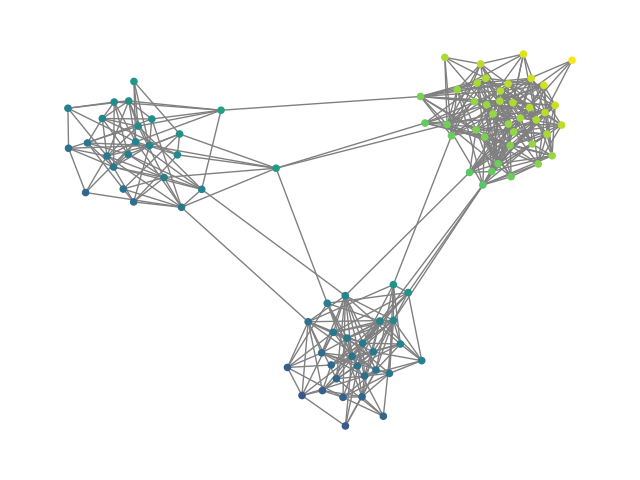} &
        \includegraphics[width=0.15\textwidth]{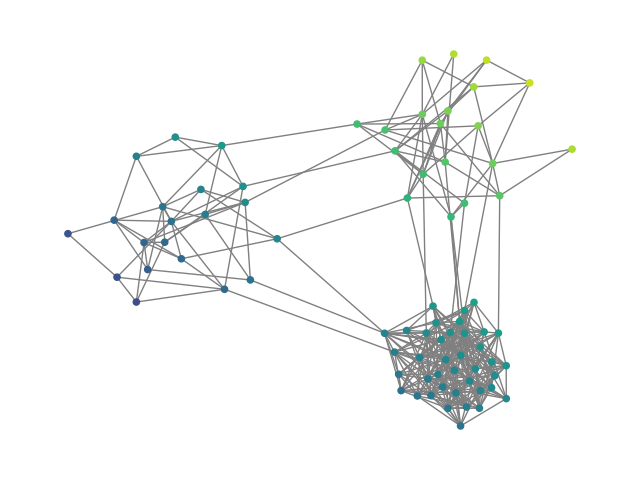} &
        \includegraphics[width=0.15\textwidth]{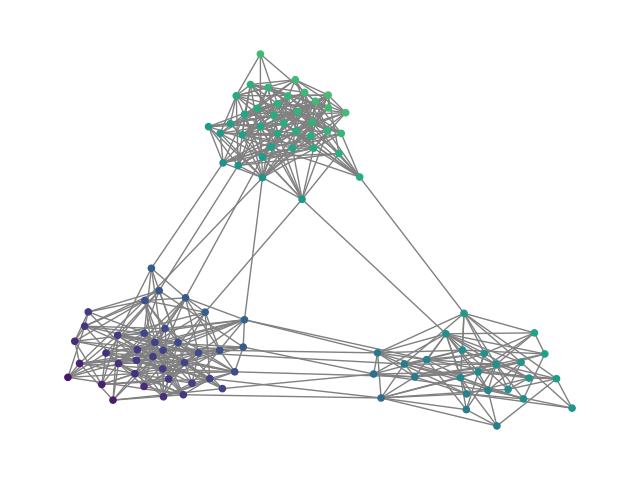}&
        \includegraphics[width=0.15\textwidth]{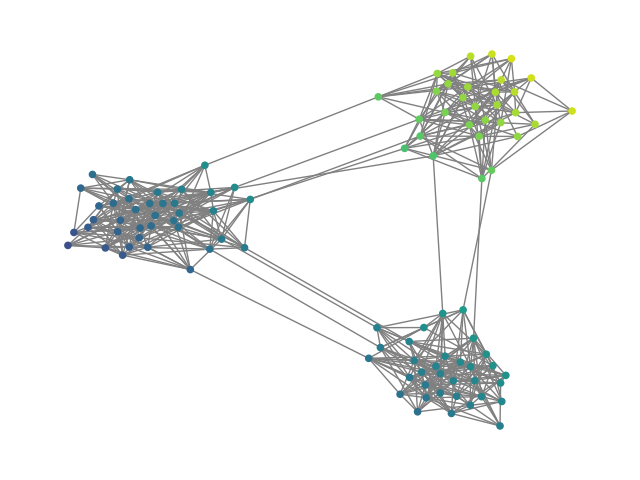} 
        \\
        \includegraphics[width=0.15\textwidth]{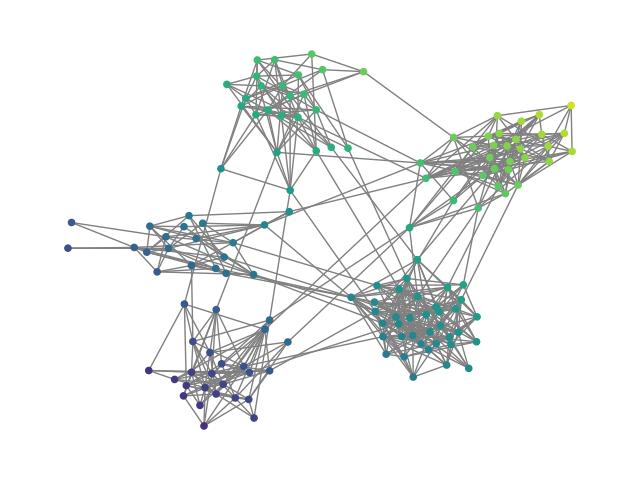} &
        \includegraphics[width=0.15\textwidth]{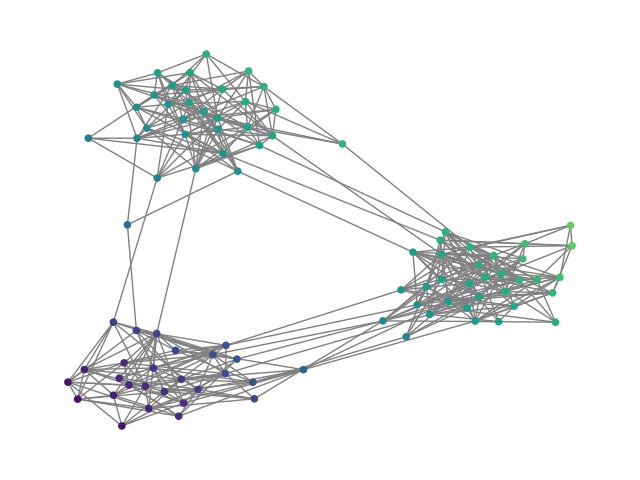} &
        \includegraphics[width=0.15\textwidth]{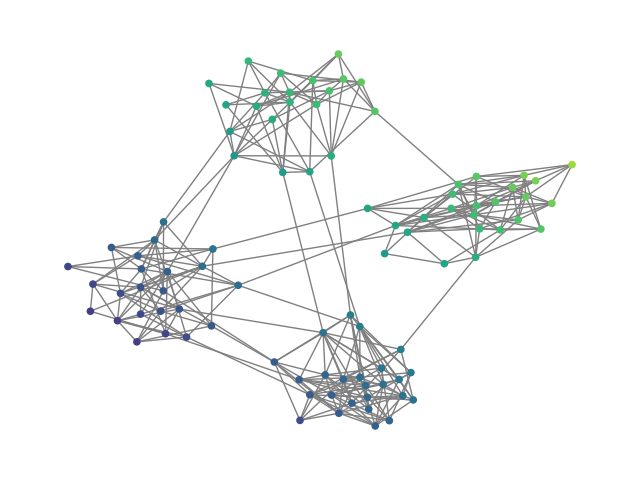} &
        \includegraphics[width=0.15\textwidth]{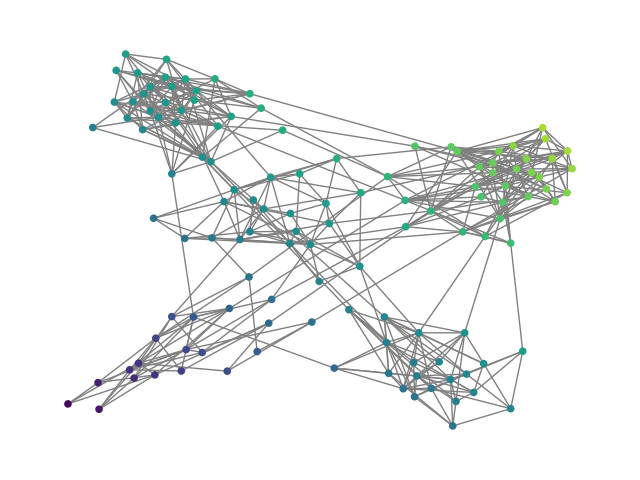} &
        \includegraphics[width=0.15\textwidth]{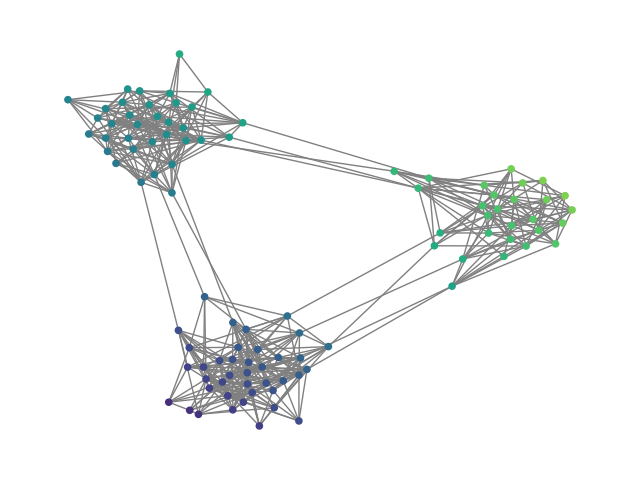}&
        \includegraphics[width=0.15\textwidth]{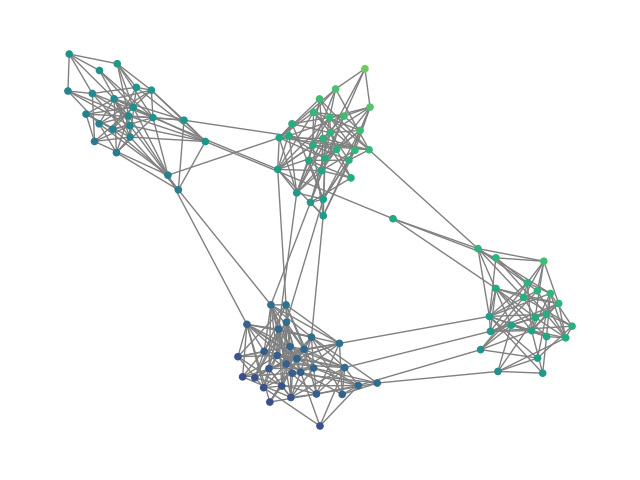} 
        \\
        \hline
        \end{tabular}
    \caption{SBM20k: Comparison of generated graphs with graphs from the dataset.}
    \label{fig:sbm20K}
\end{figure}

\subsection{ego}

\begin{figure}[H]
    \centering
    \begin{tabular}{|ccc|ccc|}
    \hline
      \multicolumn{3}{|c|}{Generated graphs}   &
      \multicolumn{3}{|c|}{Real graphs - ego}
     \\
    \hline
           
        \includegraphics[width=0.15\textwidth]{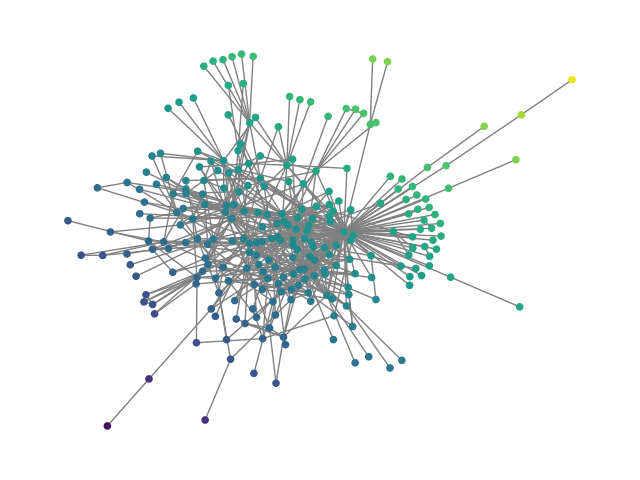} &
        \includegraphics[width=0.15\textwidth]{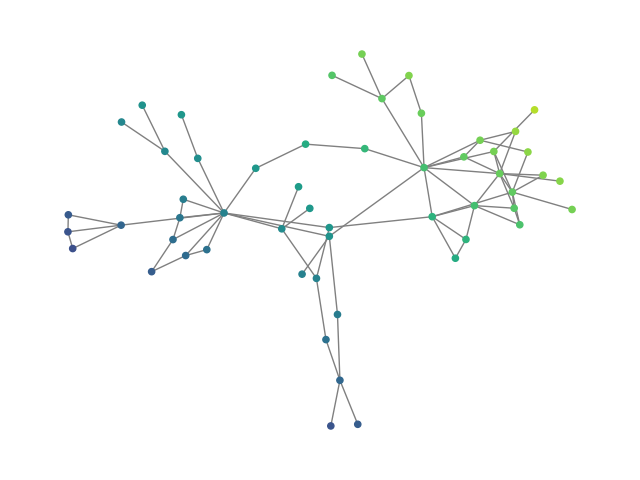} &
        \includegraphics[width=0.15\textwidth]{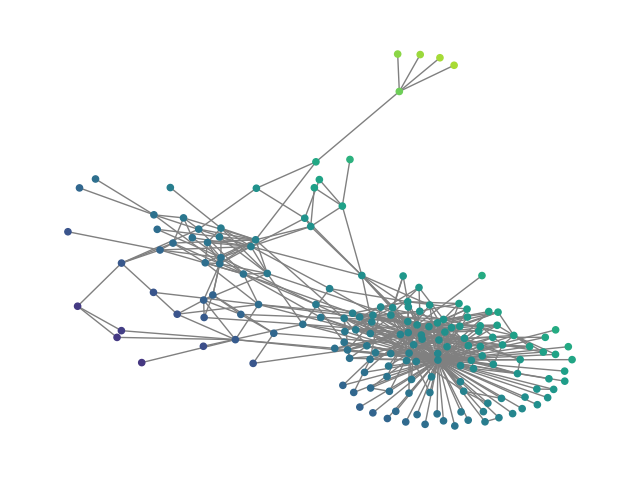} &
        \includegraphics[width=0.15\textwidth]{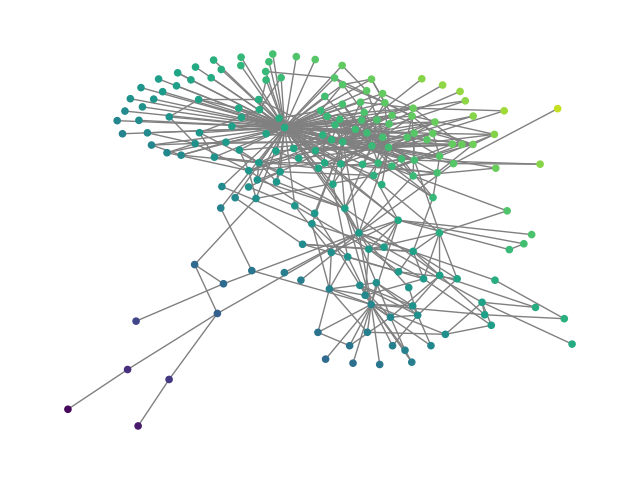} &
        \includegraphics[width=0.15\textwidth]{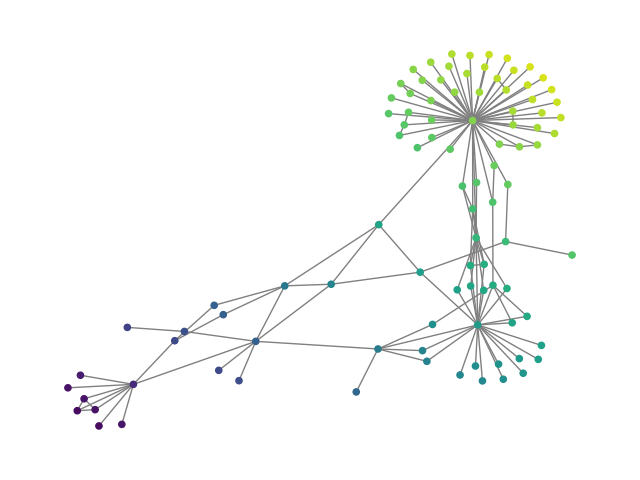}&
        \includegraphics[width=0.15\textwidth]{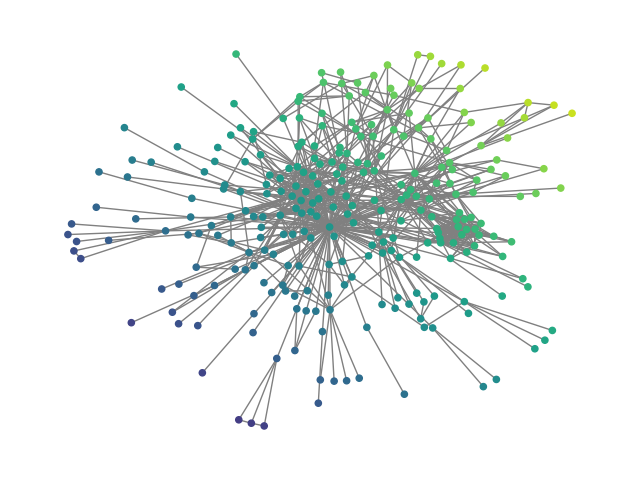} 
        \\
        \includegraphics[width=0.15\textwidth]{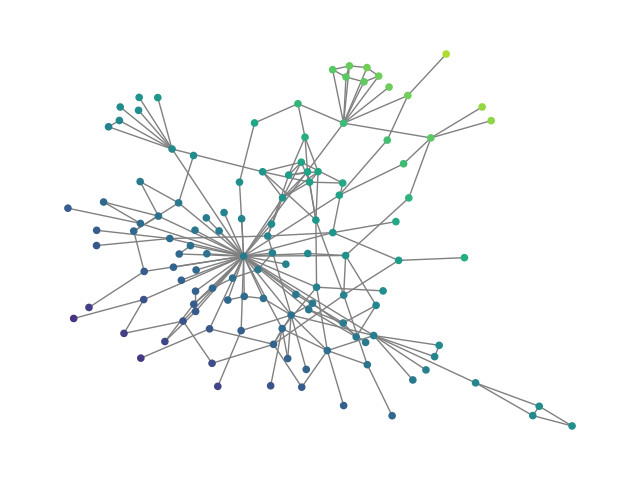} &
        \includegraphics[width=0.15\textwidth]{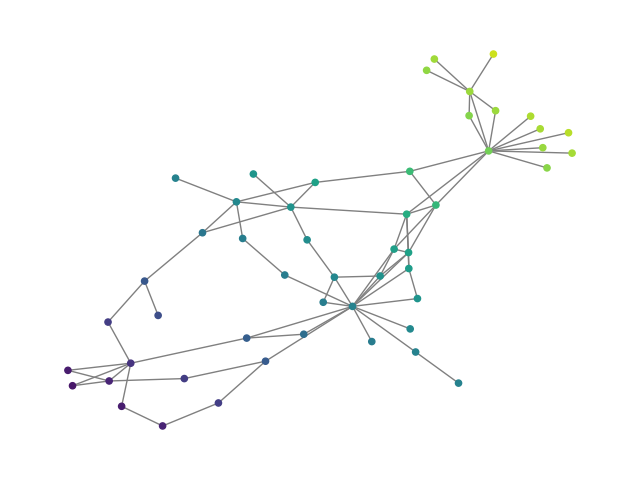} &
        \includegraphics[width=0.15\textwidth]{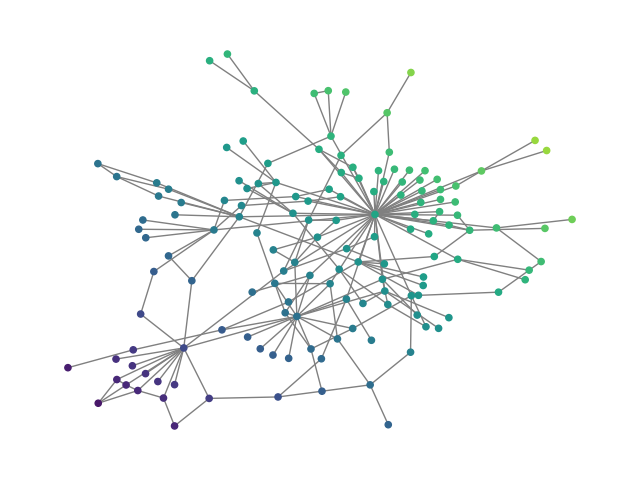} &
        \includegraphics[width=0.15\textwidth]{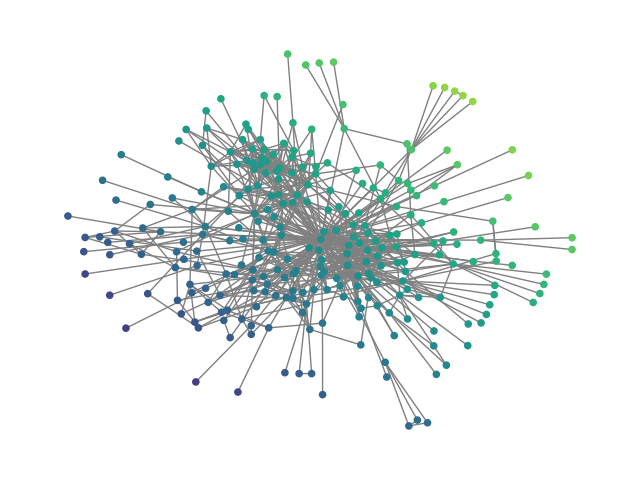} &
        \includegraphics[width=0.15\textwidth]{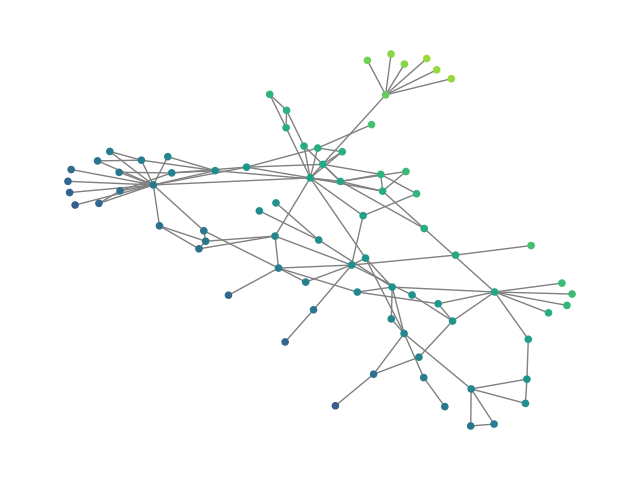}&
        \includegraphics[width=0.15\textwidth]{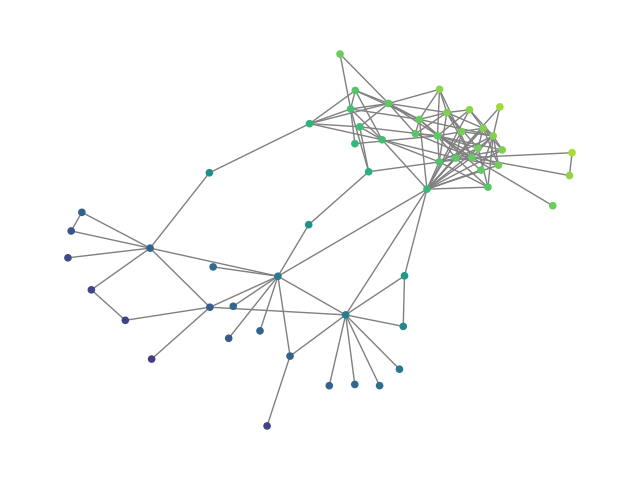} 
        \\
        \includegraphics[width=0.15\textwidth]{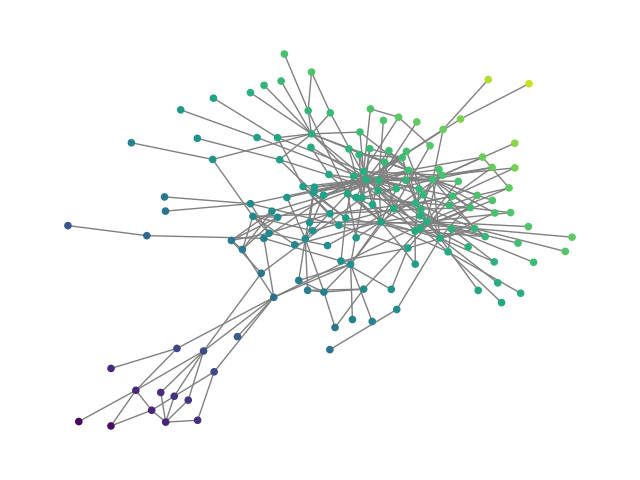} &
        \includegraphics[width=0.15\textwidth]{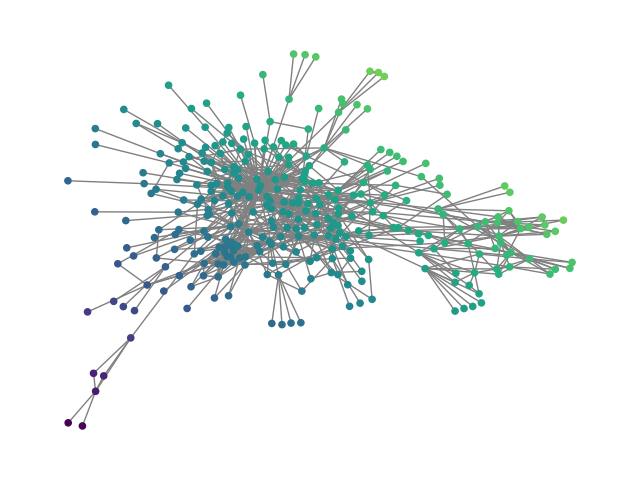} &
        \includegraphics[width=0.15\textwidth]{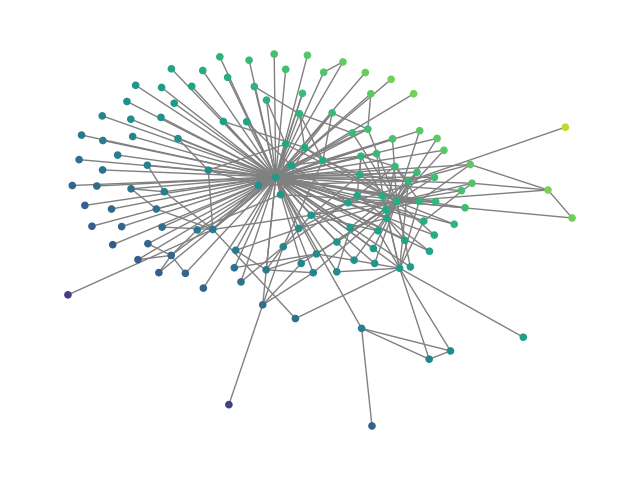} &
        \includegraphics[width=0.15\textwidth]{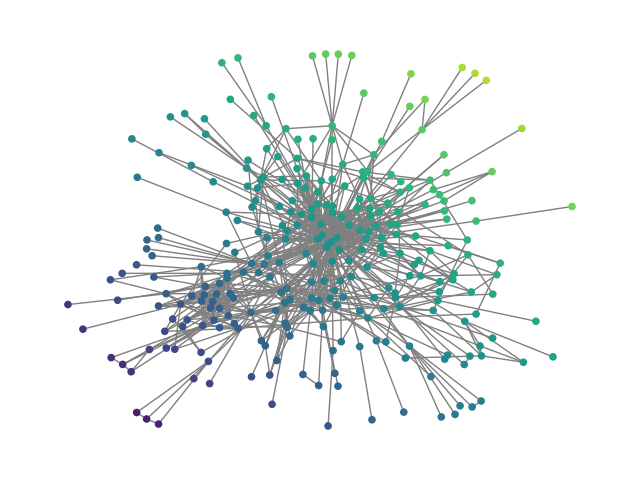} &
        \includegraphics[width=0.15\textwidth]{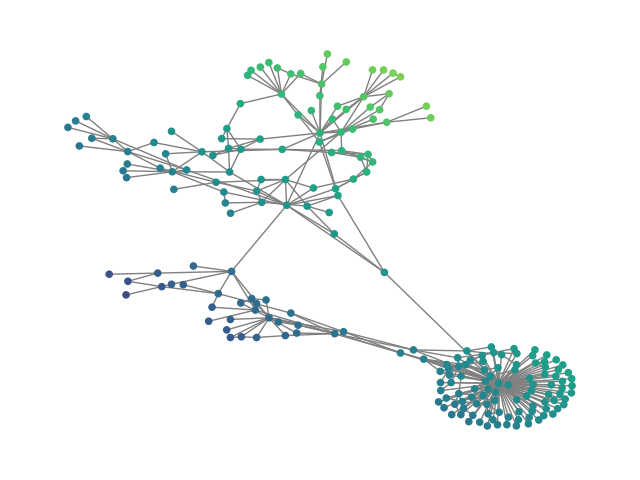}&
        \includegraphics[width=0.15\textwidth]{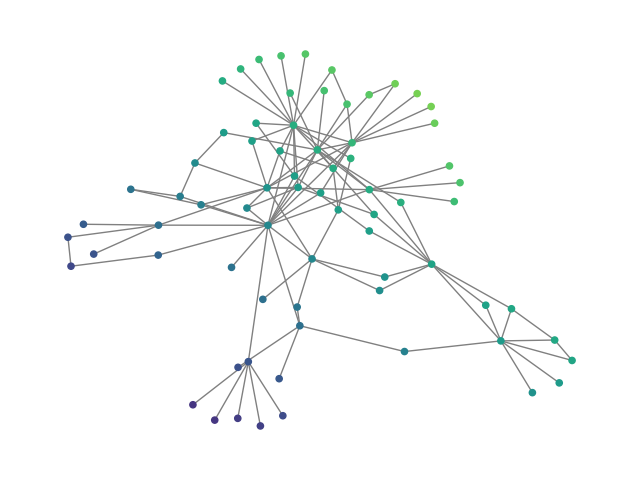} 
        \\
        \includegraphics[width=0.15\textwidth]{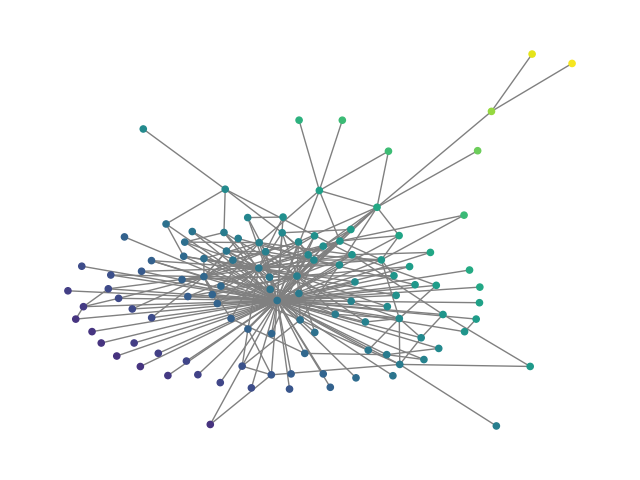} &
        \includegraphics[width=0.15\textwidth]{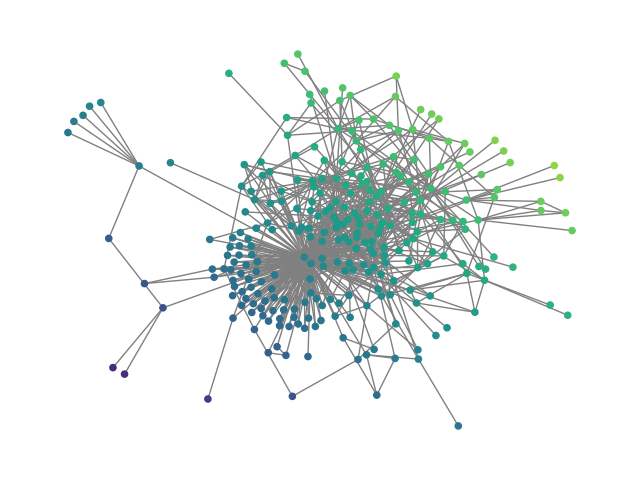} &
        \includegraphics[width=0.15\textwidth]{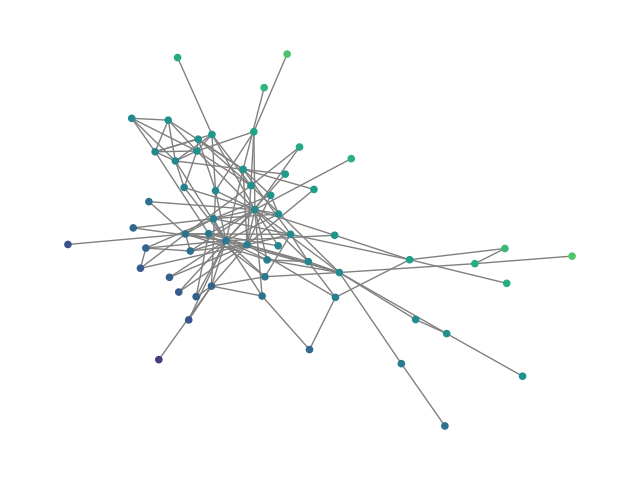} &
        \includegraphics[width=0.15\textwidth]{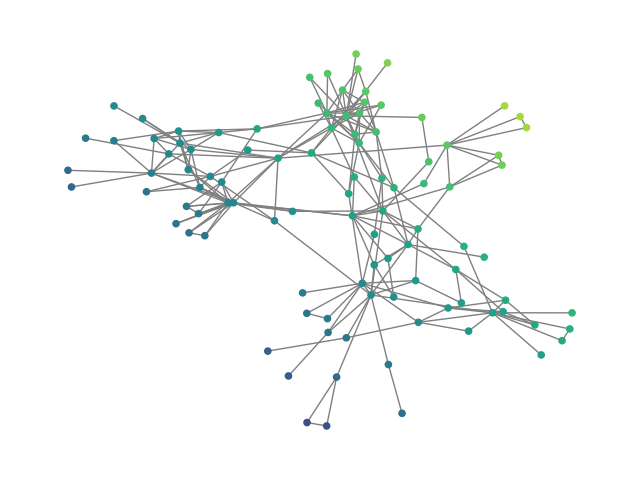} &
        \includegraphics[width=0.15\textwidth]{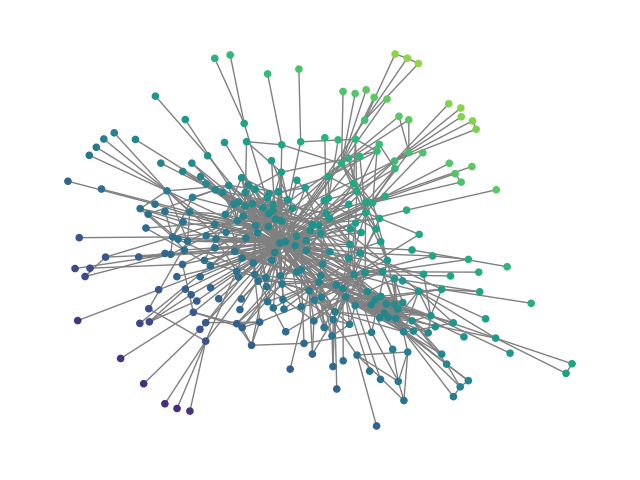}&
        \includegraphics[width=0.15\textwidth]{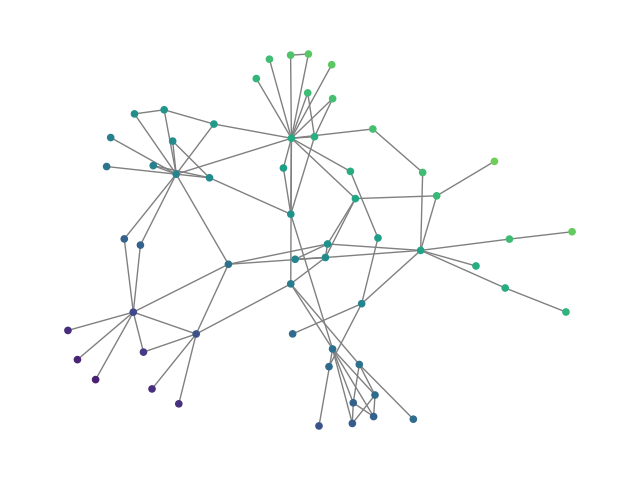} 
        \\
        \includegraphics[width=0.15\textwidth]{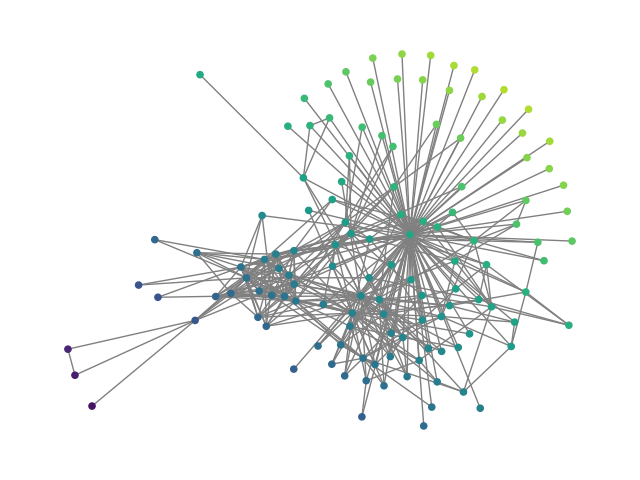} &
        \includegraphics[width=0.15\textwidth]{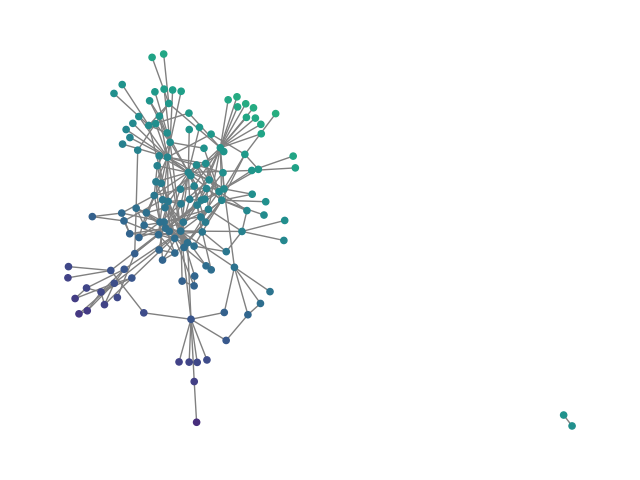} &
        \includegraphics[width=0.15\textwidth]{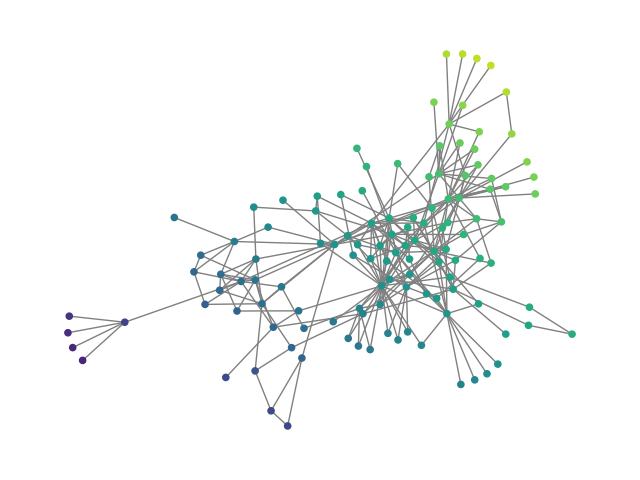} &
        \includegraphics[width=0.15\textwidth]{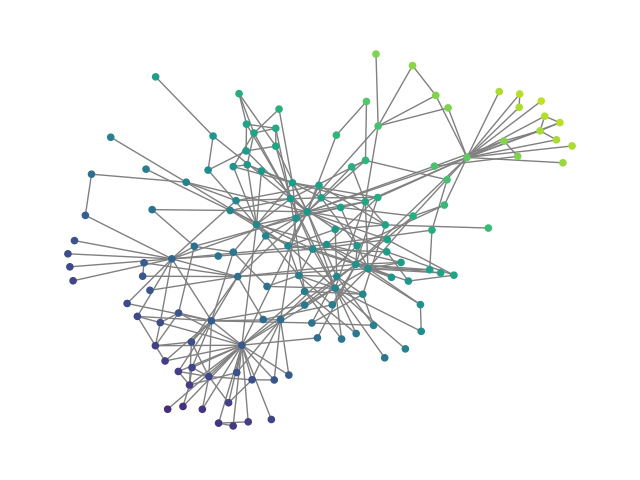} &
        \includegraphics[width=0.15\textwidth]{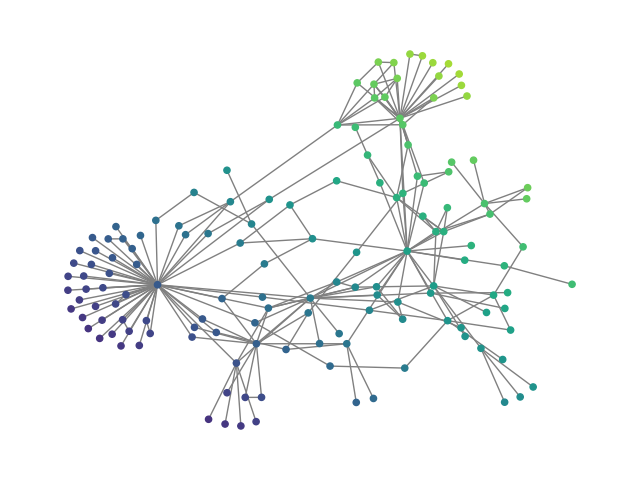}&
        \includegraphics[width=0.15\textwidth]{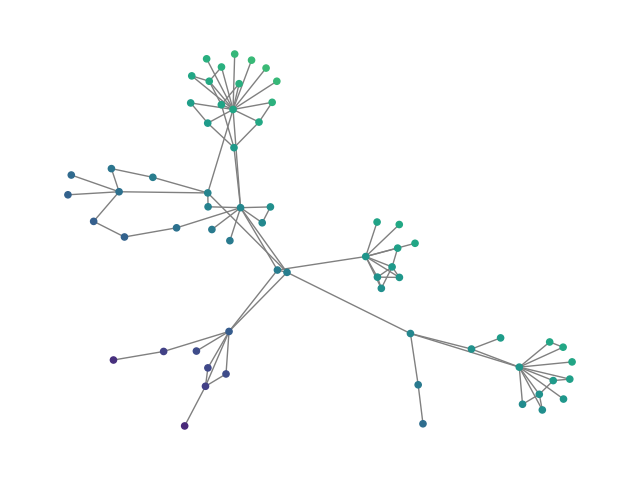} 
        \\
        \includegraphics[width=0.15\textwidth]{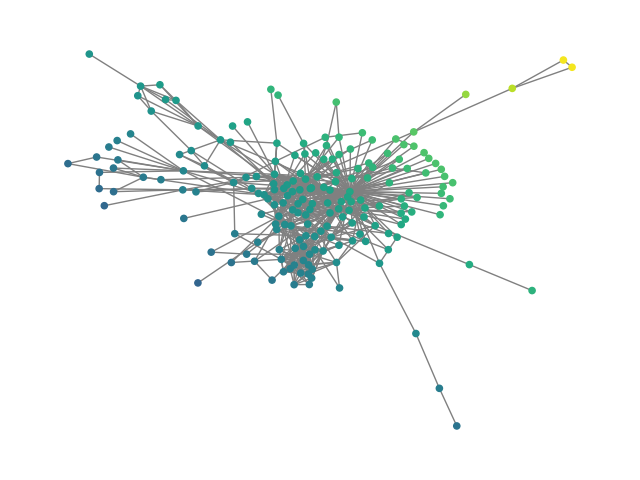} &
        \includegraphics[width=0.15\textwidth]{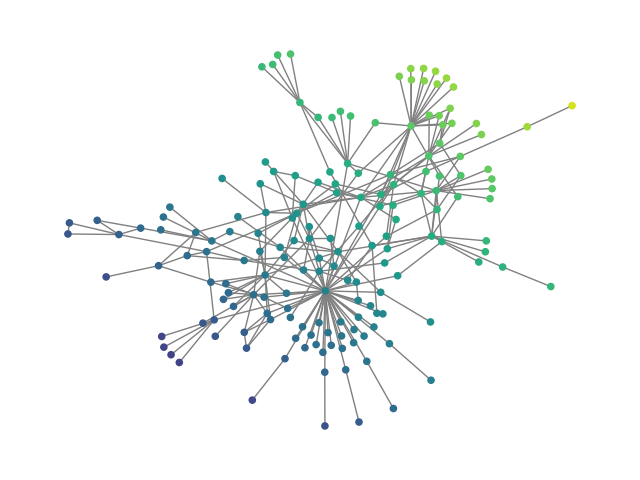} &
        \includegraphics[width=0.15\textwidth]{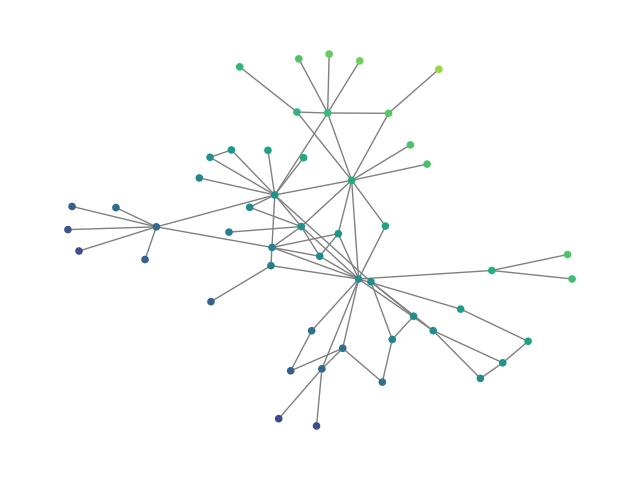} &
        \includegraphics[width=0.15\textwidth]{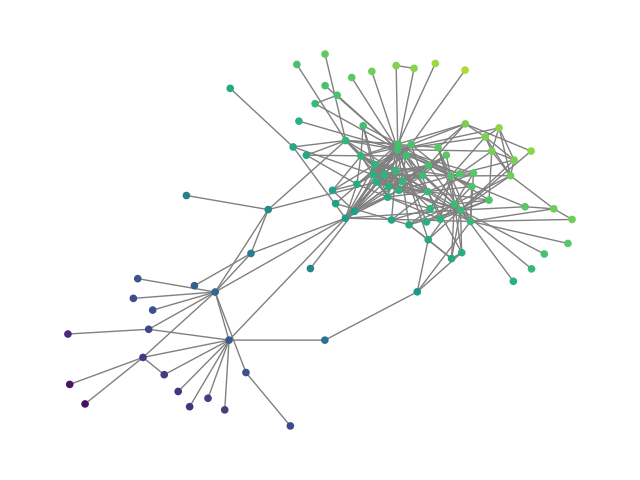} &
        \includegraphics[width=0.15\textwidth]{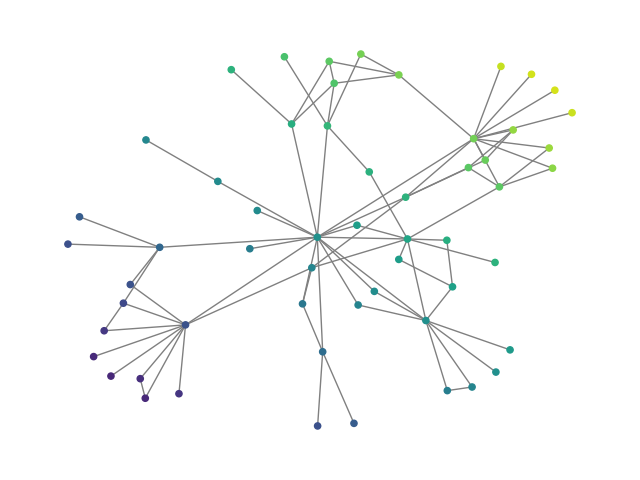}&
        \includegraphics[width=0.15\textwidth]{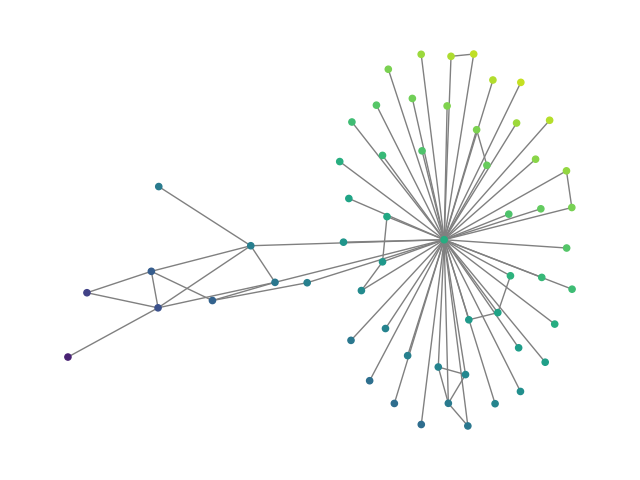} 
        \\
        \hline
        \end{tabular}
    \caption{ego: Comparison of generated graphs with graphs from the dataset.}
    \label{fig:ego}
\end{figure}

\subsection{Reddit}

\begin{figure}[H]
    \centering
    \begin{tabular}{|ccc|ccc|}
    \hline
      \multicolumn{3}{|c|}{Generated graphs}   &
      \multicolumn{3}{|c|}{Real graphs - reddit}
     \\
    \hline
           
        \includegraphics[width=0.15\textwidth]{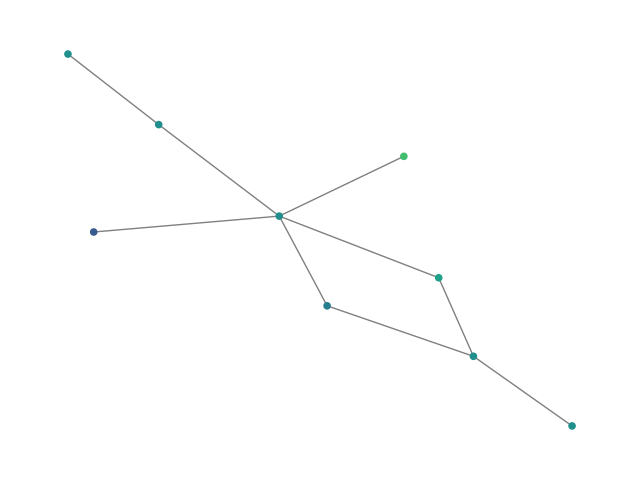} &
        \includegraphics[width=0.15\textwidth]{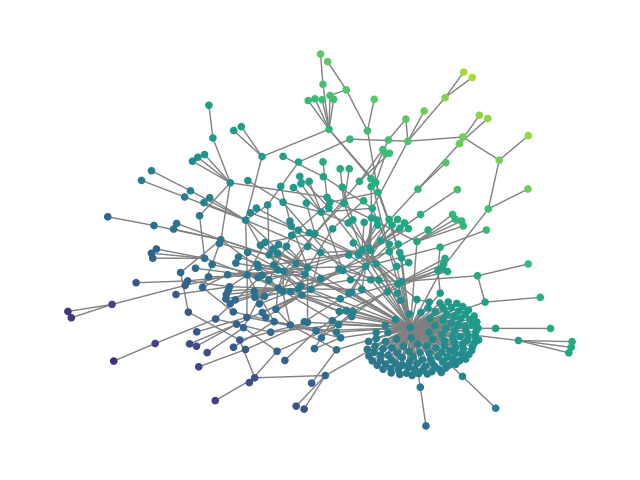} &
        \includegraphics[width=0.15\textwidth]{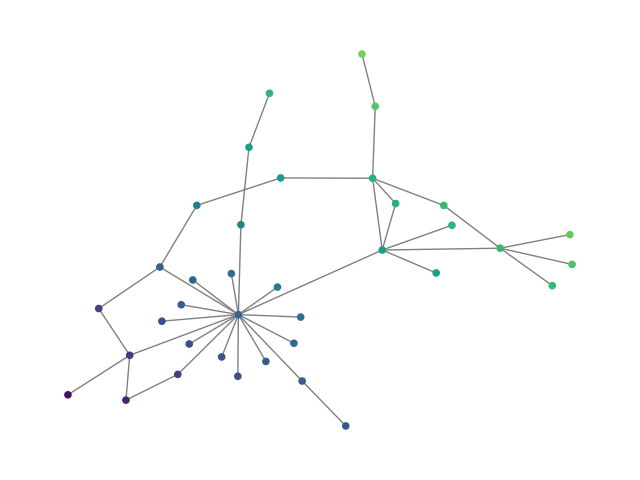} &
        \includegraphics[width=0.15\textwidth]{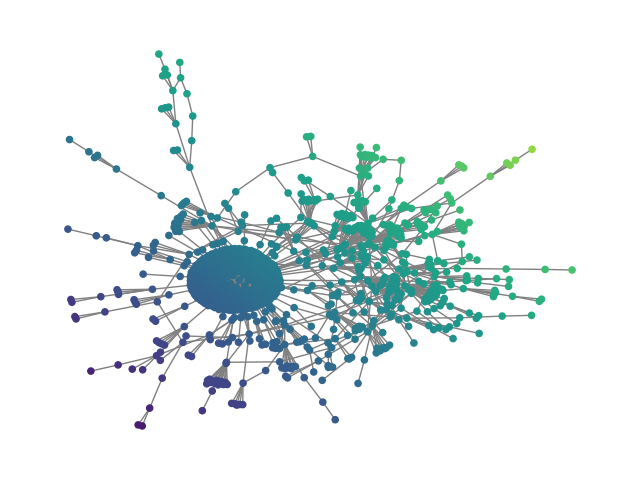} &
        \includegraphics[width=0.15\textwidth]{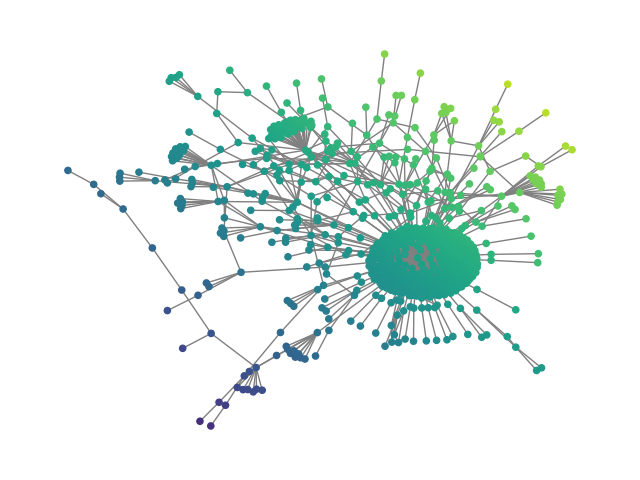}&
        \includegraphics[width=0.15\textwidth]{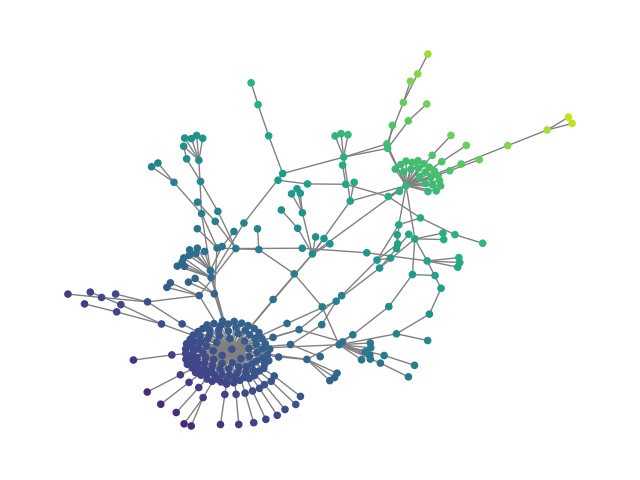} 
        \\
        \includegraphics[width=0.15\textwidth]{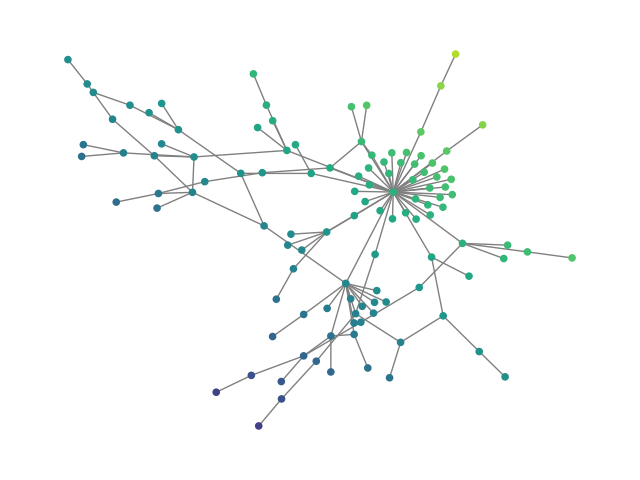} &
        \includegraphics[width=0.15\textwidth]{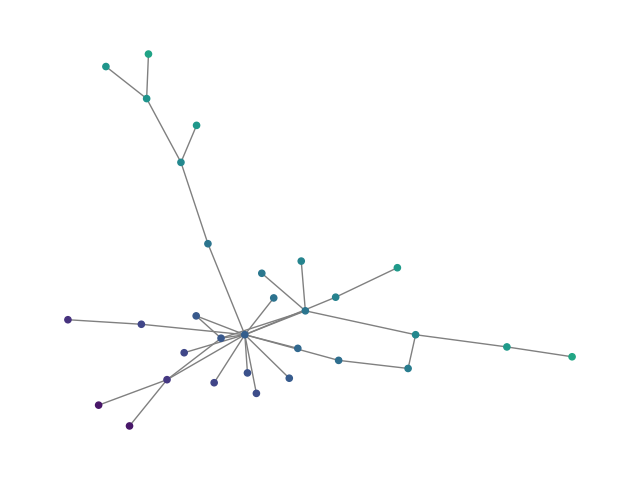} &
        \includegraphics[width=0.15\textwidth]{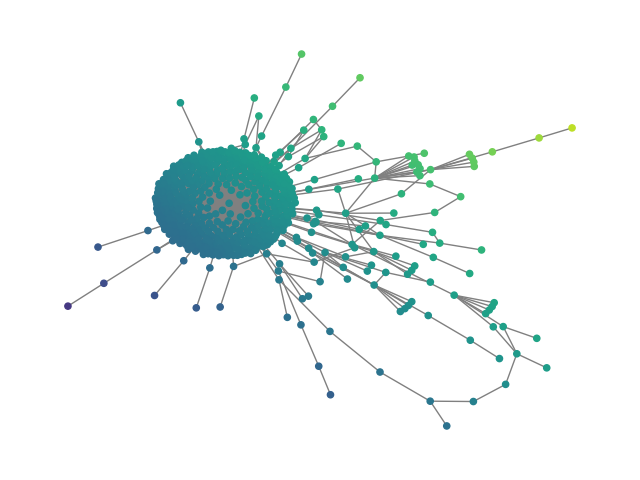} &
        \includegraphics[width=0.15\textwidth]{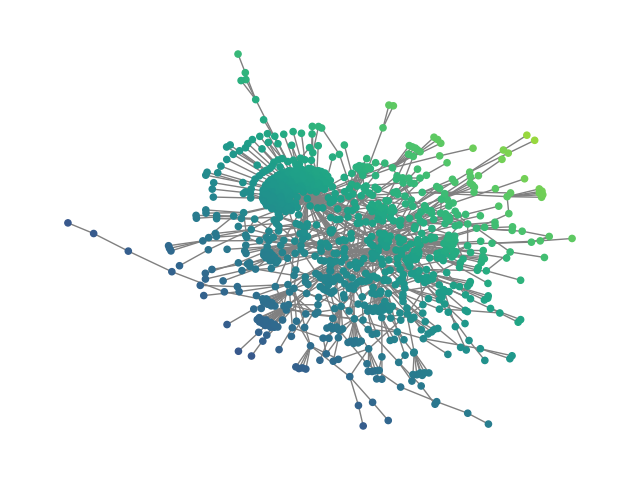} &
        \includegraphics[width=0.15\textwidth]{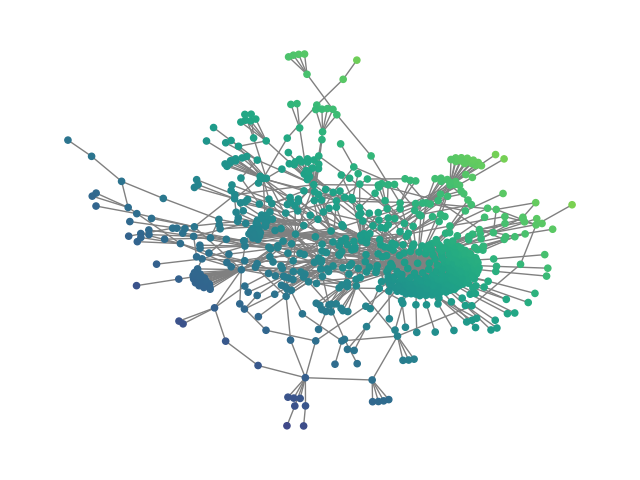}&
        \includegraphics[width=0.15\textwidth]{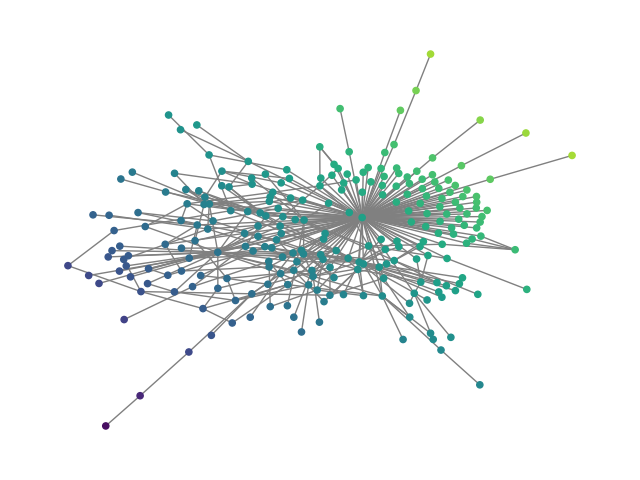} 
        \\
        \includegraphics[width=0.15\textwidth]{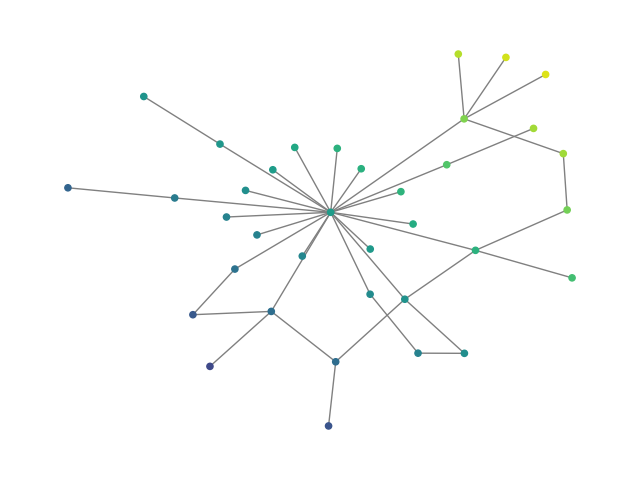} &
        \includegraphics[width=0.15\textwidth]{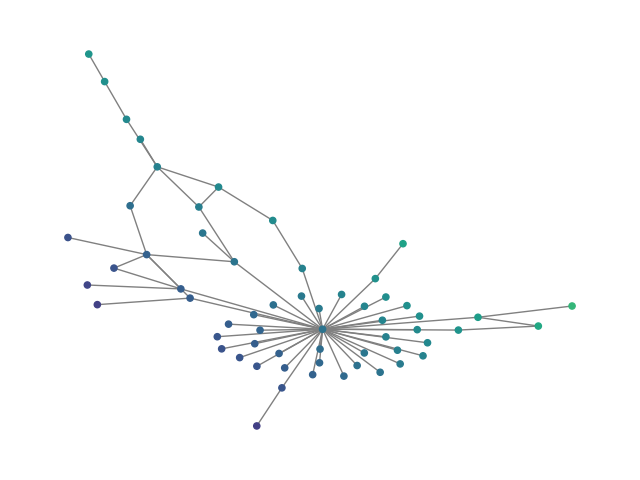} &
        \includegraphics[width=0.15\textwidth]{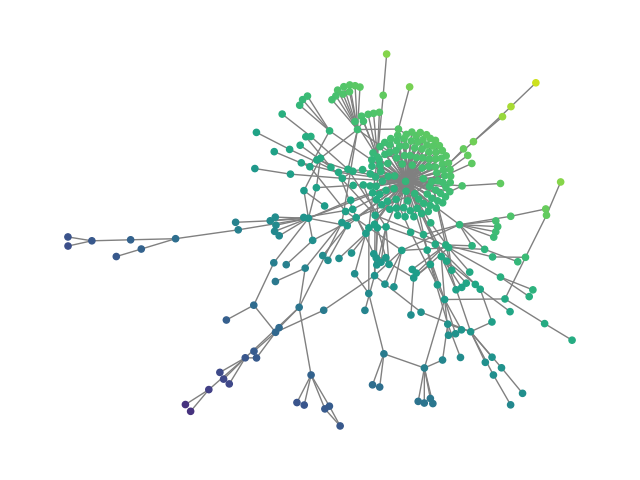} &
        \includegraphics[width=0.15\textwidth]{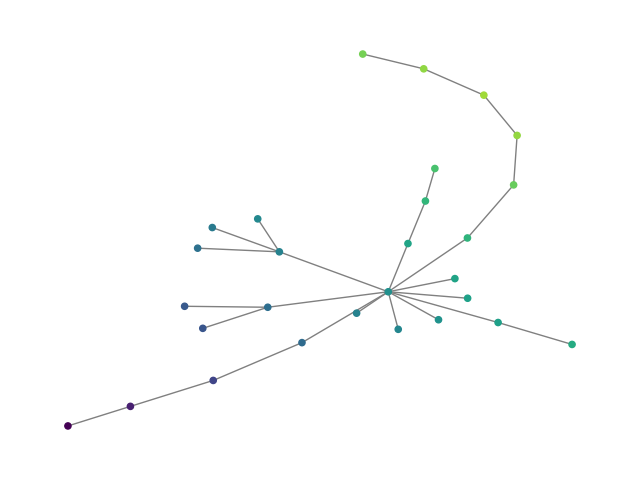} &
        \includegraphics[width=0.15\textwidth]{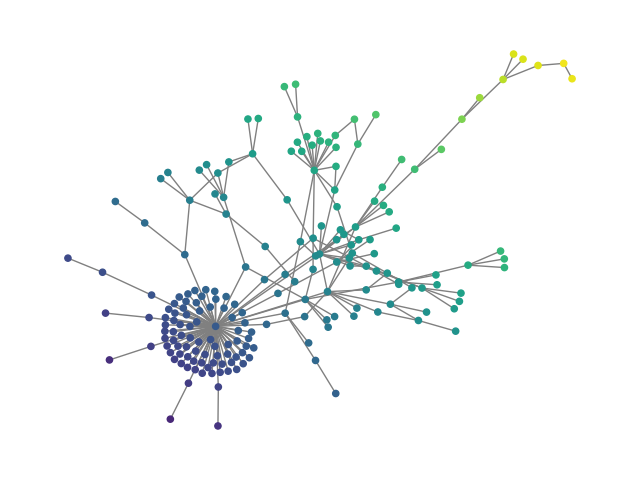}&
        \includegraphics[width=0.15\textwidth]{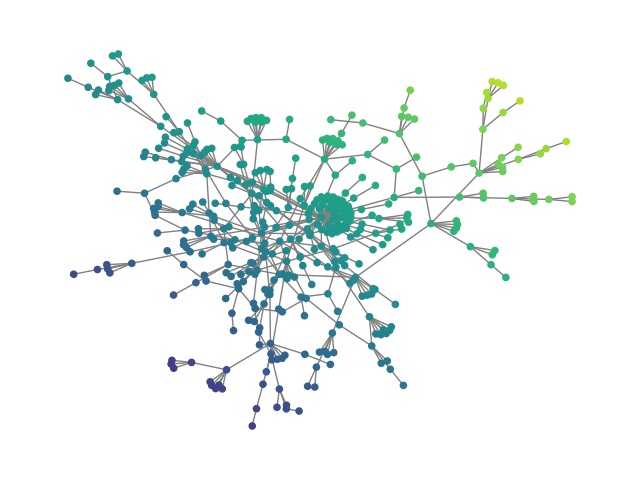} 
        \\
        \includegraphics[width=0.15\textwidth]{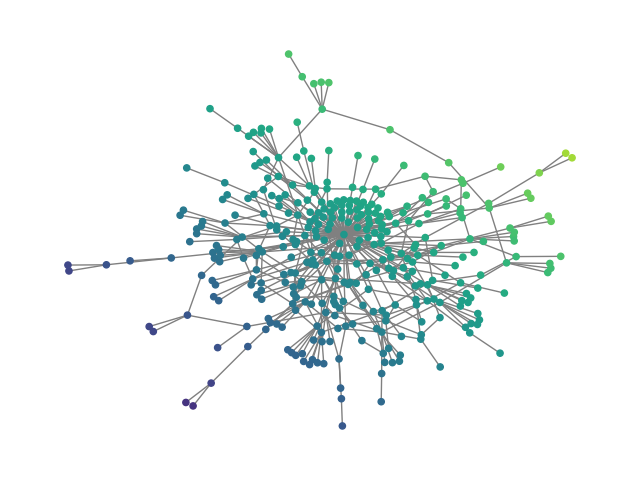} &
        \includegraphics[width=0.15\textwidth]{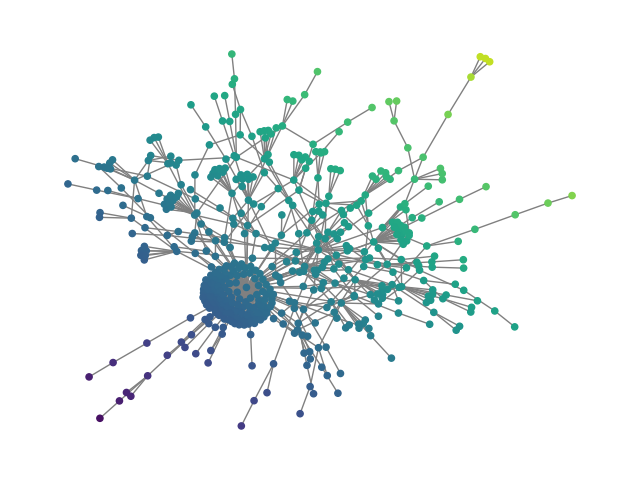} &
        \includegraphics[width=0.15\textwidth]{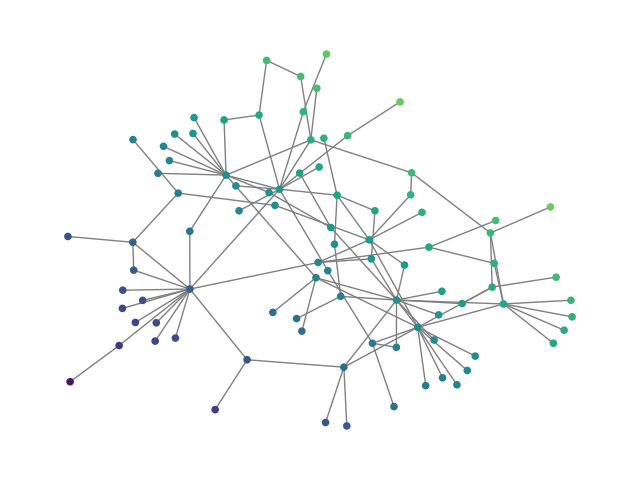} &
        \includegraphics[width=0.15\textwidth]{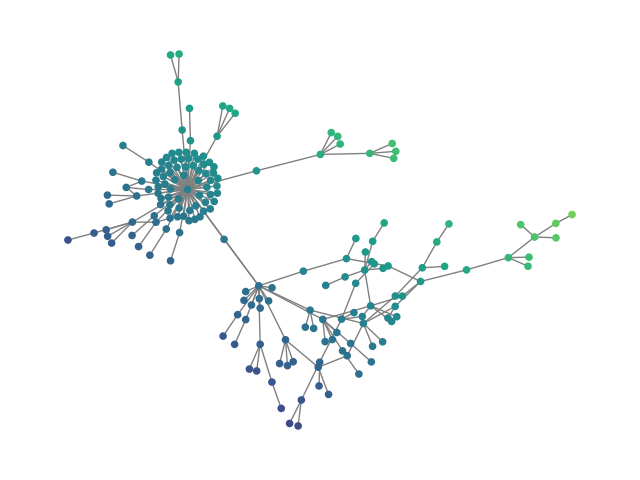} &
        \includegraphics[width=0.15\textwidth]{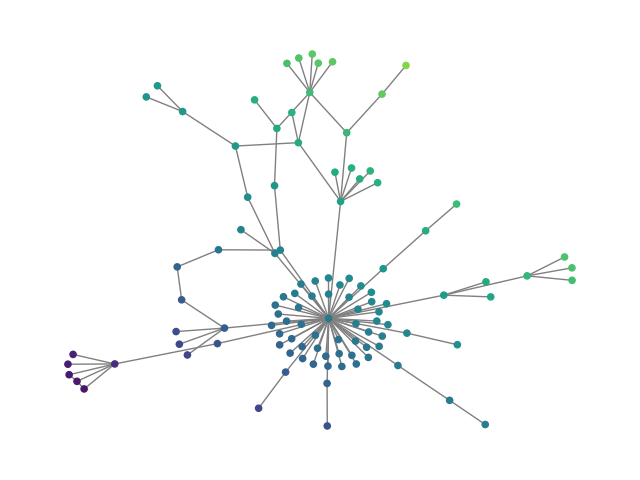}&
        \includegraphics[width=0.15\textwidth]{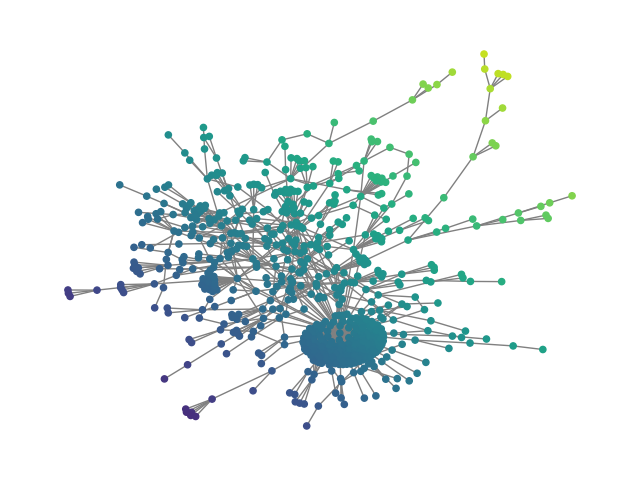} 
        \\
        \includegraphics[width=0.15\textwidth]{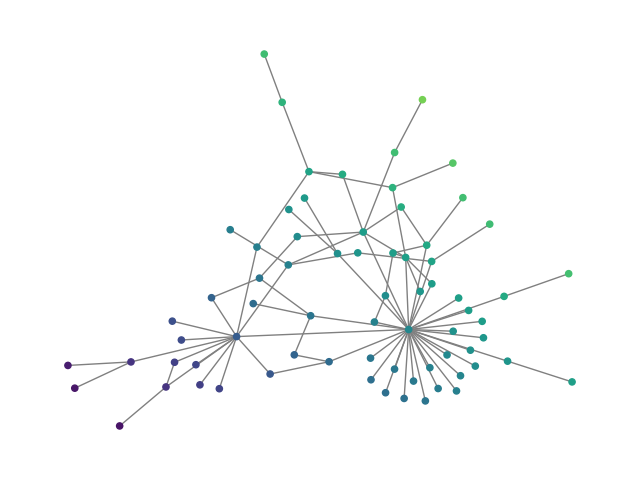} &
        \includegraphics[width=0.15\textwidth]{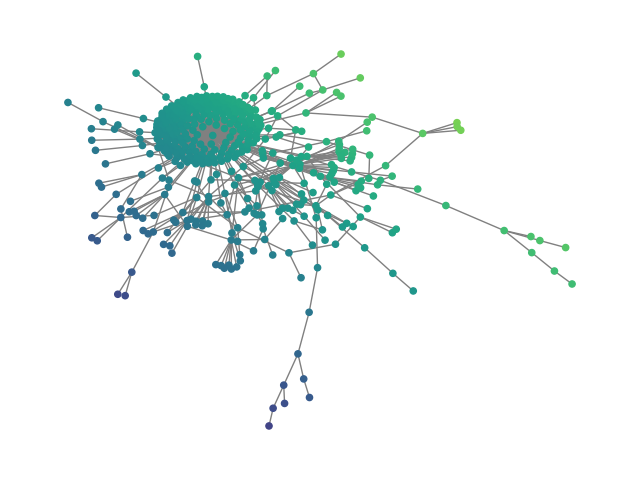} &
        \includegraphics[width=0.15\textwidth]{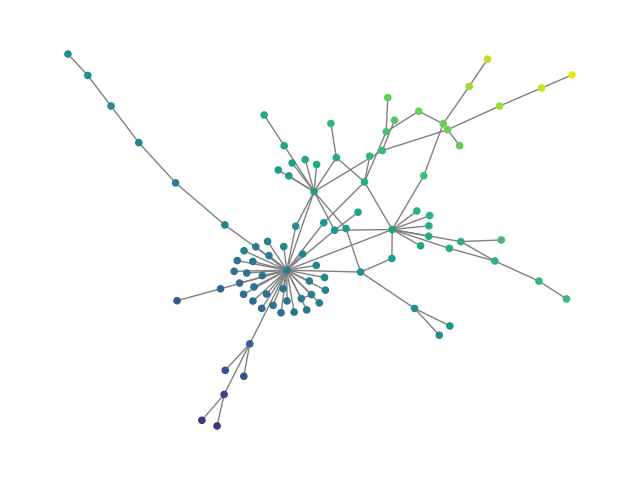} &
        \includegraphics[width=0.15\textwidth]{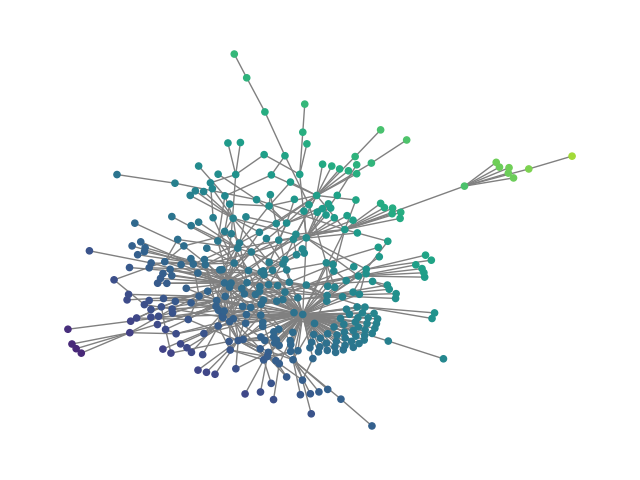} &
        \includegraphics[width=0.15\textwidth]{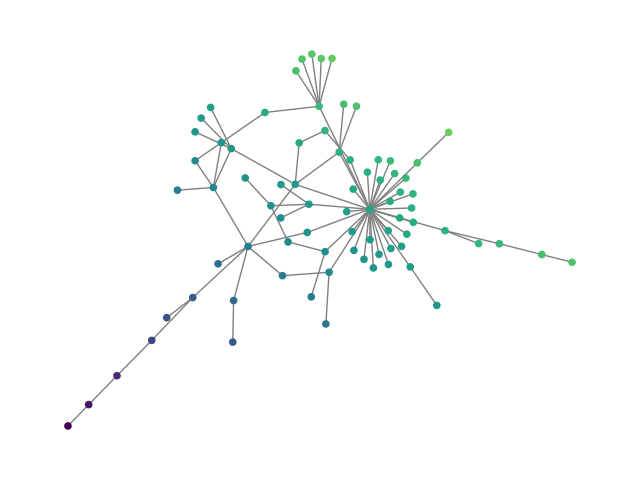}&
        \includegraphics[width=0.15\textwidth]{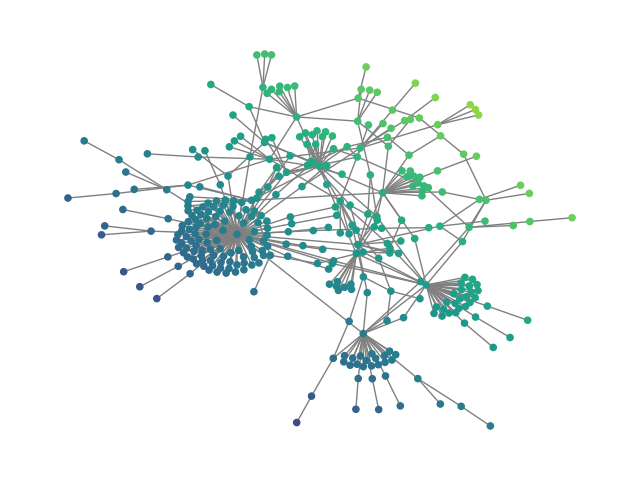} 
        \\
        \includegraphics[width=0.15\textwidth]{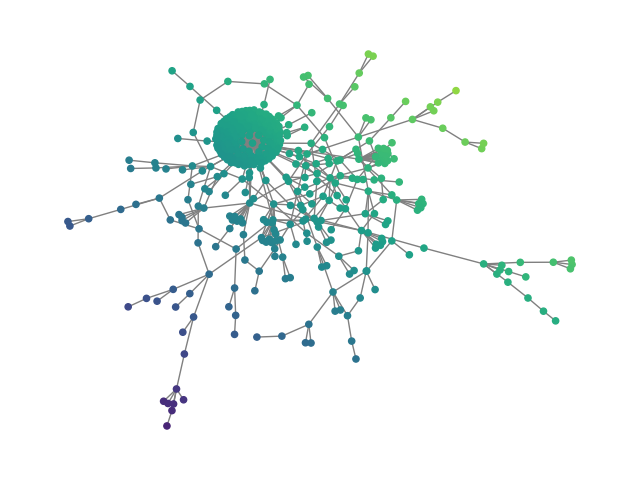} &
        \includegraphics[width=0.15\textwidth]{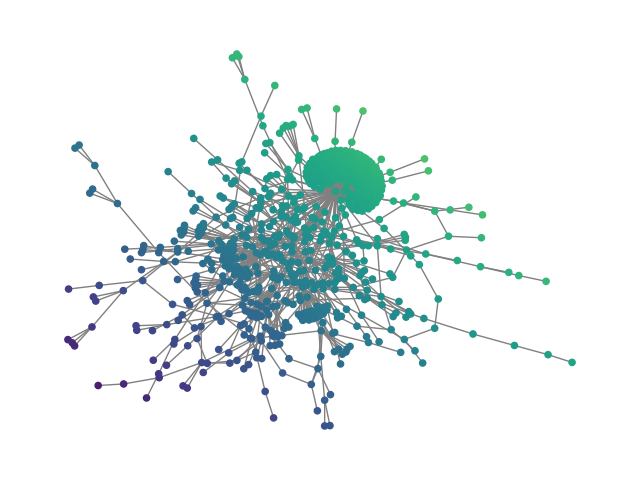} &
        \includegraphics[width=0.15\textwidth]{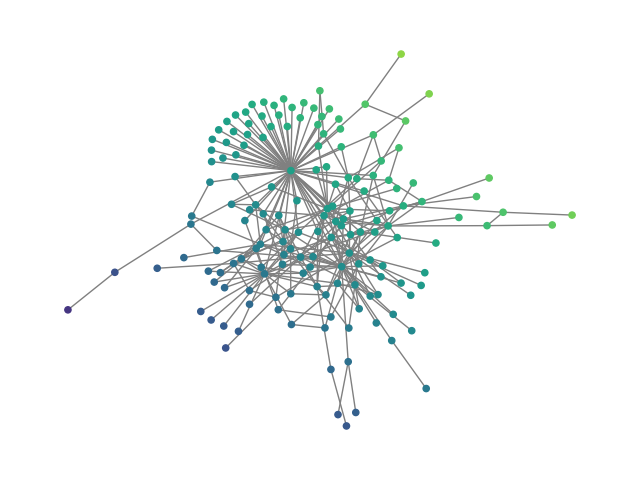} &
        \includegraphics[width=0.15\textwidth]{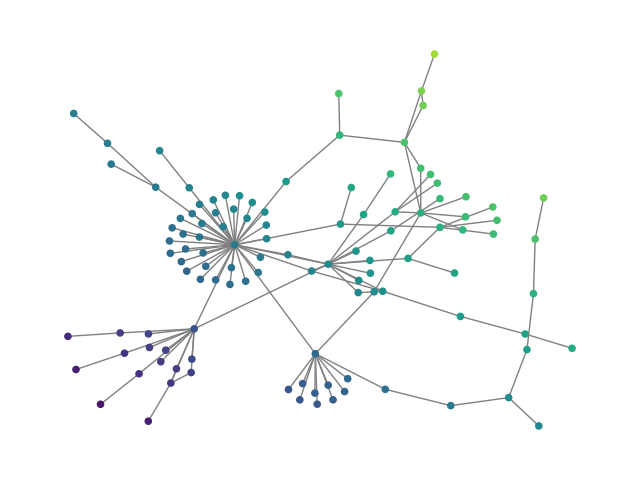} &
        \includegraphics[width=0.15\textwidth]{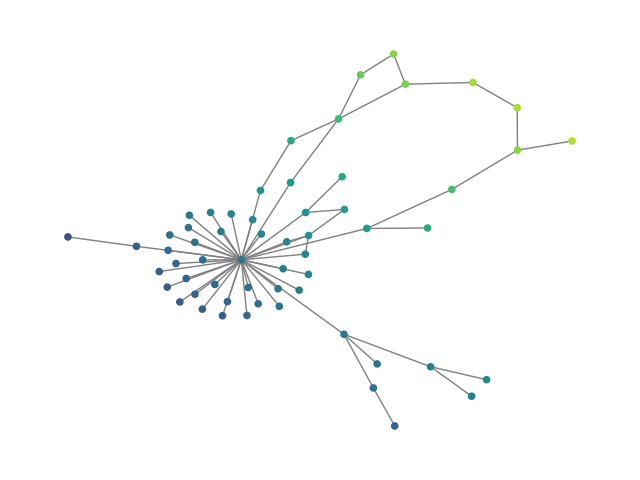}&
        \includegraphics[width=0.15\textwidth]{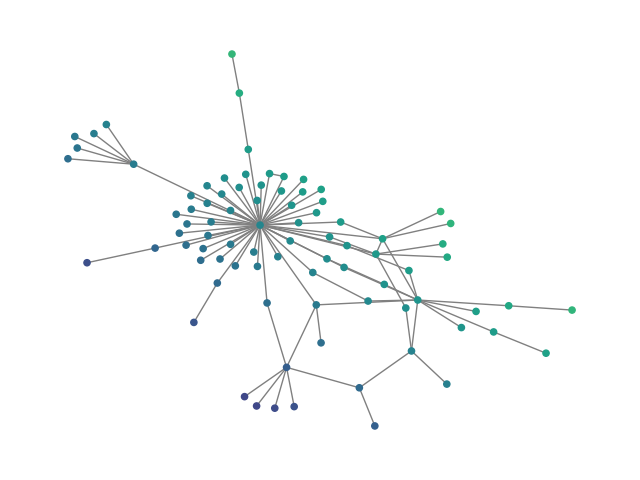} 
        \\
        \hline
        \end{tabular}
    \caption{reddit: Comparison of generated graphs with graphs from the dataset.}
    \label{fig:reddit}
\end{figure}

\end{document}